\documentclass{article}

\usepackage{microtype}
\usepackage{graphicx}
\usepackage{subfigure}
\usepackage{booktabs} 
\usepackage{soul}
\usepackage{xcolor}
\usepackage{multirow}
\usepackage{caption}
\usepackage{float}
\usepackage[ruled, lined,linesnumbered, longend, vlined]{algorithm2e}
\usepackage[table,xcdraw]{xcolor} 
\usepackage{enumitem}
\usepackage{pifont}

\usepackage{outlines}

\usepackage[most]{tcolorbox}
\usepackage{thmtools}

\usepackage{mdframed}

\usepackage{hyperref}

\usepackage[ruled]{algorithm2e} 
\usepackage{xcolor} 
\usepackage{tikz}
\usetikzlibrary{fit,calc}

\newcommand*{\tikzmk}[1]{\tikz[remember picture,overlay,] \node (#1) {};\ignorespaces}
\newcommand{\boxit}[1]{\tikz[remember picture,overlay]{\node[yshift=3pt,fill=#1,opacity=.25,fit={(A)($(B)+(.95\linewidth,.5\baselineskip)$)}] {};}\ignorespaces}
\newcommand{\boxitsing}[1]{\tikz[remember picture,overlay]{\node[yshift=4pt,fill=#1,opacity=.25,fit={(A)($(B)+(.431\linewidth,.2\baselineskip)$)}] {};}\ignorespaces}
\newcommand{\boxitsel}[1]{\tikz[remember picture,overlay]{\node[yshift=-7.5pt,fill=#1,opacity=.25,fit={(A)($(B)+(.594\linewidth,\baselineskip)$)}] {};}\ignorespaces}
\newcommand{\boxitroll}[1]{\tikz[remember picture,overlay]{\node[yshift=2pt,fill=#1,opacity=.25,fit={(A)($(B)+(.343\linewidth,.001\baselineskip)$)}] {};}\ignorespaces}
\newcommand{\boxitt}[1]{\tikz[remember picture,overlay]{\node[yshift=-.2pt,fill=#1,opacity=.25,fit={(A)($(B)+(0.983\linewidth,\baselineskip)$)}] {};}\ignorespaces}

\colorlet{mypink}{red!40}
\colorlet{myblue}{cyan!60}
\colorlet{myorange}{orange!40}
\colorlet{mygreen}{green!40}

\newcommand{\cmark}{\ding{51}}%
\newcommand{\xmark}{\ding{55}}%
\newcommand\bigk{\stackrel{\mathclap{\scriptscriptstyle \mathsf{s_k \gg 1}}}{\approx}}

\usepackage[accepted]{icml2025}

\usepackage{amsmath}
\usepackage{amssymb}
\usepackage{mathtools}
\usepackage{amsthm}

\usepackage{dsfont}

\usepackage[capitalize,noabbrev]{cleveref}

\theoremstyle{plain}
\newtheorem{theorem}{Theorem}[section]

\newtheorem{lemma}[theorem]{Lemma}

\theoremstyle{definition}
\newtheorem{definition}[theorem]{Definition}
\newtheorem{assumption}[theorem]{Assumption}
\theoremstyle{remark}
\newtheorem{remark}[theorem]{Remark}

\icmltitlerunning{CombiMOTS: Combinatorial Multi-Objective Tree Search for Dual-Target Molecule Generation}

\begin{document}

\twocolumn[
\icmltitle{CombiMOTS: Combinatorial Multi-Objective Tree Search \\
            for Dual-Target Molecule Generation}

\icmlsetsymbol{equal}{*}

\begin{icmlauthorlist}
\icmlauthor{Thibaud Southiratn}{snucse}
\icmlauthor{Bonil Koo}{snubio,aigendrug}
\icmlauthor{Yijingxiu Lu}{snucse}
\icmlauthor{Sun Kim}{snucse,snubio,aigendrug,snuai}
\end{icmlauthorlist}

\icmlaffiliation{snucse}{Department of Computer Science and Engineering, Seoul National University, Seoul, Republic of Korea}
\icmlaffiliation{snubio}{Interdisciplinary Program in Bioinformatics, Seoul National University, Seoul, Republic of Korea}
\icmlaffiliation{aigendrug}{AIGENDRUG Co., Ltd., Seoul, Republic of Korea}
\icmlaffiliation{snuai}{Interdisciplinary Program in Artificial Intelligence, Seoul National University, Seoul, Republic of Korea}

\icmlcorrespondingauthor{Sun Kim}{sunkim.bioinfo@snu.ac.kr}

\icmlkeywords{Dual-target Molecule Generation,Fragment-based Drug Discovery,Monte-Carlo Tree Search,Pareto Optimization,Search Space Reduction,Machine Learning,ICML}

\vskip 0.3in 
]

\printAffiliationsAndNotice{}  

\begin{abstract}

Dual-target molecule generation, which focuses on discovering compounds capable of interacting with two target proteins, has garnered significant attention due to its potential for improving therapeutic efficiency, safety and resistance mitigation.
Existing approaches face two critical challenges.
First, by simplifying the complex dual-target optimization problem to scalarized combinations of individual objectives, they fail to capture important trade-offs between target engagement and molecular properties. 
Second, they typically do not integrate synthetic planning into the generative process.
This highlights a need for more appropriate objective function design and synthesis-aware methodologies tailored to the dual-target molecule generation task.
In this work, we propose CombiMOTS, a Pareto Monte Carlo Tree Search (PMCTS) framework that generates dual-target molecules.
CombiMOTS is designed to explore a synthesizable fragment space while employing vectorized optimization constraints to encapsulate target affinity and physicochemical properties.
Extensive experiments on real-world databases demonstrate that CombiMOTS produces novel dual-target molecules with high docking scores, enhanced diversity, and balanced pharmacological characteristics, showcasing its potential as a powerful tool for dual-target drug discovery.
The code and data is accessible through \url{https://github.com/Tibogoss/CombiMOTS}.
\end{abstract}

\section{Introduction}
\label{introduction}

Complex diseases and disorders such as cancers are generally caused by intricate interacting pathways.
To treat them, traditional therapies focused on the single-target paradigm by developing drugs selective towards a unique mechanism, therefore ``blind to other processes" yet equally involved in biological systems \cite{medina2013shifting}.
Single-target drugs are prone to resistance and can risk the activation of unwanted compensatory signaling pathways \cite{yang2024rethinking}.
To overcome this limitation, a rising field of study aims to develop treatments which exhibit better efficacy by interacting with multiple related targets. 
An intuitive approach known as combination therapy \cite{kummar2010utilizing,foucquier2015analysis} consists in administering various synergetic single-target medications to address distinct but related pathogenic mechanisms.
However, these approaches often risk adverse effects or poor treatment adherence \cite{ye2023therapeutic}.
As promising alternatives, dual-target drugs are increasingly drawing attention, demonstrated by a trend in FDA-approved drugs \cite{li2016human,lin2017network,layman2024gedatolisib}.
They involve one agent simultaneously interacting with two targets with better pharmacokinetics and safety profiles, thus avoiding undesirable drug-drug interactions \cite{hopkins2006can}.
This inevitably imposes new challenges: (1) The Structure-Activity Relationship (SAR) profile of considered compounds must consider different biological targets \cite{morphy2012designing,raghavendra2018dual}.
(2) The identified compounds have to be synthesizable through known experimental protocols.

In the past few years, deep generative models \cite{zeng2022deep} and geometric deep learning \cite{powers2023geometric} tackled SAR for single-target drug design, but the data scarcity and expensive computational cost limit their application on the dual-target setting.
Recently, machine learning models allowed to bridge the gap between predictive and intuitive dual-target drug design \cite{feldmann2021explainable}.
Notably, great efforts were made in Fragment-Based Drug Discovery (FBDD) by leveraging property predictors and oracles to extract, combine and autoregressively complete molecular substructures, supposedly responsible for the bioactivity of known compounds on both given targets \cite{jin2020multi,lee2023drug,chen2024structure}.
Nevertheless, the key limitation of such methods is the translation from a multi-objective problem to a single-objective formulation. 
Despite integrating various constraints, their reward function aggregates weighted objectives into a scalar value without considering potential conflicts, leading to overall high-scoring generations with imbalanced properties \cite{FROMER2023100678, LUUKKONEN2023102537}.
More specifically, by assuming a convex search space, distinguishing between competing characteristics becomes intractable and resulting molecules can exhibit idealistic properties while being nearly impossible to synthesize \cite{gao2020synthesizability, stanley2023fake}.
Furthermore, solely relying on desirability metrics during the generative process cannot guarantee true synthesizability \cite{vinkers2003synopsis, ertl2009estimation}.
Consequently, new frameworks also need to condition their processes on synthetic routes \cite{segler2017neural,segler2018planning}.
\citet{swanson2024generative} applied Monte-Carlo Tree Search (MCTS) to assemble make-on-demand molecules with associated synthetic procedures yielding an average success rate over 80\%, but fall again into the single-objective limitation.

Other fields such as robotics and the computational game domain employ Pareto optimization to explore attainable solutions presenting optimal tradeoffs regardless of competing objectives and property imbalance \cite{luc2008pareto}.
This flexibility enables to adjust priorities and selection criteria as new information becomes available, without needing to restart the optimization process with different weightings.
Few recent works apply Pareto optimality to MCTS in the fields of multi-constrained retrosynthesis planning \cite{lai2024multi} and atom-by-atom molecule SMILES generation \cite{yang2024enabling}.
However, its application to dual-target molecule generation remains underexplored as the choice of objectives, state space or action space is challenging.

 In this paper, we draw inspiration from Pareto MCTS (PMCTS) to propose a novel approach to combinatorial multi-objective drug design.
 We propose a fragment-based molecule generation approach that optimizes multi-constrained properties of dual-targets within a synthesizable space. 
 To achieve this: 
 (1) We perform search space reduction by extracting target-aware fragments from known target inhibitors and mapping them to industry-available building blocks. 
 (2) We extend PMCTS to a combinatorial framework alongside vectorized objectives tailored for dual-target molecule generation, ultimately generating interpretable compounds exhibiting better tradeoffs in all objectives compared to competing methods. 
 We experimentally validate the effectiveness of our method through the case of dual inhibitor generation as a real world scenario.
 We summarize our contributions as follows:

\begin{itemize}[noitemsep, topsep=0pt, leftmargin=*]
    \item We propose Combinatorial Multi-Objective Tree Search (CombiMOTS), extending MCTS to Pareto optimization and fragment-based drug discovery to narrow down a large molecule fragment space into a synthesizable one, specific to any target pair.
    \item Through the task of dual inhibitor generation, we demonstrate and analyze CombiMOTS' ability to find optimal candidates across conflicting objectives in complex multi-objective multi-target settings.
    \item In particular, we evaluate our model on the GSK3$\beta$-JNK3 pair and curate data for two new target pairs, EGFR-MET and PIK3CA-mTOR to illustrate the utility of CombiMOTS in dual-target drug design.
    \item On all experiments, CombiMOTS consistently generates molecules that yield better diversity and trade-off between docking score, drug-likeness, and synthetic accessibility.
\end{itemize}


\section{Related Work}

\paragraph{Multi-Objective Molecule Generation} The success of prior generative models for single-target/objective drug design does not trivially apply to multi-objective parameterization \cite{angelo2023multi}.
These methods commonly optimize embeddings of molecular sequences or graphs, relying on bayesian inference or reinforcement learning.
Multi-objective molecule generation approaches can be categorized into three categories.
The first category uses encoder-decoder architectures \cite{jin2018junction,jin2020multi} or autoregressive frameworks \cite{gong2024text, yang2024enabling} to attempt learning continuous latent representations of target compounds to generate candidates with encouraging predicted properties.
However, they highly depend on the quality of the learnt latent space, which is burdensome in multi-objective tasks where data is scarce.
The second category being reinforcement learning, operates in the explicit chemical space to allow more robust training, but are hard to train due to reward sparsity when combining multiple objectives \cite{guimaraes2017objective,olivecrona2017molecular, popova2018deep}.
More specific to dual-target drug design, \citet{li2018multi, jin2020multi, xie2021mars} utilize a machine learning property predictor to guide the generation of bio-active compounds, but still require high-quality labeled data to properly work.
As the third category, Structure-Based Drug Design (SBDD) methods integrate binding affinity indicators to moderate data scarcity while enhancing SAR with target proteins \cite{chen2024structure,zhou2024reprogramming}.
Compared to these methods, we utilize both property predictors and binding score oracles to cooperatively account for pharmacological characteristics and dual-target affinity.

\paragraph{Fragment-Based Molecule Generation} Fragment-based approaches have emerged as a promising direction in drug discovery.
The intuition is to derive multi-property candidates from initial single-property molecular substructures as building blocks.
Recent research in this area can be broadly categorized into fragment extraction and assembly methods for generation. 
For fragment extraction, early approaches relied on rule-based bond breaking selection \cite{yang2021hit} or motif frequency-based selection \cite{kong2022molecule}.
The drawback of these heuristic methods generally is to not consider target molecular properties. 
This motivated succeeding works to try identifying property-aware substructures.
\citet{jin2020multi,chen2024structure} extract and assemble core fragments from full molecules rather than true fragments, leading to limited novelty and diversity in generation.
For generation methods, MCMC sampling strategy can be applied for its flexibility in adding and deleting fragments \cite{xie2021mars}, while reinforcement learning \cite{tan2022drlinker, du2023flexible}, and VAE-based models for fragment assembly \cite{maziarz2021learning} remain widely used.
Interestingly, \citet{swanson2024generative} proposed a method that does not require to extract fragments as building blocks.
They use the industry-ready Enamine REadily AccessibLe (REAL) Space database \cite{grygorenko2020generating}, with building blocks and reaction templates as inputs to MCTS, with a single property predictor guiding the space navigation modeled as chemical reactions.
This presents the advantage of keeping a high chance to synthesize identified leads in real world laboratory settings.
We propose to extend their work to the dual-target molecule generation task by leveraging property-aware available building blocks, subsequently assembled using a vectorized multi-objective parametrization guided by oracles tailored to the target proteins.

\section{Preliminary - Pareto Optimization}
\label{pareto}

Multi-objective problems involve optimizing several objectives, often competing. 
A common approach is to aggregate them into one scalarized reward, naturally inducing oversimplification of the problem's complexity by depending on importance weighting and its \textit{a priori} knowledge.
Pareto optimization \cite{ngatchou2005pareto} alleviates this issue by maintaining a vectorized problem formulation, thus defining sets of solutions whose components are all considered equal, allowing proper optimization in the attainable solution space. 
Without loss of generality, we assume a maximization problem over $D \in \mathbb{R}$ objectives, where each solution is characterized by a $D$-dimensional property vector.

\begin{definition}[Pareto Dominance]

Let $\mathbf{X}$ and $\mathbf{Y}$ be two distinct objects, each associated with their respective objective vectors $\mathbf{P}_X, \mathbf{P}_Y \in \mathbb{R}^D$, where $P_{X,d}$ and $P_{Y,d}$ denote the values of X and Y on the $d$-th objective, respectively. 
We define Pareto dominance as follows: $X$ dominates $Y$, denoted by $X \succ Y$ (or equivalently $Y \prec X$), if and only if:

i) No coordinate of $\mathbf{P}_X$ is smaller than the corresponding element of $\mathbf{P}_Y$:
   \[
   \forall d \in \{1, 2, \ldots, D\}, P_{X,d} \geq P_{Y,d}\ .
   \tag{1}
   \]
ii) At least one coordinate of $\mathbf{P}_X$ is larger than the corresponding element in $\mathbf{P}_Y$:
   \[
   \exists d \in \{1, 2, \ldots, D\} \text{ s.t. } P_{X,d} > P_{Y,d}\ .
   \tag{2}
   \]
If only the first condition is satisfied, $X$ is said to weakly-dominate $Y$, denoted by $X \succeq Y$ or $Y \preceq X$. 
When neither $X \succeq Y$ nor $Y \succeq X$ holds, $X$ and $Y$ are said incomparable ($X\|Y$).
\\
\begin{remark}
i) and ii) can be summarized as: $X \succ Y$ if $X$ is ``never worse" and ``at least better than $Y$ in one objective".
\end{remark}

\end{definition}

\begin{definition}[Pareto Fronts]

Given an non-empty set of vectors $\mathcal{S} \subset \mathbb{R}^D$ obtained from candidate solutions found in the objective space, the first Pareto front (or Pareto optimal set) consists of all non-dominated solutions:
\[
\mathcal{S}_1 = \{\mathbf{P}_X \in \mathcal{S} : \nexists \mathbf{P}_Y \in \mathcal{S} \text{ s.t. } Y \succ X\}\ .
\tag{3}
\]

Subsequent Pareto fronts $\mathcal{S}_k$ are defined recursively by excluding all solutions belonging to the preceding $\{S_i\}_{i=1}^{k-1}$:

\[
\mathcal{S}_k = \{\mathbf{P}_X \in \mathcal{S} : \nexists \mathbf{P}_Y \in \mathcal{S} \setminus \bigcup_{i=1}^{k-1} \mathcal{S}_i \text{ s.t. } Y \succ X\}\ .
\tag{4}
\]

\begin{remark}
    
All solutions from the same Pareto front are considered equivalent, as they are either identical or incomparable:
    \[
   \forall \mathcal{S}_k \subset \mathcal{S}, \forall (X,Y) \in \mathcal{S}_k, (X\|Y)\ .
   \tag{5}
   \]
\end{remark}
\end{definition}

\section{Proposed Approach: CombiMOTS}
\label{method}

We propose CombiMOTS, a PMCTS method designed to navigate a dual-target specific synthesizable space by leveraging the strengths of both FBDD and Pareto optimization.
We first outline in Sec.\ref{spacereduction} how to reduce a large chemical space to available dual-target-specific building blocks.
We then describe our reaction-based PMCTS implementation in Sec.\ref{pmcts}.
We provide additional implementation details in \cref{implem_det}.

\subsection{Synthesizable Search Space Reduction} The estimated size if the drug-like chemical space exceeds $10^{33}$ molecules, making exhaustive search for compounds with desirable chemical and structural properties computationally infeasible \cite{polishchuk2013estimation}.
\label{spacereduction}

To address this challenge, we narrow the search space using curated subsets of Enamine REAL Space \cite{grygorenko2020generating}, refined by \citet{swanson2024generative}.
This ensures that our method remains computationally tractable while maintaining its practical relevance.
Given a target pair, we reduce the search space by (1) identifying target-aware fragments from known active inhibitors, and (2) aligning them with the REAL Space building blocks and reactions (\cref{fig:spacereduc}).
\begin{figure}[ht]
    \begin{center}
        \centerline{\includegraphics[width=\linewidth]{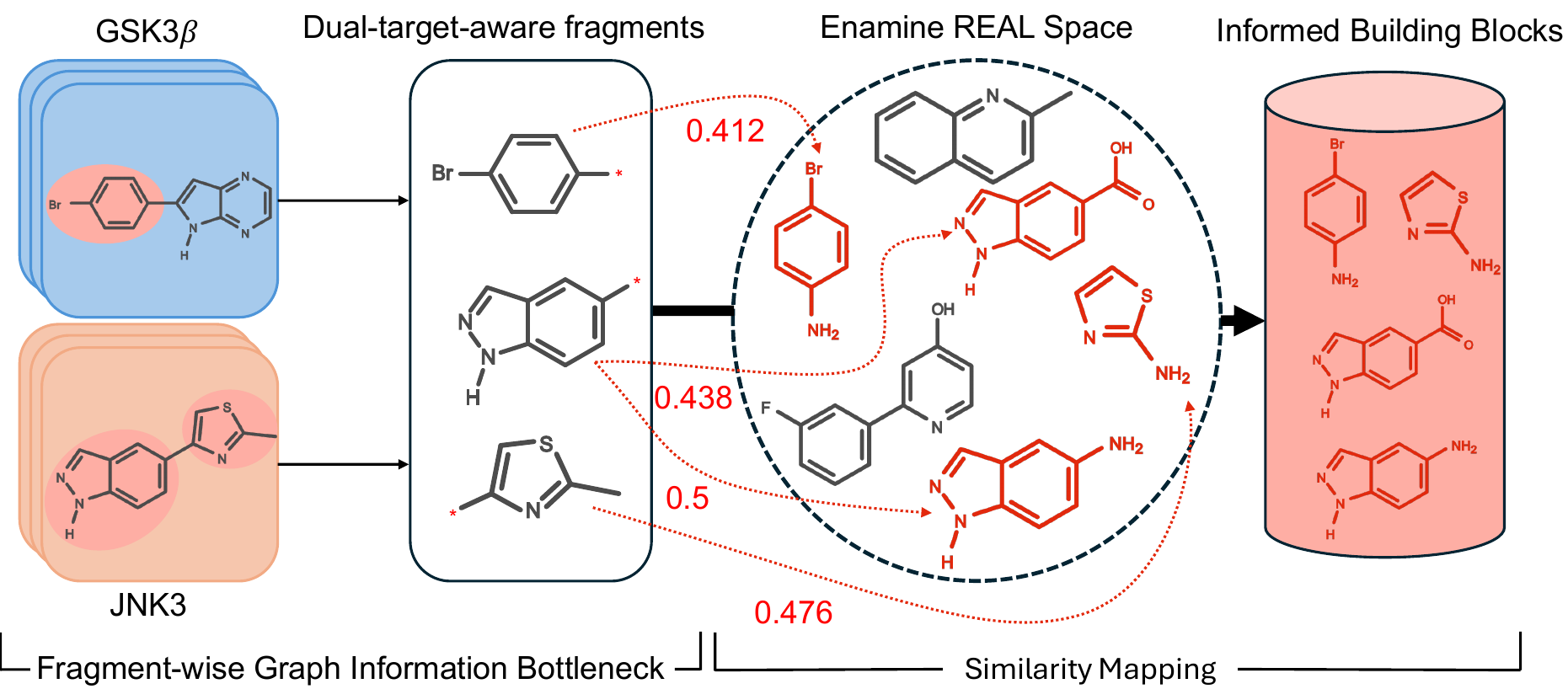}}
        \caption{Search Space Reduction. 
        Property-aware fragments are extracted for each target using Fragment-wise Graph Information Bottleneck \cite{lee2023drug}. 
        By applying a Tanimoto similarity threshold to a synthesizable search space, we curate a final set of informed industry-ready building blocks.
}
    \label{fig:spacereduc}
    \end{center}
    \vspace{-0.3in}
\end{figure}

\paragraph{Fragment Extraction} To extract chemically meaningful fragments from active molecules, we employ Fragment-wise Graph Information Bottleneck (FGIB) \cite{lee2023drug}.
FGIB provides a sophisticated approach to extracting substructures using BRICS decomposition \cite{degen2008art} from known active compounds, highly contributing to their target property. 
Its theoretical foundation lies in the Graph Information Bottleneck (GIB) principle \cite{wu2020graph}, where the objective function balances between preserving property-predictive information and compressing an original molecular graph representation.
This is achieved through a dual optimization process, where the goal is to simultaneously maximize information between extracted fragments and target properties, while minimizing the mutual information between fragments and the original graph. 
Given a molecule graph $G$ and its target property $Y$, GIB identifies an informative set of fragments $G^{\text{sub}}$ by optimizing:
\begin{equation}
\underset{G^{\text{sub}}}{\min} -I(G^{\text{sub}}, Y) + \beta I(G^{\text{sub}}, G) \ .
\tag{6}
\end{equation}
where $I(.|.)$ denotes mutual information and $\beta$ is a positive hyperparameter balancing informativeness and compression.

FGIB utilizes Message Passing Neural Networks (MPNN) to compute graph embeddings and introduces a novel noise injection mechanism that modulates information flow based on fragment importance. 
This mechanism enhances our model's ability to identify chemically meaningful substructures by selectively preserving informative signals, while reducing redundancy.

\paragraph{Search Space Reduction} To ensure interpretable and realistic generations, the core implementation of CombiMOTS relies on predefined building blocks from the Enamine REAL Space.
After obtaining dual-target-aware fragments, we curate the final set of building blocks by mapping learnt fragments to existing Enamine building blocks with a Tanimoto similarity above a fixed threshold (\cref{fig:spacereduc}).
We empirically find in \cref{ablation:threshold} that using ECFP4 Morgan fingerprints and a threshold value of 0.4 generally leads to an appropriate search space size ($\sim$14k blocks and $\sim$25M possible reaction products).

\subsection{Pareto Monte-Carlo Tree Search}
\label{pmcts}

\subsubsection{Monte-Carlo Tree Search}
\label{mcts}

MCTS is a decision-making algorithm that systematically explores a search space by balancing exploration and exploitation through four distinct phases. 
(i) The \textbf{selection} phase aims at traversing a search tree from the root node to a leaf node, using the Upper Confidence Bound (UCB) formula to greedily select the most promising actions based on both their estimated UCB value and the uncertainty of those estimates. 
Given a synthesis tree $T$ to search, a node $n$ with parent $p$, we denote $C$ a positive exploration constant, $R(n)$ the cumulative reward of $n$ and $N(n)$ the number of times $n$ was visited:
\[
\text{UCB}(n) = \underbrace{\frac{R(n)}{N(n)}}_{\text{exploitation}} + C\underbrace{\sqrt{\frac{\ln(N(p))}{N(n)}}}_{\text{exploration}}\ .
\tag{7}
\label{formula:ucb}
\]
The `exploitation' term favors selecting high-property nodes, while the `exploration' term encourages selecting less explored nodes, jointly guiding the traversal of both promising and uninvestigated outcomes.  
(ii) In the \textbf{expansion} phase, reaching a leaf node triggers the creation of child nodes, each representing a new potential state or action in the search space.  
(iii) The \textbf{simulation} (rollout) phase iteratively applies steps (i) and (ii) from the newly expanded node until reaching a terminal state or predefined depth.  
(iv) In the \textbf{backpropagation} phase, the statistics of all nodes along the traversed path are updated, propagating simulation outcomes upward to refine value estimates and visit counts.  
This iterative process continues until computational constraints or a convergence criterion are met, progressively constructing a search tree that balances exploration and exploitation while ensuring sufficient search space coverage.

\subsubsection{Pareto MCTS for Dual-Target Molecules}
We propose an extended version of MCTS that incorporates Pareto optimization through vectorized properties, integrating property predictors and docking oracles based on combinatorial chemistry principles \cite{liu2017combinatorial} to identify synthetically accessible compounds that jointly target two proteins.

\begin{figure*}[ht]
    \begin{center}
        \centerline{\includegraphics[scale=0.8]{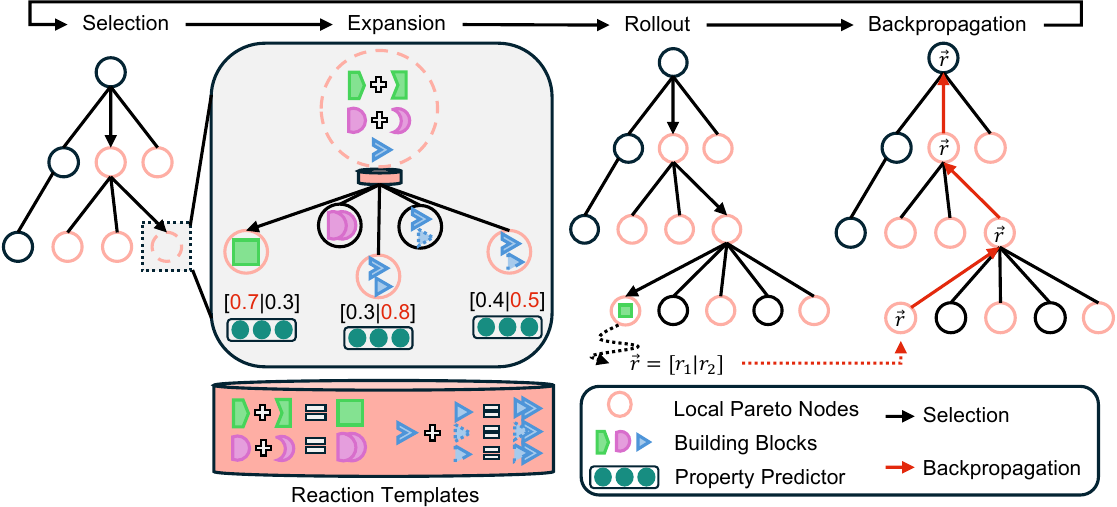}}
        \caption{One full iteration of the CombiMOTS Algorithm.
        (i) Selection from local Pareto fronts occurs until a leaf node to expand is found.
        (ii) During expansion, two types of nodes are created by consulting the reaction templates: those representing products from current building blocks and those incorporating compatible reactants.
        Upon creation, oracles predict properties for all child nodes to establish their local Pareto front.
        We illustrate property vectors with two objectives.
        (iii) Selection and Expansion steps iteratively occur until a reaction product is found.
        (iv) The property vector of the reaction product is backpropagated up the path.}
    \label{fig:pmcts}
    \end{center}
    \vskip -0.3in
\end{figure*}

\begin{algorithm}[ht]
    \caption{High-Level CombiMOTS Algorithm}
    \label{alg:highlevel}
\begin{algorithmic}[1]
    \STATE \textbf{Input:} Reactions $R$, rollouts $n_{rollout}$
    \REQUIRE Multi-objective $Oracles$
    
    \STATE Initialize MCTS tree $T$ with empty root node
    \FOR{$i = 1$ \textbf{to} $n_{rollout}$}
        \STATE $v \gets \text{Rollout}(T.root)$
    \ENDFOR
    \STATE \textbf{return} Pareto optimal molecules from visited nodes
    
    \STATE
    \FUNCTION{Rollout($node$)}
        \IF{$node$ is a product}
            \STATE \textbf{return} $Oracles(node)$ $\triangleright$ \underline{Backpropagation}
        \ENDIF
        \STATE $children \gets \text{GetChildNodes}(node)$ $\triangleright$ \underline{Expansion}
        \STATE $selected \gets \underline{\text{ParetoSelection}}(children)$ 
        
        \STATE $v \gets \text{Rollout}(selected)$
        \IF{$selected$ is a product}
            \STATE $v \gets \text{Elem-wise-max}(v, selected.P)$ $\triangleright$ \underline{Backprop.}
        \ENDIF
        
        \STATE Update statistics of $selected$
        \STATE \textbf{return} $v$
    \ENDFUNCTION
\end{algorithmic}
\end{algorithm}
    

The major modification from the single-objective version MCTS described in \cref{mcts} is the design of multi-dimensional property vectors, enabling element-wise optimization.
This formulation allows the application of Pareto principles (\cref{pareto}) by optimizing vector coordinates across the full objective space, but it also requires modifications of the MCTS core steps.
In this setting, tree traversal can no longer greedily select the highest-scoring node.
Given the property vector $\Vec{Ora}(n)$ obtained from $Oracles$ and $D$ the number of objectives, \cref{formula:ucb} becomes the Pareto UCB formula (PUCB):

{\footnotesize
\begin{equation}
\overrightarrow{PUCB}(n) = \frac{\Vec{R}(n)}{N(n)} + C\times \overrightarrow{Ora(n)}\sqrt{\frac{\ln(D)+4\times\ln(1+N(p))}{1+N(n)}}.
\tag{8}
\label{formula:pucb}
\end{equation}
}

The \cref{fig:pmcts} and \cref{alg:highlevel} illustrate a complete rollout of CombiMOTS, while \cref{alg:test} provides a detailed description of the algorithm.
During the selection step, after computing the PUCB vector of all child nodes, the non-dominated nodes form a local Pareto front, from which the next selected node is randomly sampled.
In our problem, each node represents a tuple of multiple building blocks.
Upon reaching a leaf node, the expansion step generates new child nodes in two ways: (a) by applying all possible reactions to the building blocks in the node and creating one node per product, or
(b) incorporating all matching reactants of each building block and creating one node per addition.
The rollout iteratively repeats these steps until a reaction product is reached (terminal node).
When a terminal node is reached, the backpropagation step updates the reward vectors of all nodes along the traversal path, before initiating the next rollout.
The algorithm ends once the specified number of rollouts is completed, returning all encountered reaction products.

\section{Experiments}

\subsection{Datasets}

As our work focuses on dual-target molecule generation, we demonstrate the practical utility of CombiMOTS in real-world scenarios by evaluating it on three disease-related protein target-pairs.
Notably, we curate and release new datasets for the EGFR-MET and PIK3CA-mTOR pairs.
Detailed data statistics and curation methods are provided in \cref{datacur,datastat}.

\paragraph{GSK3$\beta$-JNK3} These are two kinases related to Alzeihmer's Disease (AD) \cite{mccubrey2014diverse, koch2015inhibitors}.
Designing dual-inhibitors for these targets open possibilities towards more efficient AD therapies.
We utilize the data curated by \citet{li2018multi} as a commonly used benchmark for dual-target generation \cite{jin2020multi,xie2021mars,chen2024structure}.

\paragraph{EGFR-MET} In non-small cell lung cancer (NSCLC), resistance to EGFR inhibitors often involves compensatory activation of MET, leading to tumor progression \cite{fang2024molecular}. 
Therefore, a dual-target molecule that simultaneously inhibits EGFR and MET could overcome drug resistance and improve therapeutic outcomes \cite{ren2024case}. 
This strategy is supported by multiple studies showing the potential benefits of combined EGFR and MET inhibition in patients with EGFR mutations and MET amplification or overexpression \cite{fang2024molecular,ren2024case}.

\paragraph{PIK3CA-mTOR} Mutations in PI3K, particularly PIK3CA, and dysregulation of mTOR contribute to cancer cell growth, proliferation, and survival \cite{fruman2014pi3k}.  
Dual inhibitors targeting both PI3K and mTOR provide a more comprehensive blockade of this pathway, addressing potential resistance mechanisms, as demonstrated by the PI3K/mTOR dual inhibitor VS-5584 \cite{hart2013vs}.  
Simultaneous inhibition of both targets is expected to more effectively suppress PI3K-mTOR signaling compared to isoform-selective inhibitors, potentially overcoming feedback activation that limits single-target therapies \cite{rodrik2016overcoming}.

\subsection{Baselines}
\textbf{RationaleRL} \cite{jin2020multi} identifies important structural components from active compounds through MCTS.
It then systematically merges these components while maintaining their shared structural elements, followed by targeted fine-tuning of a generative model to append appropriate side chains.
\textbf{REINVENT} (v3.2) \cite{blaschke2020reinvent} adopts a sequence-based approach, utilizing a recurrent neural network architecture to generate molecular SMILES strings. 
The framework incorporates reinforcement learning techniques to optimize multiple molecular properties simultaneously, enabling the design of compounds with activity across multiple objectives.
\textbf{MARS} \cite{xie2021mars} selects the top 1,000 fragments from the ChEMBL dataset based on their size and frequency.
The obtained fragments have less than 10 heavy atoms with frequent occurrence in the dataset.
It employs graph neural networks (GNNs) and Markov Chain Monte Carlo (MCMC) sampling to refine molecules with enhanced properties, from addition or deletion of these fragments.

\subsection{Evaluation Metrics} 
For each model, we generate $N$ = 10,000 molecules for all target pairs, then compare the performance of models to assess their ability to generate original and biochemically potent compounds.
We evaluate generations with metrics fitting two categories:
\paragraph{Originality} We gauge the models' ability to generate original compounds using the following metrics.  
\textbf{Validity (\%)}: The proportion of molecules adhering to all chemical rules.  
\textbf{Uniqueness (\%)}: The proportion of distinct molecules within the generated set, where a low value suggests repetitive generation. 
\textbf{Novelty (\%)}: As defined by \citet{olivecrona2017molecular}, the proportion of generated molecules whose nearest training set dual-active neighbor has a Tanimoto similarity score below 0.4, reflecting the model’s capacity to generate sufficiently different, yet desirable compounds.  
\textbf{Diversity (\%)}: The complement of the average pairwise Tanimoto similarity across all generations, indicating the model’s ability to explore broad regions of the chemical space rather than a narrow subset.  
\textbf{Activity Success Rate (\%)}: The proportion of generated molecules predicted to be active towards both target proteins, indicating the model's ability to identify potential dual-inhibitors.
We denote such molecules as ``dual-actives".

\paragraph{Quality} We interpret property distributions by visualizing density plots of dual-active compounds.
We analyze:
\textbf{Docking Score} (DS) as an approximation of binding affinity between target proteins and molecule ligands.
A lower docking score indicates better molecular interactions.
\textbf{Quantitative Estimate of Drug-likeness} (QED) quantifies the potential of a compound presenting desirable drug capabilities based on properties such as topological polar surface area and the number of hydrogen donors/acceptors.
Ranging from 0 to 1, a higher score is preferable.
\textbf{Synthetic Accessibility score} (SA) measures how difficult a given compound is to synthesize, based on fragment contribution and complexity penalty.
Ranging from 1 to 10, compounds with a low SA score are easier to obtain.

Details on used oracles and molecular docking settings are given in \cref{implem_det}.

\subsection{Objectives} 

RationaleRL and MARS are reproduced in their best reported settings. Following \citet{li2018multi}, they use random forest (RF) classifiers for each target, alongside QED and SA scores to condition their generation.
REINVENT is reproduced with the same property predictors with QED score only as SA score is not supported.

The objectives of CombiMOTS integrate both biological activity and structural constraints.  
Following \citet{li2018multi,jin2020multi,xie2021mars,swanson2024generative}, we use activity towards both targets as an empirical estimator of inhibition, derived from real experimental assays used to train machine learning predictors.  
For instance, the GSK3$\beta$-JNK3 training data originates from PubChem and ChEMBL \cite{gaulton2012chembl,kim2016pubchem}, filtered based on bioactivity data, particularly IC$_{50}$ values.  
We employ a Chemprop (D-MPNN) classifier for multi-task prediction of dual-target activity.  
Unlike baselines, we prioritize molecular docking scores for both targets over QED and SA.  
Specifically for dual-inhibitor design, we argue that to accurately consider SAR during generation, ``We Should at Least Be Able to Design Molecules That Dock Well" \cite{cieplinski2020we}: CombiMOTS successfully captures ligand-binding affinity, as showcased in \cref{casestud:compounds1-2}. 
Details and justification on objective design are given in \cref{implem_det,appendix:sixobj,appendix:dockingscore}.

We conduct additional experiments in \cref{additexp}.
They include comparative analysis of: (i) A scalarized version of CombiMOTS to emphasize the utility of Pareto MCTS. 
(ii) A variant prioritizing QED/SA over docking scores and a six objective version which also incorporates QED and SA scores.
(iii) Implementations of MARS and REINVENT aligned with our objectives by replacing QED and SA with docking score.
(iv) A comparison against AIxFuse \cite{chen2024structure}, a model using collaborative reinforcement learning and active learning, on the DHODH-ROR$\gamma$t pair to investigate how our model handles data scarcity.
\subsection{Experimental Settings} 
For RationaleRL, REINVENT and MARS, we generate N = 10,000 molecules across all three target pairs.
As CombiMOTS is fundamentally a search method, it does not `generate' a fixed number of outputs but instead returns all reaction products within a computational budget.
We fix $n_{rollout}$ = 50,000 for all tasks (GSK3$\beta$-JNK3, EGFR-MET, PIK3CA-mTOR) and randomly sample 10,000 found dual-active molecules.
\textit{Therefore, we do not need to compute activity success rate for CombiMOTS.}
\subsection{Results} As shown in \cref{fig:dens_gsk,fig:dens_egfr-pik3}, CombiMOTS consistently identifies optimal tradeoffs across all three target pairs under each metric.
In contrast, competing methods exhibit inconsistent performance across target pairs, often generating compounds with suboptimal properties or excelling in a single objective while underperforming in others.
Notably, displayed density distributions are normalized, highlighting the rarity of high-quality candidates produced by alternative methods.
Furthermore, \cref{tab:stats} suggests that REINVENT and RationaleRL can generate invalid SMILES, and all models struggle to generate unique, novel and diverse compounds on all tasks.
In every setting, CombiMOTS outperforms other methods in generating novel and diverse molecules with well-balanced physicochemical and structural properties, as visualized in \cref{fig:radar_gsk}.
Finally, we compute the respective first Pareto front of all models and report in \cref{tab:pareto-consistency} their average R2-distance to the utopia point, as well as the size of their optimal set (details in \cref{implem_det}): our approach steadily finds more Pareto optimal solutions. 
This evidence demonstrates the effectiveness of our approach in navigating the complex multi-objective landscape of dual-target drug design.
\begin{figure}[h]
    \begin{center}    
        \centerline{\includegraphics[width=\columnwidth]{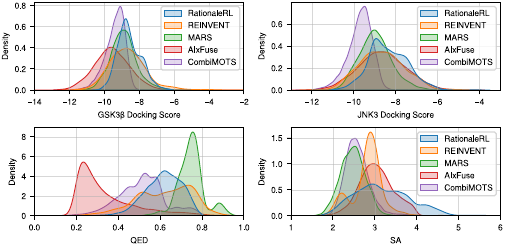}}
        \caption{Normalized distributions of the generated molecules across dual-docking scores, QED and SA scores on the GSK3$\beta$-JNK3 target pair.
        }
    \label{fig:dens_gsk}
    \end{center}
    \vspace{-0.3in}
\end{figure}

\begin{figure}[h]
    \vspace{-0.1in}
    \begin{center}    
        \centerline{\includegraphics[scale=0.18]{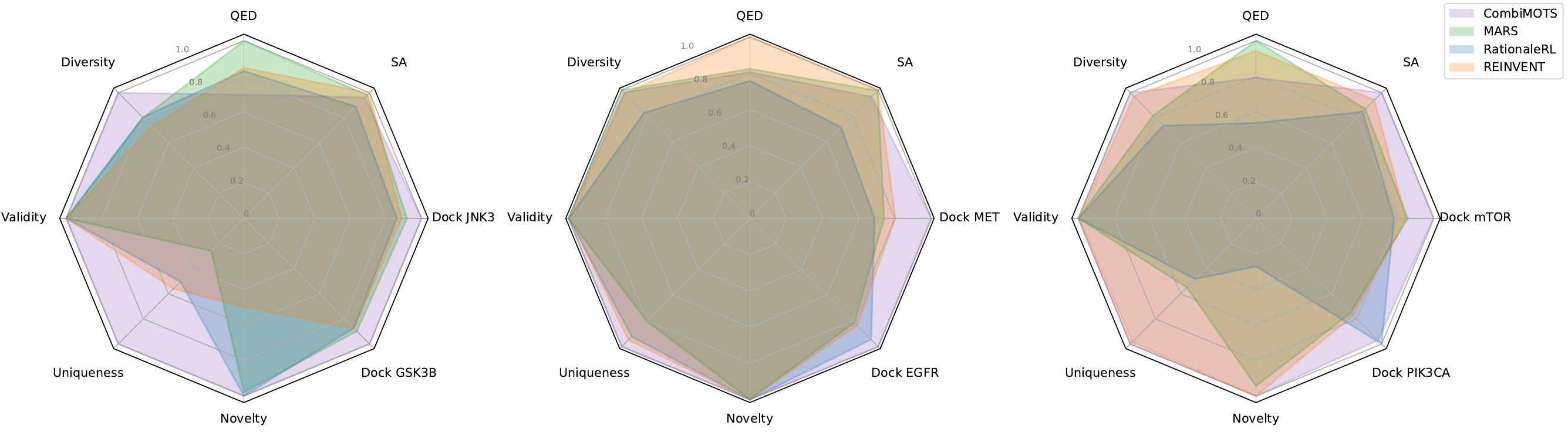}}
        \caption{Radar charts summarizing performance on the GSK3$\beta$-JNK3 (left), EGFR-MET (middle) and PIK3CA-mTOR (right) tasks. 
        We report average values of originality metrics and median values of quality metrics, normalized to the best score on each axis. 
        }
    \label{fig:radar_gsk}
    \end{center}
    \vspace{-0.4in}
\end{figure}

\begin{table}[h]
\caption{Performance comparison across different target pairs and molecular generation models based on 10,000 sampled molecules from three runs. \textit{(**: $p<0.001$; *: $p<0.01$)}}
\vspace{0.1in}
\centering
\scalebox{0.55}{
\begin{tabular}{l|c|ccccc}
\toprule
\multicolumn{1}{c|}{\multirow{2}{*}{\textbf{Target Pairs}}} & \multicolumn{1}{c|}{\multirow{2}{*}{\textbf{Models}}} & \multicolumn{5}{c}{\textbf{Metrics}} \\
 & & Valid. (\%) & Uniq. (\%) & Novel. (\%) & Div. (\%) & Act. SR (\%) \\ \midrule

\multirow{4}{*}{GSK3$\beta$-JNK3} 
& RationaleRL & 99.95\textsubscript{\textcolor{red!95}{±0.01}} & 50.46\textsubscript{\textcolor{red!95}{±0.05}} & \textbf{100}\textsubscript{\textcolor{red!95}{±0.00}} & 70.92\textsuperscript{**}\textsubscript{\textcolor{red!95}{±0.01}} & \textbf{99.97}\textsubscript{\textcolor{red!95}{±0.01}} \\
& REINVENT & 99.86\textsubscript{\textcolor{red!95}{±0.06}} & 56.14\textsubscript{\textcolor{red!95}{±0.77}} & 49.72\textsubscript{\textcolor{red!95}{±0.37}} & 64.52\textsuperscript{**}\textsubscript{\textcolor{red!95}{±0.19}} & 99.33\textsubscript{\textcolor{red!95}{±0.15}} \\
& MARS & \textbf{100}\textsubscript{\textcolor{red!95}{±0.00}} & 26.02\textsubscript{\textcolor{red!95}{±3.04}} & 97.87\textsubscript{\textcolor{red!95}{±0.56}} & 70.64\textsubscript{\textcolor{red!95}{±2.85}} & 99.86\textsubscript{\textcolor{red!95}{±0.10}} \\
& \cellcolor{lightgray!30} CombiMOTS & \cellcolor{lightgray!30} \textbf{100}\textsubscript{\textcolor{red!95}{±0.00}} & \cellcolor{lightgray!30} \textbf{100}\textsubscript{\textcolor{red!95}{±0.00}} & \cellcolor{lightgray!30} 99.96\textsubscript{\textcolor{red!95}{±0.01}} & \cellcolor{lightgray!30} \textbf{88.67}\textsubscript{\textcolor{red!95}{±0.52}} & \cellcolor{lightgray!30} - \\ \midrule

\multirow{4}{*}{EGFR-MET} 
& RationaleRL & 99.95\textsubscript{\textcolor{red!95}{±0.02}} & 92.22\textsubscript{\textcolor{red!95}{±0.21}} & \textbf{100}\textsubscript{\textcolor{red!95}{±0.00}} & 75.78\textsubscript{\textcolor{red!95}{±0.04}} & \textbf{99.81}\textsubscript{\textcolor{red!95}{±0.01}} \\
& REINVENT & 99.15\textsubscript{\textcolor{red!95}{±1.55}} & 94.82\textsubscript{\textcolor{red!95}{±0.14}} & 99.80\textsubscript{\textcolor{red!95}{±0.05}} & 89.59\textsubscript{\textcolor{red!95}{±0.04}} & 70.47\textsubscript{\textcolor{red!95}{±0.12}} \\
& MARS & \textbf{100}\textsubscript{\textcolor{red!95}{±0.00}} & 80.30\textsubscript{\textcolor{red!95}{±0.88}} & 99.82\textsubscript{\textcolor{red!95}{±0.08}} & \textbf{91.89}\textsuperscript{*}\textsubscript{\textcolor{red!95}{±0.10}} & 69.04\textsubscript{\textcolor{red!95}{±0.53}} \\
& \cellcolor{lightgray!30} CombiMOTS & \cellcolor{lightgray!30} \textbf{100}\textsubscript{\textcolor{red!95}{±0.00}} & \cellcolor{lightgray!30} \textbf{100}\textsubscript{\textcolor{red!95}{±0.00}} & \cellcolor{lightgray!30} 99.81\textsubscript{\textcolor{red!95}{±1.62}} & \cellcolor{lightgray!30} 90.75\textsubscript{\textcolor{red!95}{±0.40}} & \cellcolor{lightgray!30} -\\ \midrule

\multirow{4}{*}{PIK3CA-mTOR} 
& RationaleRL & 99.99\textsubscript{\textcolor{red!95}{±0.01}} & 48.34\textsubscript{\textcolor{red!95}{±0.33}} & 27.04\textsubscript{\textcolor{red!95}{±0.39}} & 66.55\textsubscript{\textcolor{red!95}{±0.04}} & \textbf{99.98}\textsubscript{\textcolor{red!95}{±0.02}} \\
& REINVENT & 99.47\textsubscript{\textcolor{red!95}{±0.08}} & 98.39\textsubscript{\textcolor{red!95}{±0.09}} & 99.96\textsubscript{\textcolor{red!95}{±0.01}} & 87.71\textsuperscript{*}\textsubscript{\textcolor{red!95}{±0.04}} & 97.60\textsubscript{\textcolor{red!95}{±0.20}} \\
& MARS & \textbf{100}\textsubscript{\textcolor{red!95}{±0.00}} & 55.00\textsubscript{\textcolor{red!95}{±7.78}} & 94.42\textsubscript{\textcolor{red!95}{±3.21}} & 73.82\textsubscript{\textcolor{red!95}{±2.12}} & 99.86\textsubscript{\textcolor{red!95}{±0.23}} \\
& \cellcolor{lightgray!30} CombiMOTS & \cellcolor{lightgray!30} \textbf{100}\textsubscript{\textcolor{red!95}{±0.00}} & \cellcolor{lightgray!30} \textbf{100}\textsubscript{\textcolor{red!95}{±0.00}} & \cellcolor{lightgray!30} \textbf{99.99}\textsubscript{\textcolor{red!95}{±0.01}} & \cellcolor{lightgray!30} \textbf{90.88}\textsubscript{\textcolor{red!95}{±0.36}} & \cellcolor{lightgray!30} -\\

\bottomrule
\end{tabular}
}
\label{tab:stats}
\vspace{-0.2in}
\end{table}

\begin{table}[h]
\caption{Average R2-distance to the utopia point normalized in range 0 to 2 (lower is better) and size of Pareto optimal sets.}
\vspace{0.1in}
\centering
\scalebox{0.8}{
\begin{tabular}{l|c|c|c}
\toprule
\textbf{Model/Task} & \textbf{GSK3$\beta$-JNK3} & \textbf{EGFR-MET} & \textbf{PIK3CA-mTOR} \\ \midrule
CombiMOTS & 0.8075 \textbf{(174)} & \textbf{0.8419 (268)} & 0.8143 \textbf{(210)} \\
MARS & \textbf{0.7830} (102) & 0.8845 (142) & 0.8144 (89) \\
RationaleRL & 0.8307 (137) & 0.9388 (148) & 0.8852 (113) \\
REINVENT & 0.8223 (93) & 0.8518 (177) & \textbf{0.8035} (174) \\
\bottomrule
\end{tabular}
}
\label{tab:pareto-consistency}
\end{table}


\subsection{Ablation Studies}
\label{sec:ablation}
To demonstrate the effectiveness of the Search Space Reduction step, we conduct experiments to emphasize the role of FGIB in fragment extraction (\cref{ablation:fgib}) and the impact of Tanimoto similarity thresholds on the search space size (\cref{ablation:threshold}).

\subsubsection{Dual-Target-Aware Fragment Extraction}
\label{ablation:fgib}
FGIB extends BRICS by breaking molecules into retrosynthetically interesting fragments, which is preferred over rule/frequency-based methods when combined with Enamine REAL Space.
Some works adapt BRICS to needs with a fixed goal e.g. pBRICS \cite{vangala2023pbrics} for ADMET explanability, thus not suited for the dual-inhibition task where target proteins are user-defined.
Our goal is to capture high-property fragments for any target property.
To further justify our choice, we report results from 10k rollouts on the GSK3$\beta$-JNK3 task when replacing FGIB with naive BRICS and MiCaM \cite{geng2023novo}, a connection-aware motif-mining approach to decompose known active compounds.
\cref{tab:fragment_comparison} shows that FGIB successfully identified goal-specific fragments, resulting in a significantly higher rate of dual-active compounds.
For other metrics (\cref{app:ablation-frag-thresh}), all methods exhibit similar performance, which is sound as FGIB only impacts the search space through the selection of initial blocks: MCTS objectives and convergence are ``as good" but the attainable space contains more compounds likely exhibiting dual-activity potential.

\begin{table}[h]
\caption{Search space size and dual-actives candidates over all generations using FGIB, Naive BRICS and MiCaM. \textit{$\neq$ refers to blocks not used in the specified method.}}
\vspace{0.1in}
\centering
\scalebox{0.6}{
\begin{tabular}{l|c|c|c}
\toprule
\textbf{Method} & \textbf{\#Building Blocks} & \textbf{\#Possible Products} & \textbf{\#Dual-Actives} \\ \midrule
BRICS & 4,430 (747 $\neq$ FGIB) & $\sim$1.7M & 1,815/12,559 (14.45\%) \\
MiCaM & 12,274 (8,428 $\neq$ FGIB) & $\sim$26M & 1,445/13,263 (10.90\%) \\
\cellcolor{lightgray!30} \multirow{3}{*} & \cellcolor{lightgray!30} 14,366  & \cellcolor{lightgray!30} & \cellcolor{lightgray!30} \\
\cellcolor{lightgray!30} {\textbf{FGIB (Base)}} & \cellcolor{lightgray!30} (10,683 $\neq$ BRICS, & \cellcolor{lightgray!30} $\sim$25M & \cellcolor{lightgray!30} 3,662/15,423 (\textbf{23.74\%}) \\
\cellcolor{lightgray!30} & \cellcolor{lightgray!30} 10,520 $\neq$ MiCaM) & \cellcolor{lightgray!30} & \cellcolor{lightgray!30} \\
\bottomrule
\end{tabular}
}
\label{tab:fragment_comparison}
\end{table}

\subsubsection{Sensitivity of the Search Space Size}
\label{ablation:threshold}

We investigate the use of various thresholds and report results from 10k rollouts on the GSK3$\beta$-JNK3 task for lower and higher values of 0.3, 0.5 \& 0.6.
\cref{tab:threshold_comparison} shows that (i) the threshold greatly affects the search space size due to the small nature of FGIB fragments and Enamine blocks. 
Tanimoto metric being fingerprint-based, changing a small motif is relatively impactful.
(ii) For higher thresholds, the search space is too small to allow good exploration. 
There is a significant drop in number of dual-actives, and other metrics (\cref{app:ablation-frag-thresh}) suggest that thresholds above 0.6 yield a drop in diversity (88.67→78.73\%) and consistency across molecular properties.
(iii) For lower thresholds, CombiMOTS has ``more room to explore" but finds worse tradeoffs across QED and less dual-actives. 
As more reactions are possible, Pareto fronts are larger and convergence to optimal solutions is slower (\cref{theoretical}).
Optimal thresholds are specific to the target properties, but our experiments suggest that a search space of magnitude 10M yields better convergence within a reasonable budget ($\sim$10k rollouts).

\begin{table}[h]
\caption{Search space size and dual-actives candidates over all generations using different similarity threshold values.}
\vspace{0.1in}
\centering
\scalebox{0.7}{
\begin{tabular}{l|c|c|c}
\toprule{\textbf{Threshold}} & \textbf{\#Building Blocks} & \textbf{\# Possible Products} & \textbf{\# Dual-Actives} \\ \midrule
0 (Full Space) & 139,493 & $>$31B & (Not Performed) \\
0.3 & 54,811 & $\sim$478M & 2,833/13,556 (20.90\%) \\
\cellcolor{lightgray!30} 0.4 (Base) & \cellcolor{lightgray!30} 14,366 & \cellcolor{lightgray!30} $\sim$25M & \cellcolor{lightgray!30} 3,662/15,423 (\textbf{23.74\%}) \\
0.5 & 3,737 & $\sim$1.1M & 1,380/11,632 (11.86\%) \\
0.6 & 858 & $\sim$43k & 317/9,433 (3.36\%) \\
\bottomrule
\end{tabular}
}
\label{tab:threshold_comparison}
\end{table}

\section{Case Studies}

\subsection{Visualization of potential dual-inhibitors for GSK3$\beta$-JNK3}
\label{casestud:compounds1-2}

\begin{figure}[h]
    \begin{center}    
        \centerline{\includegraphics[scale=0.25]{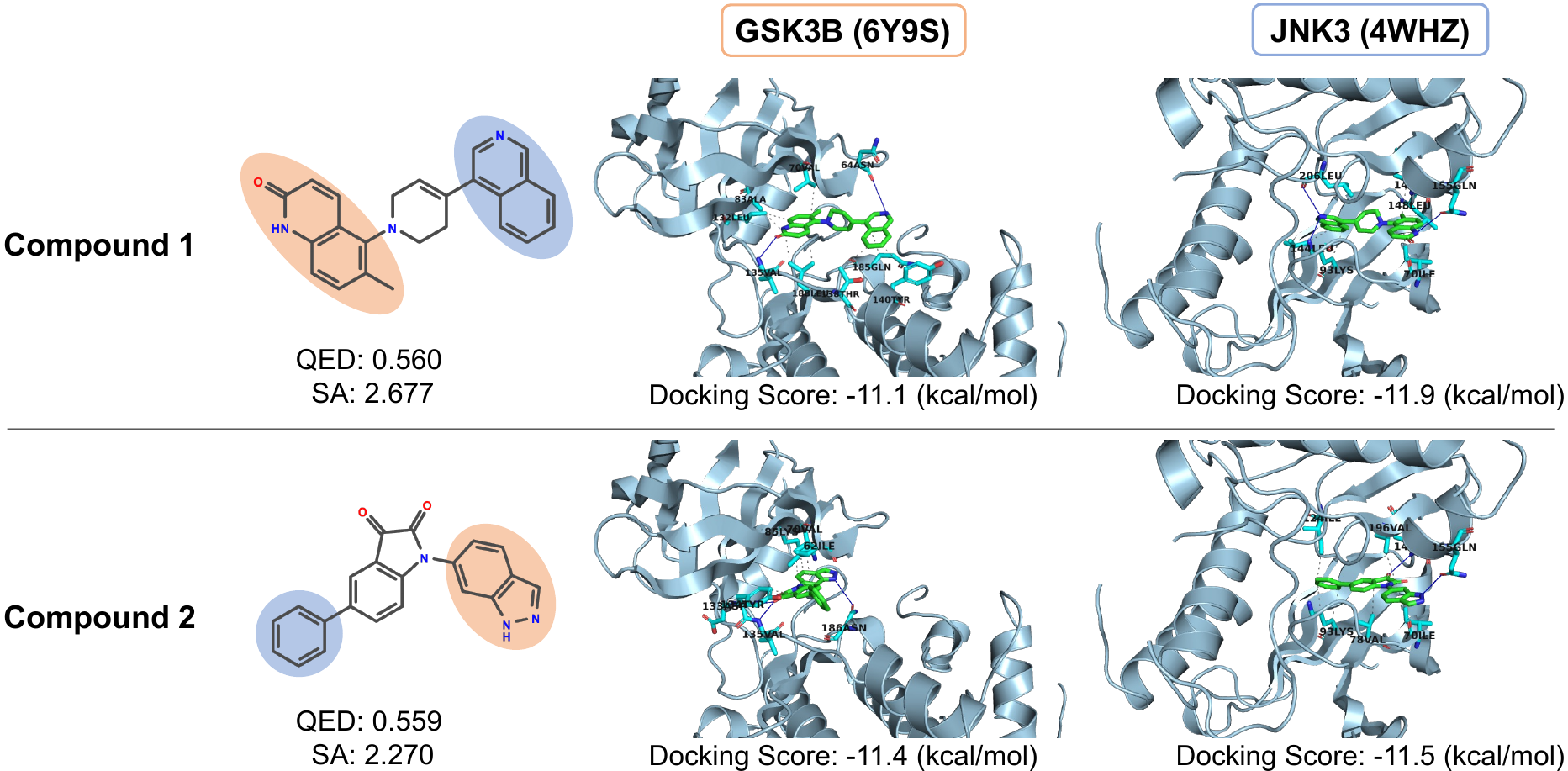}}
        \caption{Examples of molecules generated by CombiMOTS on the GSK3$\beta$-JNK3 target pair. 
        They both yield high drug-likeness and synthesizability metrics, and different (colored) parts of the compounds are positioned into each protein pocket.}
    \label{fig:casestudy}
    \end{center}
    \vspace{-0.3in}
\end{figure}

We show in \cref{fig:casestudy} two example dual-target molecules (Compound 1 and Compound 2) generated by CombiMOTS to inhibit both GSK3$\beta$ and JNK3.  
Through \textit{in silico} docking, each compound was predicted to occupy the ATP binding pocket of both targets, reproducing key interaction patterns highlighted in previous structural and biochemical studies \cite{lu2018identification,cheng2022identifying}.  
Notably, hydrogen-bond interactions in the hinge region (e.g., \textit{VAL135} in GSK3$\beta$; \textit{MET149} in JNK3) and the engagement of hydrophobic pockets (\textit{VAL70, LEU132} in GSK3$\beta$; \textit{ILE70, VAL78} in JNK3) were consistently observed, indicating strong binding complementarity.  

Compound 1 exhibited strong predicted affinity, with docking scores of –11.1 kcal/mol for GSK3$\beta$ and –11.9 kcal/mol for JNK3. 
Hydrophobic interactions with residues such as \textit{VAL70} (GSK3$\beta$) and \textit{ILE70} (JNK3) stabilized its positioning in catalytic clefts. 
Hydrogen bonds with residues such as \textit{ASN64} (GSK3$\beta$) and \textit{LYS93} (JNK3) further reinforced dual-target binding, aligning with known hinge-directed inhibitor interactions \cite{quesada2014insights,cheng2022identifying}.  

Similarly, Compound 2 achieved favorable docking scores (–11.4 kcal/mol for GSK3$\beta$, –11.5 kcal/mol for JNK3). 
Key hydrogen bonds with \textit{ASP133} (GSK3$\beta$) and \textit{MET149} (JNK3), along with hydrophobic contacts involving \textit{ILE62} (GSK3$\beta$) and \textit{ILE70} (JNK3), support its effective active-site engagement \cite{quesada2014insights,lu2018identification,cheng2022identifying}. 
These interactions indicate robust dual-target binding for both compounds.
The full list of identified interactions is found in \cref{app:casestudy1}.

Beyond binding predictions, both compounds display QED values around 0.56 and SA scores under 3, suggesting a reasonable balance of drug-like properties and ease of synthesis. 
These findings highlight the promise of CombiMOTS-driven design in generating dual-target molecules that not only recapitulate essential interaction motifs but also meet fundamental criteria for drug discovery.

\subsection{Toxicity Prediction}
\label{studycase:tox}

\begin{figure}[h]
    \begin{center}    
        \centerline{\includegraphics[scale=1]{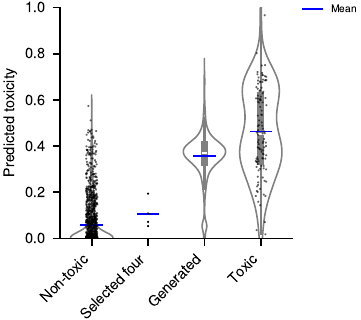}}
        \caption{Toxicity predictions for generated molecules on the GSK3$\beta$-JNK3 target pair. 
        `Generated' refers to all 10,000 generated samples.
        `Selected four' refers to the compounds discussed in \ref{studycase:tox}.}
    \label{fig:tox}
    \end{center}
    \vspace{-0.3in}
\end{figure}

To further demonstrate the applicability of our method in drug discovery, we estimate the toxicity profile of molecules generated by CombiMOTS.  
Following \citet{swanson2024generative}, we train an ensemble of ten Chemprop models on the ClinTox dataset \cite{wu2018moleculenet} using clinical toxicity binary labels. The dataset comprises 1,478 molecules, with 1,366 experimentally validated as non-toxic (0) and 112 identified as toxic (1).  

\cref{fig:tox} compares the predicted toxicities of 10,000 CombiMOTS-generated molecules for the GSK3$\beta$-JNK3 task against ground truth compounds from ClinTox. While some generations tend to be toxic, we make two key observations:  
(i) The average predicted toxicity of 10,000 samples remains below that of the 112 truly toxic compounds, with some generations predicted as potentially non-toxic.
(ii) Four randomly selected samples with predicted toxicity below 0.2 exhibit high scores across all properties (\cref{fig:toxselect}). 
Despite toxicity not being optimized, these four samples appear in the first two Pareto fronts, suggesting that Pareto-optimal solutions may hold promise for lead identification.
We perform an additional experiment on toxicity optimization in \cref{broad-toxopt}.


\section{Conclusion}

We proposed CombiMOTS, a novel approach to combinatorial multi-objective drug design by integrating PMCTS with fragment-based molecule generation. 
By leveraging dual-target-aware fragments available in industry and extending PMCTS to a combinatorial framework, our method effectively navigates the complex design space of dual-target drug discovery while ensuring synthesizability. 
Through comprehensive evaluations on multiple dual-target inhibitor tasks, we demonstrate that CombiMOTS consistently outperforms existing methods by generating diverse and interpretable molecules that achieve favorable trade-offs across key objectives. 
Our results highlight the potential of CombiMOTS as a valuable tool in rational drug design, paving the way for further advancements in multi-objective optimization for molecular generation.

Future directions could extend our framework to larger chemical libraries e.g., Freedom Space \cite{protopopov2024freedom} or broader multi-objective applications.
While \cref{broaderapplications} offers preliminary investigations of selective molecule generation, toxicity optimization and protein-protein interaction modulation, we envision future works to explore various strategies to expand the applicability of CombiMOTS in real-world scenarios.

\section*{Acknowledgements}

This study was supported by the Bio \& Medical Technology Development Program of the National Research Foundation (NRF), funded by the Ministry of Science and ICT, Republic of Korea (RS-2022-NR067933), the NRF grant funded by the Korea government (MSIT, RS-2023-00257479), the Korea Health Technology R\&D Project through the Korea Health Industry Development Institute (KHIDI), funded by the Ministry of Health \& Welfare, Republic of Korea (RS-2024-00403375), and Institute of Information \& communications Technology Planning \& Evaluation (IITP) grant funded by the Korea government (MSIT, RS-2021-II211343, Artificial Intelligence Graduate School Program (Seoul National University).
This study is also funded by AIGENDRUG Co., Ltd. and ICT at Seoul National University provided research facilities.

\section*{Impact Statement}
We release CombiMOTS to be effective in real-world drug discovery tasks and useful to practitioners.
Common risks involve using this tool to design harmful compounds or having complete faith in machine-learned predicted properties with no \textit{post-hoc} validation.
For instance, metrics as validity, uniqueness, novelty and diversity can easily be diverted through small modifications on an original SMILES string \cite{renz2019failure}.
Though such challenges remain common in other domains such as language and image, we encourage users to work closely with application-driven experts such as medicinal chemists or health institutes.
\bibliography{references}
\bibliographystyle{icml2025}

\newpage
\appendix
\onecolumn

\section{Property Distributions for EGFR-MET and PIK3CA-mTOR}

\begin{figure}[h]
    \vspace{0.1in}
    \begin{center}    
        \centerline{\includegraphics[width=\columnwidth]{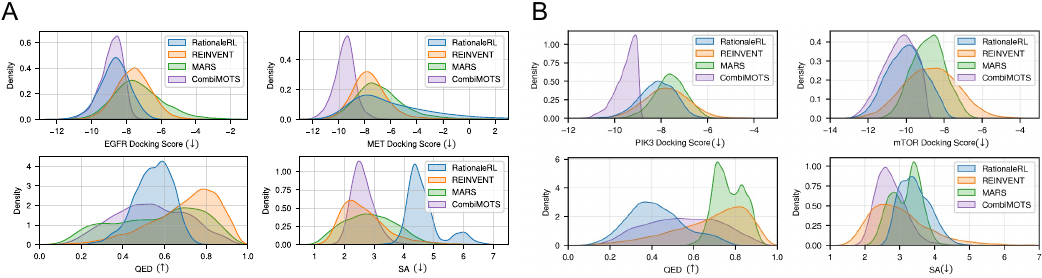}}
        \caption{Normalized distributions of the generated molecules across dual-docking scores, QED and SA scores on the (A) EGFR-MET and (B) PIK3CA-mTOR target-pairs.
        }
    \label{fig:dens_egfr-pik3}
    \end{center}
    \vspace{-0.1in}
\end{figure}

\section{Tables and Property Distributions - Ablation Studies}
\label{app:ablation-frag-thresh}

\begin{table}[H]
\caption{Performance comparison of fragmentation methods on the GSK3$\beta$-JNK3 task for 10,000 sampled molecules on three independent runs.}
\vspace{0.1in}
\centering
\scalebox{0.8}{
\begin{tabular}{l|cccc}
\toprule
\multicolumn{1}{c|}{\multirow{2}{*}{\textbf{Method}}} & \multicolumn{4}{c}{\textbf{Metrics}} \\
 & Valid. (\%) & Uniq. (\%) & Novel. (\%) & Div. (\%) \\ \midrule

BRICS & \textbf{100\textsubscript{\textcolor{red!95}{±0.00}}}& \textbf{100}\textsubscript{\textcolor{red!95}{±0.00}}& 98.54\textsubscript{\textcolor{red!95}{±0.09}}& 88.31\textsubscript{\textcolor{red!95}{±1.67}}\\
MiCaM & \textbf{100}\textsubscript{\textcolor{red!95}{±0.00}}& \textbf{100}\textsubscript{\textcolor{red!95}{±0.00}}& 98.26\textsubscript{\textcolor{red!95}{±0.81}} & 87.99\textsubscript{\textcolor{red!95}{±1.34}}\\
\cellcolor{lightgray!30} FGIB (base)& \cellcolor{lightgray!30} \textbf{100}\textsubscript{\textcolor{red!95}{±0.00}} & \cellcolor{lightgray!30} \textbf{100}\textsubscript{\textcolor{red!95}{±0.00}} & \cellcolor{lightgray!30} \textbf{99.96}\textsubscript{\textcolor{red!95}{±0.01}} & \cellcolor{lightgray!30} \textbf{88.67}\textsubscript{\textcolor{red!95}{±0.52}} \\

\bottomrule
\end{tabular}
}
\label{tab:fgib}
\end{table}
\begin{figure}[h]
    \centering
    \makebox[\columnwidth][c]{\includegraphics[scale=1]{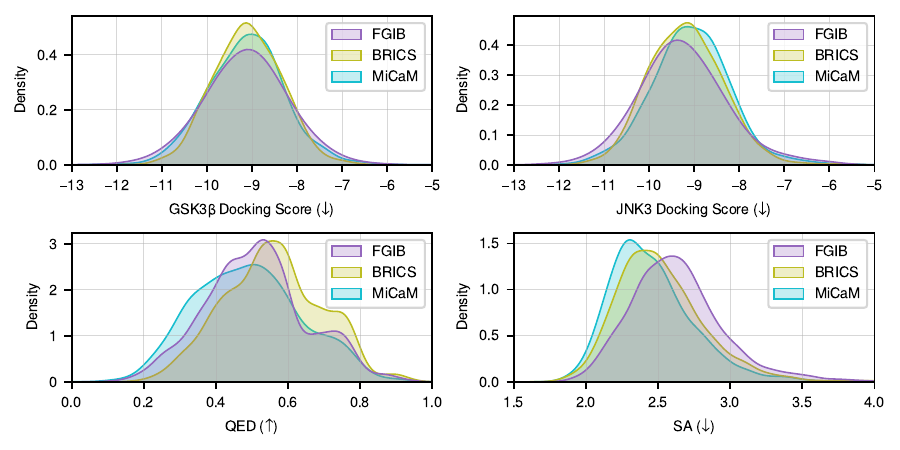}}
    \caption{Distributions plots comparing fragmentation methods on the GSK3$\beta$-JNK3 task.}
    \label{fig:abl_frag}
\end{figure}

\begin{table}[H]
\caption{Performance comparison across different threshold values based on 10,000 sampled molecules on the GSK3$\beta$-JNK3 task for three independent runs. 
Note the drop in Diversity for a threshold value of 0.6, reflecting redundancy related to a limited search space.}
\vspace{0.1in}
\centering
\scalebox{0.8}{
\begin{tabular}{l|cccc}
\toprule
\multicolumn{1}{c|}{\multirow{2}{*}{\textbf{Threshold}}} & \multicolumn{4}{c}{\textbf{Metrics}} \\
 & Valid. (\%) & Uniq. (\%) & Novel. (\%) & Div. (\%) \\ \midrule

0.3 & \textbf{100}\textsubscript{\textcolor{red!95}{±0.00}} & \textbf{100}\textsubscript{\textcolor{red!95}{±0.00}}& 98.80\textsubscript{\textcolor{red!95}{±0.21}}& 88.38\textsubscript{\textcolor{red!95}{±0.62}}\\
0.5 & \textbf{100}\textsubscript{\textcolor{red!95}{±0.00}} &\textbf{100}\textsubscript{\textcolor{red!95}{±0.00}} & 99.93\textsubscript{\textcolor{red!95}{±0.02}}& 86.28\textsubscript{\textcolor{red!95}{±0.35}}\\
0.6 & \textbf{100}\textsubscript{\textcolor{red!95}{±0.00}} & \textbf{100}\textsubscript{\textcolor{red!95}{±0.00}}& 99.95\textsubscript{\textcolor{red!95}{±0.01}} & 78.73\textsubscript{\textcolor{red!95}{±1.11}}\\
\cellcolor{lightgray!30} 0.4 (base) & \cellcolor{lightgray!30} \textbf{100}\textsubscript{\textcolor{red!95}{±0.00}} & \cellcolor{lightgray!30} \textbf{100}\textsubscript{\textcolor{red!95}{±0.00}} & \cellcolor{lightgray!30} \textbf{99.96}\textsubscript{\textcolor{red!95}{±0.01}} & \cellcolor{lightgray!30} \textbf{88.67}\textsubscript{\textcolor{red!95}{±0.52}} \\

\bottomrule
\end{tabular}
}
\label{tab:thresh}
\end{table}

\begin{figure}[H]
    \centering
    \makebox[\columnwidth][c]{\includegraphics[scale=1]{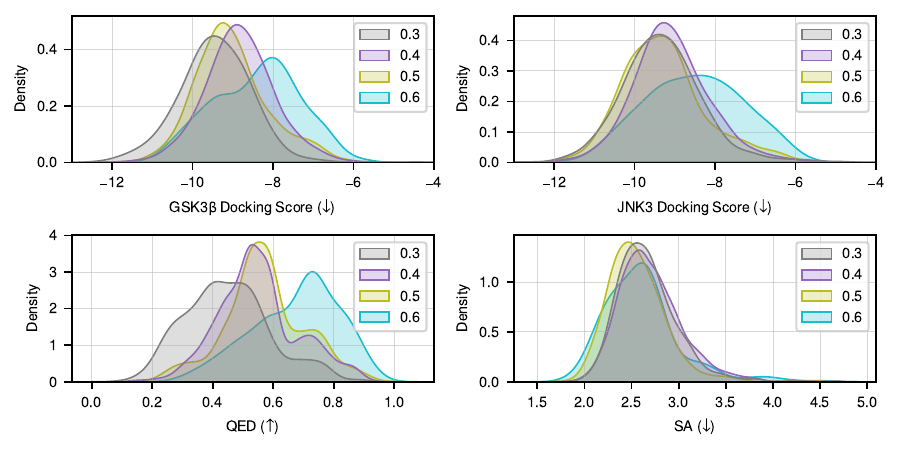}}
    \caption{Distributions plots comparing Tanimoto similarity thresholds on the GSK3$\beta$-JNK3 task. A value of 0.3 converges slower in QED. Values of 0.4 and 0.5 perform similarly but 0.4 presents more dual-actives. From 0.6, the search space is too small to find better tradeoffs.}
    \label{fig:abl_thresh}
\end{figure}

\section{t-SNE Visualizations}
\label{app:tsne}

We provide t-SNE visualizations of the baseline models for all three tasks using ECFP4 fingerprints (2048 bits), a perplexity of 30, 1,000 iterations and the seed number 42.
We plot \cref{fig:tsne_gsk,fig:tsne_egfr,fig:tsne_pik3} using the same generation runs than for \cref{fig:dens_gsk,fig:dens_egfr-pik3}.

Key observations are as follows:
\begin{itemize}
    \item RationaleRL generations are clustered on all tasks.
    Its workflow revolves around (i) extracting and merging "rationales" from high-property compounds, then (ii) training \& finetuning a graph completion module to obtain novel molecules from merged rationales.
    Generation occurs by auto-regressively complete the same structural cores, thus leading to poor diversity and similar fingerprints.
    \item REINVENT and MARS are not consistently generating sparse point clouds. 
    For example REINVENT is performing poorly on the GSK3$\beta$-JNK3 task, and MARS on the EGFR-MET task where small clusters are forming.
    \item CombiMOTS generates sparse and wide point clouds across all tasks, indicating its consistency in generating novel and structurally diverse compounds.
\end{itemize}
\begin{figure}[p]
    \centering
    \makebox[\columnwidth][c]{\includegraphics[scale=.87]{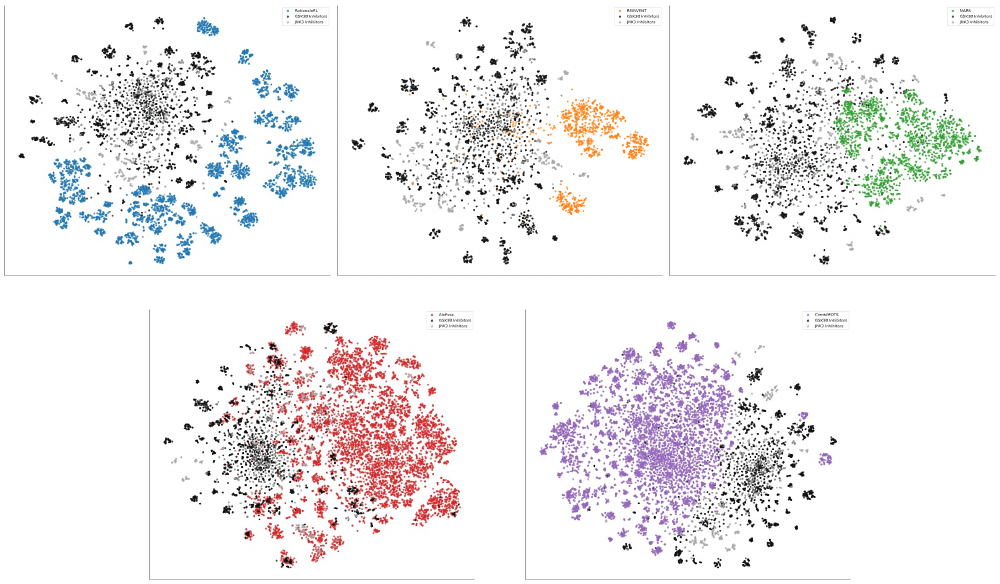}}
    \caption{t-SNE distributions on the GSK3$\beta$-JNK3 task.}
    \label{fig:tsne_gsk}
\end{figure}

\begin{figure}[p]
    \centering
    \makebox[\columnwidth][c]{\includegraphics[scale=0.8]{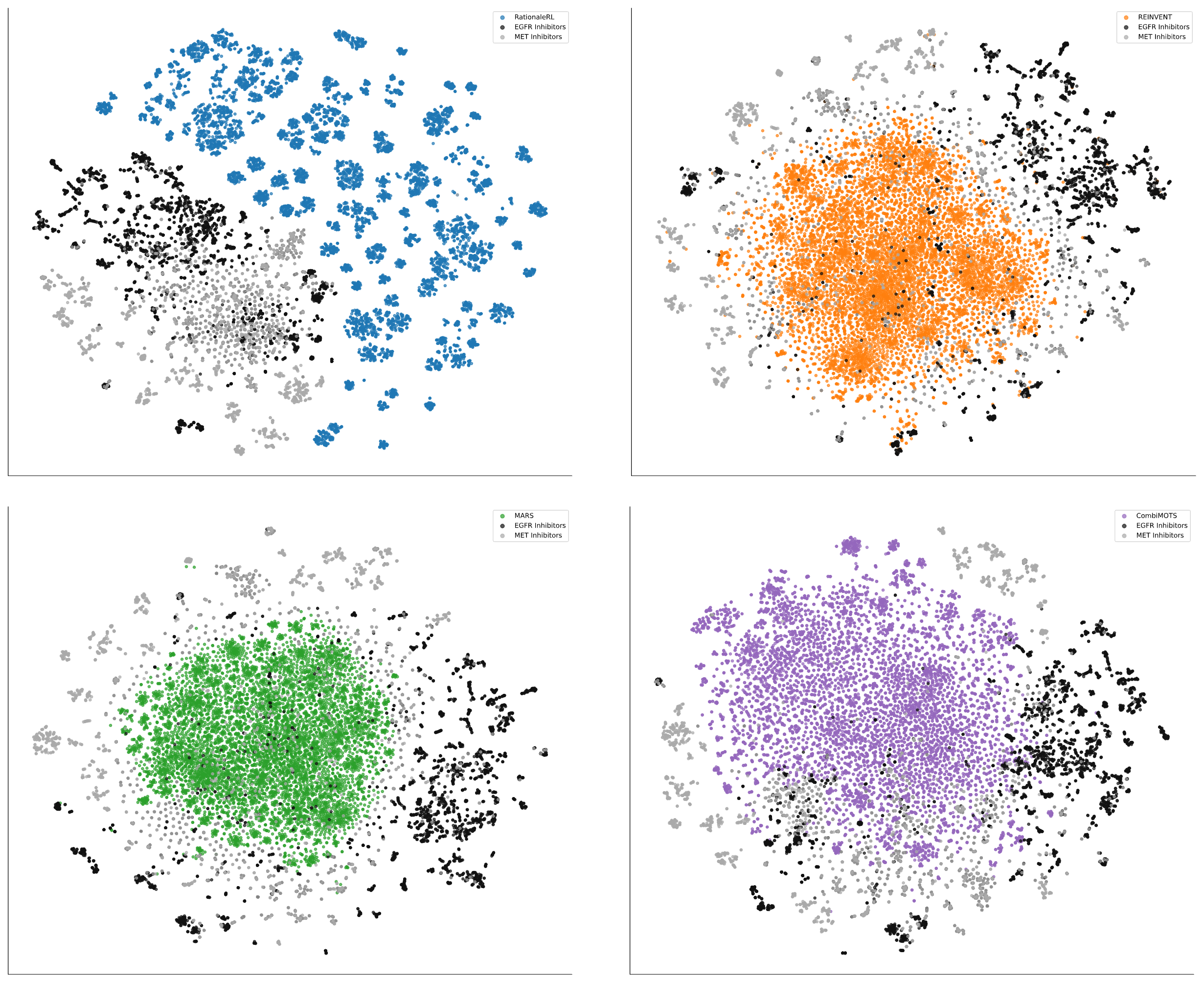}}
    \caption{t-SNE distributions on the EGFR-MET task.}
    \label{fig:tsne_egfr}
\end{figure}

\begin{figure}[p]
    \centering
    \makebox[\columnwidth][c]{\includegraphics[scale=0.8]{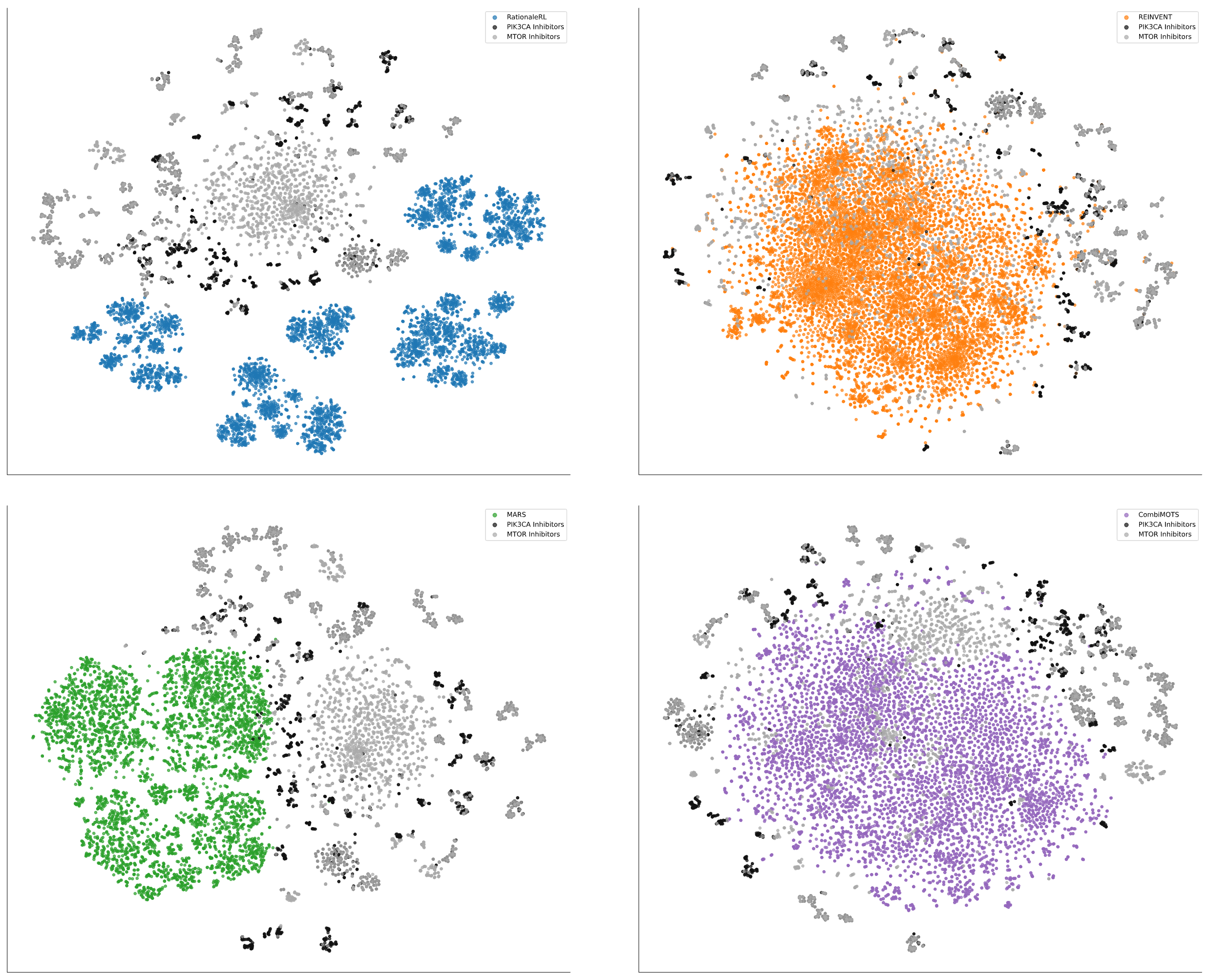}}
    \caption{t-SNE distributions on the PIK3CA-mTOR task.}
    \label{fig:tsne_pik3}
\end{figure}

\newpage
\section{Data Preparation}

\subsection{Curation of EGFR-MET and PIK3CA-mTOR}
\label{datacur}
We downloaded the ExCAPE-DB \cite{sun2017excape} from the following \href{https://zenodo.org/records/173258}{zenodo record} and retained only entries with a Tax\_ID of 9606 (Homo sapiens). 
Using RDKit, we then converted all molecular structures to canonical SMILES. 
Finally, we filtered out molecules that did not meet the following criteria—having fewer than 50 heavy atoms and containing only the elements H, B, C, N, O, F, P, S, Cl, Br, and I—and retained only those that satisfied these requirements \cite{li2018multi}.

\subsection{Curation of CDK7 and Off-targets}
\label{datacur:cdk}
We downloaded the PubChem database (as of 2024, November 25th) \cite{kim2016pubchem} to curate bioactivity data.
We filtered out molecules having conflicting labels or which did not meet the following criteria—having a molecular weight (MW) below 1,000gr/mol, having more than 12 heavy atoms, not containing metal atoms or ions and whose activity was measured in at least two proteins—and retained only those that satisfied these requirements \cite{sun2017excape}.
The final curated data comprises a total of 18,478 molecules which statistics are specified in \cref{tab:dataset_CDK}.
\begin{table}[h]
\centering
\caption{Distribution of molecules based on their activity against CDK targets.}
\vspace{0.1in}
\label{tab:dataset_CDK}
\begin{tabular}{c|rrrrrrrr}
\hline
         & CDK1 & CDK2 & CDK5 & \textbf{CDK7} & CDK9 & CDK12 & CDK13 \\ \hline
Active   & 1,358 & 2,249 & 988  & \textbf{923}  & 1,065 & 434  & 149 \\
Inactive & 733   & 15,592 & 15,805 & \textbf{530}  & 533   & 239  & 334 \\
Total    & 2,091 & 17,841 & 16,793 & \textbf{1,453} & 1,598 & 673  & 483 \\ \hline
\end{tabular}
\end{table}

\subsection{Implementation Details}
\label{implem_det}
All experiments were done using an Intel Xeon Gold 6526Y and a single NVIDIA RTX A6000 GPU.
\begin{itemize}
    \item \textbf{CombiMOTS Activity Predictors}: Following \citet{swanson2024generative} we use Chemprop (D-MPNN).
    \item \textbf{Baseline Random Forest Classifiers}: For RationaleRL, REINVENT, MARS and MolSearch, we follow \citet{li2018multi} to train random forest classifiers for each different target pair.
    The data splitting follows a random 80:20 (Train:Test) ratio, implemented with scikit-learn \cite{pedregosa2011scikit} with a number of estimators of 100.
    RDKit is used to calculate the ECFP6.
    \item \textbf{Molecular Docking}: We use QuickVina-GPU-2.1 \cite{tang2024vina} as our oracle.
    Target structures and binding centers coordinates are curated through Protein Data Bank (PDB) and ligand preparation is made with OpenBabel \cite{o2011open}.
    We list the used PDB IDs for each target: GSK3$\beta$ (6Y9S), JNK3 (4WHZ), EGFR (1M17), MET (4MXC), PIK3CA (8V8I), mTOR (3FAP), DHODH (6QU7), ROR$\gamma$t (5NTP).
    
    For the CDK7 selective generation task (\cref{broad-selectivity}), we used: CDK1 (4Y72), CDK2 (3PXF), CDK5 (7VDP), CDK7 (8PLZ), CDK9 (3TN8), CDK12 (7NXK), CDK13 (5EFQ).
    \item \textbf{Objective Scaling:} Contrary to the baselines, the implementation of vectorized properties implies (i) the non-necessity to scale the objectives, (ii) and the possibility to simultaneously maximize some objectives while minimizing others.
    For better readability in our code implementation, we still conform them to the standard [0,1] range to be maximized.
    Following \citet{lee2023drug}, we apply the following transformations on docking scores and synthetic accessibility scores:
    \[
    \widehat{DS} \gets \frac{-DS}{20}\ \text{and } \widehat{SA} \gets \frac{10-SA}{9}
    \tag{Transf.}
    \label{tag:transf}
    \]
    \item \textbf{Normalized R2-distance to the utopia point:} Following \citet{kusanda2022assessing}, we consider $n$ objectives to be maximized.
    More specifically, our main setting uses four objectives to be maximized in range 0 to 1 (after \ref{tag:transf}), leading to a final range of 0 to 2.
    The metric is defined as:
    \[
    D(y,y_{uto})=\sqrt{\sum_{i=1}^n\left(\frac{max(0,y_{uto,i}-y_i)}{y_{uto,i}}\right)^2}.
    \tag{9}
    \]
\end{itemize}

\section{Additional Experiments}
\label{additexp}

\subsection{Objective Settings of CombiMOTS}
\label{app:scalarize}
To investigate the performance of our objective design, we compare the base settings of CombiMOTS against two modified settings: a linearized version not using Pareto optimality (\ref{appendix:scalar}) and a six-objective version additionally considering QED and SA scores (\ref{appendix:sixobj}).
We also dwell into the prioritization of docking score over QED and SA in \ref{appendix:dockingscore}.

\subsubsection{Scalarization}
\label{appendix:scalar}
We empirically showcase how Pareto principles \cite{luc2008pareto} help finding optimal tradeoffs in a given search space.
We implement a scalarized version of CombiMOTS using the same four objectives (predicted dual-activities and dual-docking scores) with equal weights set to \textit{0.25} (no \textit{a priori} assumption), and run $N_{rollout}$= 10,000 iterations on the GSK3$\beta$-JNK3 task, using the exact same search space as our base setting.
\cref{fig:scalar} compares the normalized distribution of the generated molecules.

A first observation is that after 10,000 iterations are completed, the scalarized MCTS only found 1,488 dual actives, compared to 3,662 using Pareto optimization.

A second observation is that the scalarized version finds molecules with better QED score, but worse in all other metrics.
As expected, this evidence supports the fact even with aligned objectives and identical search space, the naive aggregation of multiple objectives cannot effectively find compounds with balanced properties, making these approaches unsuitable for complex multi-objective settings.
\begin{figure}[h]
    \vspace{0.1in}
    \begin{center}    
        \centerline{\includegraphics[width=\linewidth]{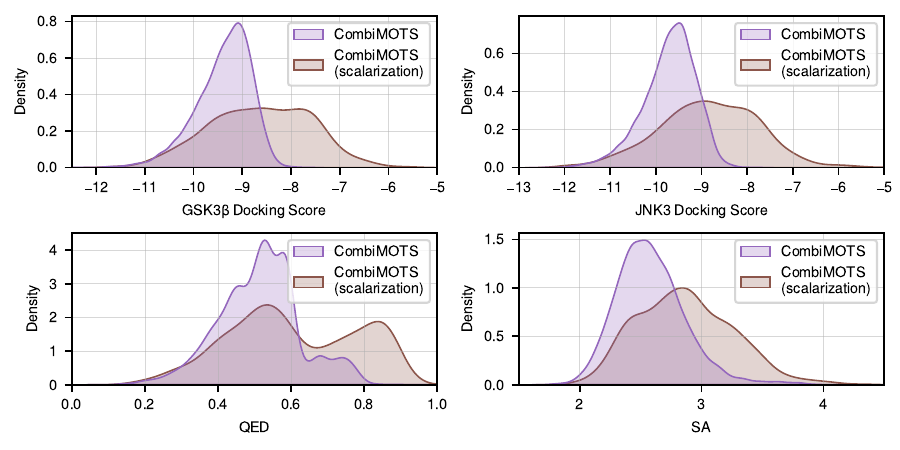}}
        \caption{Normalized distributions of CombiMOTS against its scalarized version on the GSK3$\beta$-JNK3.}
    \label{fig:scalar}
    \end{center}
    \vspace{-0.1in}
\end{figure}

\newpage
\subsubsection{Scalability - Number of objectives}
\label{appendix:sixobj}
We investigate the impact of considering more constraints in our framework.
The theoretical analysis in \cref{theoretical} discusses convergence bounds and regret of the PUCB formula, as well as the effects of dimensionality (number of objectives).
Intuitively, \cref{thm:subopt-bound} suggests that using more objectives leads to slower convergence due to larger, thus harder-to-distinguish local Pareto fronts.
We aim to approach optimal solutions within a reasonable time scale, hence the need to properly select objectives, efficient oracles and search space size.

We implement a version of CombiMOTS using the same four objectives (predicted dual-activities and dual-docking scores) as our base setting, adding QED and SA scores as two additional objectives.
We run $N_{rollout}$= 20,000 iterations on the GSK3$\beta$-JNK3 task, using the exact same search space as our base setting.
\cref{fig:6obj} compares the normalized distribution of the generated molecules.

A first observation is that after 20,000 iterations are completed, the six-objective CombiMOTS implementation found 5,748 dual actives, compared to 8,606 using only four objectives.
Compared to the scalarized approached discussed in \cref{app:scalarize}, the number of predicted dual-actives is relatively closer to our setting, hinting at the better exploration of Pareto MCTS given multiple objectives.

A second observation is that the six objective version finds molecules with similar QED and SA score distribution as our base setting, but worse in docking scores.
This behavior corroborates that adding objectives leads to more `directions' to explore and induces slower convergence toward Pareto optimal solutions.
\cref{tab:paretoranks} supports this claim.

We also note an additional computational overhead (from an average 11s/rollout to ~25s/rollout), mainly due to the QED and SA oracles, slowing individual rollouts.
\cref{app:complexity} further analyzes the complexity of our tree traversal.
\begin{figure}[h]
    \vspace{0.1in}
    \begin{center}    
        \centerline{\includegraphics[width=\linewidth]{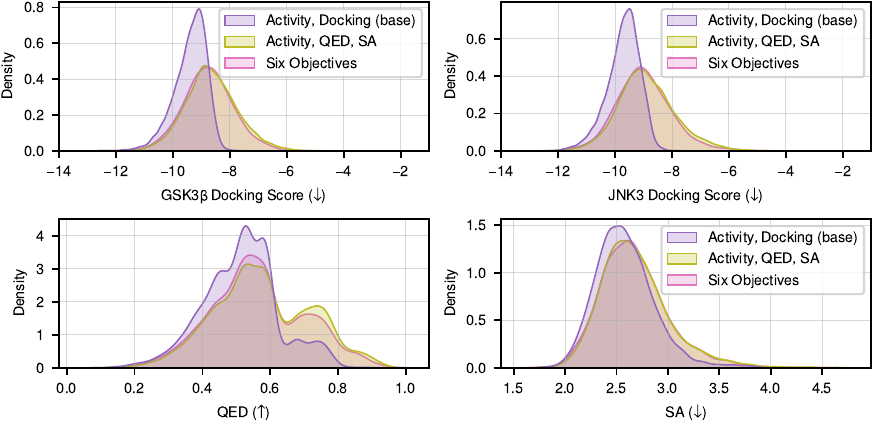}}
        \caption{Normalized distributions of CombiMOTS against a version prioritizing QED/SA over docking score, and its six objectives version on the GSK3$\beta$-JNK3 task.}
    \label{fig:6obj}
    \end{center}
    \vspace{-0.1in}
\end{figure}

\newpage
\subsubsection{Scalability - Importance of Docking Score}
\label{appendix:dockingscore}

The previous section (\ref{appendix:sixobj}) interrogates on the need to prioritize some objectives over others.
Our base setting utilizes four objectives being predicted biological activity and docking score for both targets.
We first emphasize how the structural and mechanistic relevance of docking lead to unveiling better candidate compounds, before empirically supporting this claim.

Docking score is critical in drug design as it directly evaluates ligand-target binding affinity, which is essential for therapeutic efficacy. 
Unlike QED/SA, which focus on general drug-likeness or synthetic feasibility, docking scores are structurally and energetically tied to the biological activity of the molecule. 
Studies have shown that incorporating docking scores during molecular generation leads to higher binding affinity and improved hit identification \cite{chenthamarakshan2020cogmol}, whereas QED/SA cannot ensure target specificity \cite{agu2023molecular,xu2024optimization}. 
Therefore, docking score should be prioritized in early drug discovery stages, with QED and SA used later for optimization \cite{xue2025targetsa}.

\cref{tab:qedsa} and \cref{fig:6obj} show how prioritizing QED/SA over docking scores compares to our base setting.
We observe that not optimizing docking scores leads to a significant decrease of overall quality.
Though QED is marginally improved, SA remains almost unaffected while binding affinity gets considerably worse: our base setting still leads to acceptable scores even without explicitly optimizing QED/SA.

On the GSK3$\beta$-JNK3 task, we show in \cref{tab:paretoranks} the size of the first six Pareto fronts from 10k randomly sampled products, as well as the number of Pareto fronts in three different objective settings: prioritizing docking score over QED/SA, prioritizing QED/SA over docking score and using all six objectives.
We observe that as expected, using six objectives leads to fewer Pareto fronts which are individually more populated.
Also, using docking docking score interestingly enables better differentiation between Pareto fronts, more numerous and individually less populated.

Though these results are empirical, they perfectly align with the necessity to consider both target affinity and molecular properties within the limitations of dimensionality scaling for Pareto MCTS.
\begin{table}[h]
\caption{Performance of our base settings compared to using QED, SA as objectives on the GSK3$\beta$-JNK3 task for three independent runs.}
\vspace{0.1in}
\centering
\scalebox{1}{
\begin{tabular}{l|cccc}
\toprule
\multicolumn{1}{c|}{\multirow{2}{*}{\textbf{Setting}}} & \multicolumn{4}{c}{\textbf{Metrics}} \\
 & Valid. (\%) & Uniq. (\%) & Novel. (\%) & Div. (\%) \\ \midrule
QED, SA & \textbf{100}\textsubscript{\textcolor{red!95}{\textpm0.00}} & \textbf{100}\textsubscript{\textcolor{red!95}{\textpm0.00}}& 99.92\textsubscript{\textcolor{red!95}{\textpm0.00}} & \textbf{90.32}\textsubscript{\textcolor{red!95}{\textpm0.95}} \\
\cellcolor{lightgray!30} Docking Scores (base) & \cellcolor{lightgray!30} \textbf{100}\textsubscript{\textcolor{red!95}{\textpm0.00}} & \cellcolor{lightgray!30} \textbf{100}\textsubscript{\textcolor{red!95}{\textpm0.00}} & \cellcolor{lightgray!30} \textbf{99.96}\textsubscript{\textcolor{red!95}{\textpm0.01}} & \cellcolor{lightgray!30} 88.67\textsubscript{\textcolor{red!95}{\textpm0.52}} \\
\bottomrule
\end{tabular}
}
\label{tab:qedsa}
\end{table}
\begin{table}[h]
\caption{Number of molecules across Pareto fronts for different objective settings on the GSK3$\beta$-JNK3 task.}
\vspace{0.1in}
\centering
\makebox[\textwidth][c]{ 
\renewcommand{\arraystretch}{1.2}
\begin{tabular}{l|>{\columncolor{lightgray!30}}c|c|c}
\toprule
\textbf{Pareto Rank} & \textbf{Activity + Docking} & \textbf{Activity + QED + SA} & \textbf{Six Objectives} \\
\midrule
Rank 1 & 60 & 69 & 729 \\
Rank 2 & 135 & 204 & 1,416 \\
Rank 3 & 180 & 257 & 1,809 \\
Rank 4 & 233 & 370 & 1,771 \\
Rank 5 & 263 & 456 & 1,456 \\
Rank 6 & 316 & 502 & 1,153 \\
\midrule
Number of fronts & 40 & 29 & 12 \\
\bottomrule
\end{tabular}
}
\label{tab:paretoranks}
\end{table}

\newpage
\subsection{Aligned Baselines}
\label{app:alignedbaselines}

We investigate the performance of MARS and REINVENT when using dual docking scores as objectives instead of QED and SA, with equal weights set to 0.25. 
For REINVENT, we generate 10,000 molecules using the integrated DockStream tool.
For MARS, we implement QuickVinaGPU-2.1 as an online oracle.
However, as MARS generates $N$ samples at every step of a run, the computational cost of molecular docking does not allow to generate 10,000 compounds.
We therefore only run MARS to generate 500 molecules and analyze the samples from the converged step.
\cref{fig:alignedmars+reinv} compares the normalized distribution of the returned molecules.

We observe once again the property imbalance of both MARS and REINVENT, supposedly due to their scalarized objective design.
MARS generates compounds exhibiting a better docking score, but lower QED score, while REINVENT contrarily generates compounds with worse docking score for both targets but high QED scores.
On the other hand, CombiMOTS is the only model consistently identifying compounds with high property tradeoffs.

\begin{figure}[h]
    \vspace{0.1in}
    \begin{center}    
        \centerline{\includegraphics[width=\linewidth]{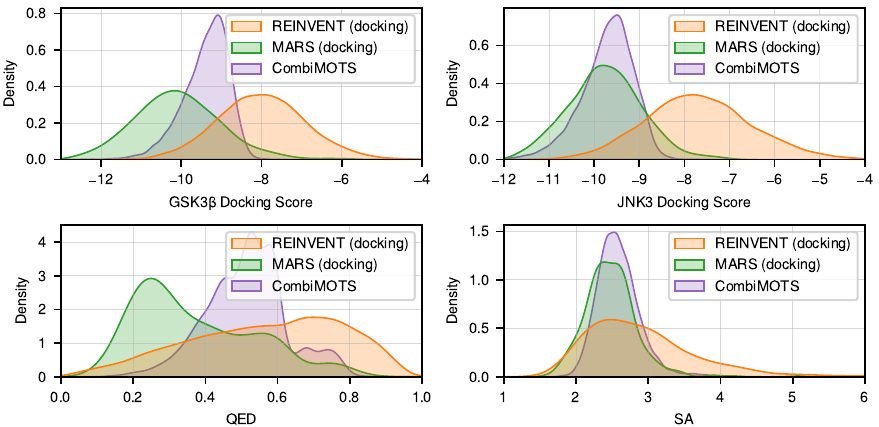}}
        \caption{Normalized distributions of CombiMOTS against aligned versions of MARS and REINVENT on the GSK3$\beta$-JNK3 task.}
    \label{fig:alignedmars+reinv}
    \end{center}
    \vspace{-0.1in}
\end{figure}

\subsection{Impact of Data Availability}
\label{app:aixfuse-data_avail}

We investigate the performance of CombiMOTS against AIxFuse \cite{chen2024structure}.
AIxFuse is a method extracting pharmacophores using docking pose protein-ligand analysis, before collaboratively use RL and Active Learning (AL) for fragment assembly.
It progressively learns how to fuse and dock the extracted fragments better.
We run the notebooks following the original GitHub guidelines and generate 10,000 molecules with 4 iterations.
\cref{fig:dens_gsk} compares performance on the GSK3$\beta$-JNK3 task, while \cref{fig:aixfuse-dhodh} compares performance on the DHODH-ROR$\gamma$t task proposed by the authors.
Specifically, the DHODH-ROR$\gamma$t target pair was not curated by only using ground-truth experimental assay results. 
Due to data scarcity, \citet{chen2024structure} make assumptions on the ChEMBL dataset to curate additional data.

On the GSK3$\beta$-JNK3 task (\cref{fig:dens_gsk}), AIxFuse's performance is reasonably fair due to the high quality of the data.
However, this does not hold true for the DHODH-ROR$\gamma$ target pair.
In particular, docking scores of the generated molecules are particularly low.
We note that in both tasks, AIxFuse exhibits a lower QED score than all baselines.
This behavior could also be explained by the linear design of AIxFuse's objective function failing to find tradeoffs.
In contrast, CombiMOTS still manages to find satisfying candidates, exhibiting once again the efficacy of our Pareto-based approach and complementary choice of objectives. 

\begin{figure}[H]
    \vspace{0.1in}
    \begin{center}    
        \centerline{\includegraphics[width=\linewidth]{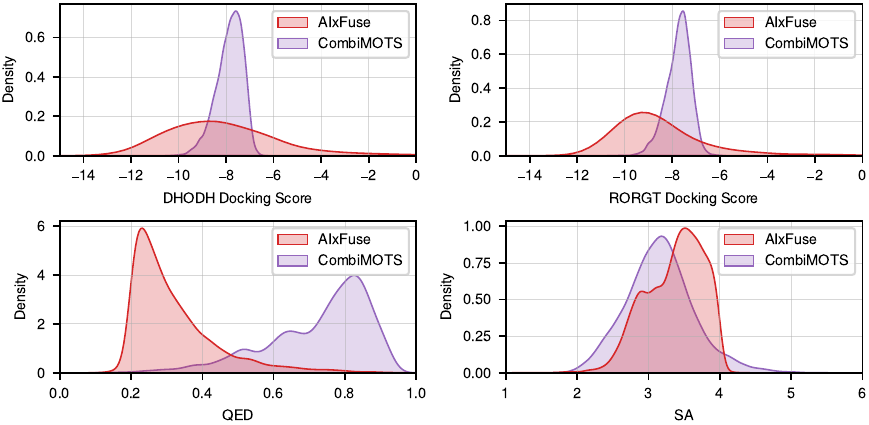}}
        \caption{Normalized distributions of the generated molecules across dual-docking scores, QED and SA scores on the DHODH-ROR$\gamma$t target pair.}
    \label{fig:aixfuse-dhodh}
    \end{center}
    \vspace{-0.1in}
\end{figure}

\newpage
\subsection{Comparison against Pareto MCTS baselines}
\label{app:mothramolsearch}

We investigate the performance of CombiMOTS against other Pareto MCTS methods: Mothra \cite{suzuki2024mothra} and MolSearch \cite{sun2022molsearch}.

Mothra is a multiobjective molecular generation system that integrates recurrent neural networks (RNN) with Pareto MCTS to simultaneously optimize multiple properties like target protein affinity, drug-likeness, and toxicity.
Compared to CombiMOTS, Mothra is natively limited to single-target RNN decoding.
We adapt Mothra to support dual-target settings by running molecular docking for both proteins, and using the returned scores during generation.

MolSearch is a Pareto MCTS-based framework that starts with existing molecules and uses a two-stage search strategy with transformation rules derived from compound libraries to gradually modify them for multi-objective optimization.
Compared to CombiMOTS, MolSearch is natively focused on dual-target optimization using RF classifiers, rather than de novo generation.
We adapt MolSearch to the EGFR-MET and PIK3CA-mTOR tasks by training RF classifiers as detailed in \cref{implem_det}.

For both methods, we perform three independent runs following the original GitHub guidelines show in \cref{tab:molsearchmothra} and \cref{fig:mothramolsearch} metrics for generated molecules. 

Mothra returns extremely few valid molecules, among which even less are dual actives: for the run displayed in \cref{fig:mothramolsearch}, 1,125/131,688 molecules are valid and 6 are dual actives.
MolSearch returns sound candidates in terms of molecular properties but is has a high computational cost: the run of \cref{fig:mothramolsearch} was interrupted after 57 hours and generated 493 molecules.
For both methods, generated compounds exhibit weak binding affinity towards GSK3$\beta$ and JNK3, with Mothra performing poorly in SA score.
\cref{app:runtime} discusses how these compare in terms of computational cost.

\begin{figure}[h]
    \vspace{0.1in}
    \begin{center}    
        \centerline{\includegraphics[width=\linewidth]{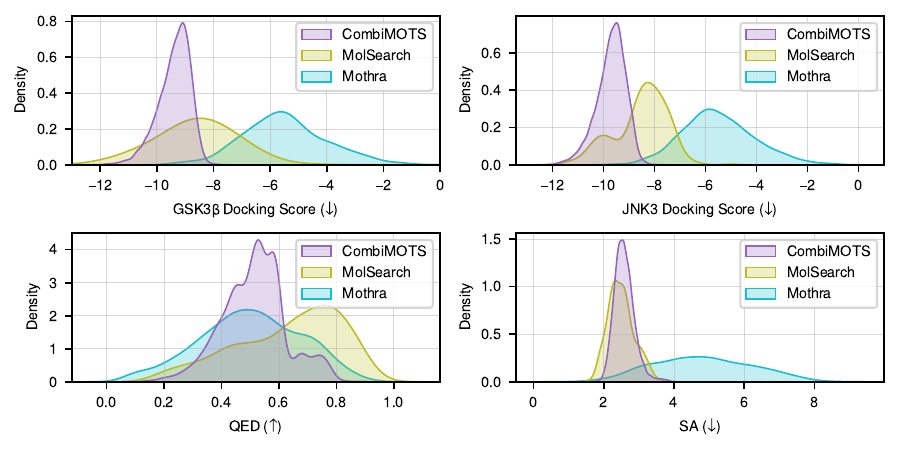}}
        \caption{Normalized distributions of CombiMOTS against Mothra and MolSearch on the GSK3$\beta$-JNK3 task.}
    \label{fig:mothramolsearch}
    \end{center}
    \vspace{-0.1in}
\end{figure}

\begin{table}[h]
\caption{Performance against additional Pareto MCTS baselines across target pairs. 
Results are obtained from three independent runs.}
\vspace{0.1in}
\centering
\scalebox{1}{
\begin{tabular}{l|c|ccccc}
\toprule
\multicolumn{1}{c|}{\multirow{2}{*}{\textbf{Target Pairs}}} & \multicolumn{1}{c|}{\multirow{2}{*}{\textbf{Models}}} & \multicolumn{5}{c}{\textbf{Metrics}} \\
 & & Valid. (\%) & Uniq. (\%) & Novel. (\%) & Div. (\%) & Act. SR (\%) \\ \midrule

\multirow{3}{*}{GSK3$\beta$-JNK3} 
& MolSearch & \textbf{100}\textsubscript{\textcolor{red!95}{\textpm0.00}} & 72.14\textsubscript{\textcolor{red!95}{\textpm5.24}}& 74.57\textsubscript{\textcolor{red!95}{\textpm15.75}} & 75.99\textsubscript{\textcolor{red!95}{\textpm1.24}} & \textbf{98.34}\textsubscript{\textcolor{red!95}{\textpm1.20}}\\
& Mothra & 0.68\textsubscript{\textcolor{red!95}{\textpm0.15}} & 99.82\textsubscript{\textcolor{red!95}{\textpm0.03}}& \textbf{100}\textsubscript{\textcolor{red!95}{\textpm0.00}}& \textbf{95.04}\textsubscript{\textcolor{red!95}{\textpm0.25}} & 0.52\textsubscript{\textcolor{red!95}{\textpm0.18}} \\
& \cellcolor{lightgray!30} CombiMOTS & \cellcolor{lightgray!30} \textbf{100}\textsubscript{\textcolor{red!95}{\textpm0.00}} & \cellcolor{lightgray!30} \textbf{100}\textsubscript{\textcolor{red!95}{\textpm0.00}} & \cellcolor{lightgray!30} 99.96\textsubscript{\textcolor{red!95}{\textpm0.01}} & \cellcolor{lightgray!30} 88.67\textsubscript{\textcolor{red!95}{\textpm0.52}} & \cellcolor{lightgray!30} - \\ \midrule

\multirow{3}{*}{EGFR-MET} 
& MolSearch & \textbf{100}\textsubscript{\textcolor{red!95}{\textpm0.00}} & 81.64\textsubscript{\textcolor{red!95}{\textpm0.46}} & 75.83\textsubscript{\textcolor{red!95}{\textpm7.17}} & 82.48\textsubscript{\textcolor{red!95}{\textpm2.13}} & \textbf{92.56}\textsubscript{\textcolor{red!95}{\textpm3.84}} \\
& Mothra & 0.92\textsubscript{\textcolor{red!95}{\textpm0.45}} & 99.70\textsubscript{\textcolor{red!95}{\textpm0.11}} & \textbf{99.99}\textsubscript{\textcolor{red!95}{\textpm0.02}} & \textbf{94.74}\textsubscript{\textcolor{red!95}{\textpm0.58}} & 52.92\textsubscript{\textcolor{red!95}{\textpm8.08}} \\
& \cellcolor{lightgray!30} CombiMOTS & \cellcolor{lightgray!30} \textbf{100}\textsubscript{\textcolor{red!95}{\textpm0.00}} & \cellcolor{lightgray!30} \textbf{100}\textsubscript{\textcolor{red!95}{\textpm0.00}} & \cellcolor{lightgray!30} 99.81\textsubscript{\textcolor{red!95}{\textpm1.62}} & \cellcolor{lightgray!30} 90.75\textsubscript{\textcolor{red!95}{\textpm0.40}} & \cellcolor{lightgray!30} -\\ \midrule

\multirow{3}{*}{PIK3CA-mTOR} 
& MolSearch & \textbf{100}\textsubscript{\textcolor{red!95}{\textpm0.00}} & 80.62\textsubscript{\textcolor{red!95}{\textpm2.76}} & \textbf{100}\textsubscript{\textcolor{red!95}{\textpm0.00}} & 83.76\textsubscript{\textcolor{red!95}{\textpm2.16}} & \textbf{89.22}\textsubscript{\textcolor{red!95}{\textpm0.87}} \\
& Mothra & 0.69\textsubscript{\textcolor{red!95}{\textpm0.34}} & 99.72\textsubscript{\textcolor{red!95}{\textpm0.01}} & \textbf{100}\textsubscript{\textcolor{red!95}{\textpm0.00}} & \textbf{95.35}\textsubscript{\textcolor{red!95}{\textpm1.06}} & 85.32\textsubscript{\textcolor{red!95}{\textpm1.82}} \\
& \cellcolor{lightgray!30} CombiMOTS & \cellcolor{lightgray!30} \textbf{100}\textsubscript{\textcolor{red!95}{\textpm0.00}} & \cellcolor{lightgray!30} \textbf{100}\textsubscript{\textcolor{red!95}{\textpm0.00}} & \cellcolor{lightgray!30} 99.99\textsubscript{\textcolor{red!95}{\textpm0.01}} & \cellcolor{lightgray!30} 90.88\textsubscript{\textcolor{red!95}{\textpm0.36}} & \cellcolor{lightgray!30} -\\

\bottomrule
\end{tabular}
}
\label{tab:molsearchmothra}
\end{table}

\begin{figure}[H]
    \vspace{-0.1in}
    \begin{center}    
        \centerline{\includegraphics[scale=0.7]{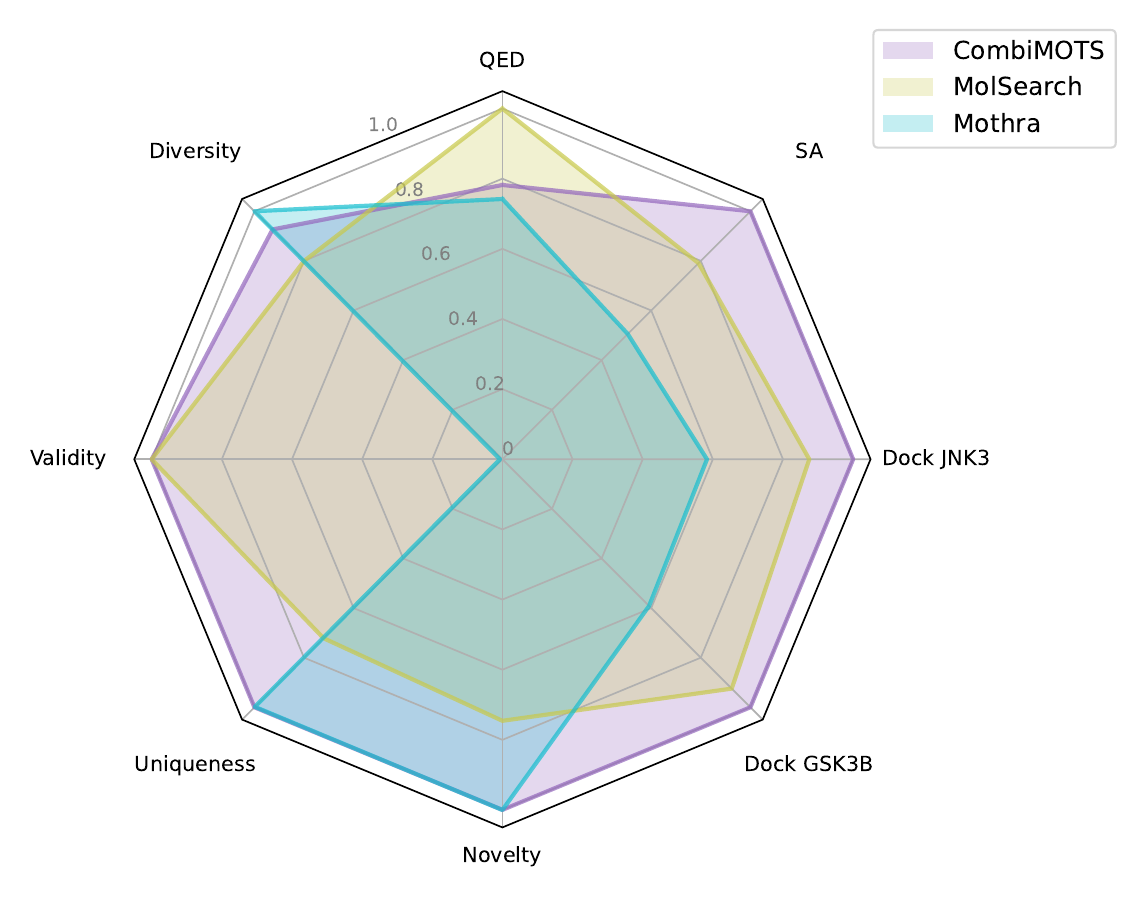}}
        \caption{Radar chart on the GSK3$\beta$-JNK3 task against Mothra and MolSearch. 
        We report average values of originality metrics and median values of quality metrics, normalized to the best score on each axis. 
        }
    \label{fig:radar_gsk_mothramolsearch}
    \end{center}
    \vspace{-0.4in}
\end{figure}

\newpage
\section{Runtime and Complexity Analysis}
\label{app:runtime}

In this section, we first report how the empirical runtime of all baseline methods compare against CombiMOTS in \ref{app:runtimeperf}, then provide in \ref{app:complexity} a complexity analysis of the Pareto MCTS algorithm implemented in CombiMOTS.

\subsection{Runtime Comparison}
\label{app:runtimeperf}
 Note that RationaleRL finetuning step takes over 2 hours per epoch if the number of rationales is large. 
 Among PMCTS methods (in gray), CombiMOTS is significantly more efficient, and just requires to train oracles if needed (below 10 minutes).
 Notably for Mothra, the original work performs 14-days experiments using two Intel Xeon E5-2680 V4 processors and four NVIDIA Tesla P100 GPUs, which represents far more resource than what CombiMOTS requires.
 For MolSearch, the original work reports an average of 0.4-1.0 hours per molecule in both HIT-MCTS and LEAD-MCTS stages using TITAN RTX GPUs (24GB), which is intractable when exploring large chemical spaces.
\begin{table}[h]
\caption{Approximate runtimes on the GSK3$\beta$-JNK3 task.}
\vspace{0.1in}
\centering
\makebox[\textwidth][c]{ 
\renewcommand{\arraystretch}{1.2}
\begin{tabular}{l|c|c|c|c|c}
\toprule
\textbf{Model} & \textbf{Pretraining} & \textbf{Finetuning} & \textbf{Sampling/Searching Time} & \textbf{Total Search Time} & \textbf{Total} \\
 &  &  & \textbf{(per iteration/rollout)} & & \textbf{Generations} \\
\midrule
RationaleRL & Yes & Yes & $\sim$10s/200 samples & $\sim$1min & 10k \\
MARS & Yes & No & $\sim$8min/step of 10,000 samples & $\sim$6hrs & 10k \\
REINVENT & Yes & Yes & below 1s/sample & $\sim$1min & 10k \\
\rowcolor{lightgray!30} Mothra & Yes & Yes & $\sim$17min/iteration & $\sim$30hrs (interrupted) & 131k, below 1\% valid \\
\rowcolor{lightgray!30} MolSearch & Yes & No & $\sim$23min/iteration & $\sim$57hrs (interrupted) & 493 \\
\rowcolor{lightgray!75} CombiMOTS & (Oracles) & No & $\sim$9s/iteration & $\sim$4days, 6hrs & 79k \\
\bottomrule
\end{tabular}
}
\label{tab:runtime}
\end{table}

\subsection{Complexity Analysis}
\label{app:complexity}

As \cref{app:runtimeperf} presents approximate runtimes, we further investigate the algorithmic complexity of CombiMOTS during the tree search.
We consider our problem setting, which details are explicitely described in \cref{proof:pbdef}.

We remind the following notations: $n_{children}$ the number of children from a parent node $node$, $B$ the set of building blocks, $R$ the set of reactions, $Oracles(.)$ the ensemble of property predictors.

The initialization step runs in $\mathcal{O}(1)$. Then, during a single rollout:
\begin{outline}
    \1 If the selected node is a product ($\mathcal{O}(1)$):
        \2 Returning the product property vector runs in $\Theta(Oracles(product))$ as it is never precomputed.
    \1 Else:
        \2 The expansion node creates $n_{children}$ in $\mathcal{O}((node.molecules+Card(B))*Card(R))$;
        \2 We compute node properties upon creation:
            \3 If the scores are precomputed (using a lookup hash table), assign them to $child\_node$ in $\mathcal{O}(1)$;
            \3 Else, compute them in $\mathcal{O}(Oracles(child\_node))$;
        \2 The Pareto selection step computes the ParetoUCB vector of all children nodes, running in $n_{children}$;
        \2 The backpropagation step runs in $\mathcal{O}(1)$ as the maximum depth of the synthesis in bounded by the maximum allowed synthetic steps in $R$ (using Enamine REAL Space, one to two steps).
\end{outline}

Therefore, a single tree traversal is bounded by the slowest operation between $Oracles(product)$ and $n_{children}*Oracles(child\_node)$.
Moreover, if scores of building blocks are precomputed, this bound is further reduced to the slowest operation between $Oracles(product)$ and $O(n_{children})$.

\newpage
\section{Identified Interactions of Case Study \ref{casestud:compounds1-2}}
\label{app:casestudy1}

\paragraph{Compound 1} Hydrophobic interactions with \textit{VAL70, ALA83, LEU132, GLN185} (GSK3$\beta$) and \textit{ILE70, LEU144, LEU148, MET149} (JNK3).
Hydrogen bonds with \textit{ASN64, VAL135} (GSK3$\beta$) and \textit{LYS93, GLN155, LEU206} (JNK3)
\paragraph{Compound 2} Hydrogen bonds with \textit{ASP133, VAL135, ASN186} (GSK3$\beta$) and \textit{MET149, GLN155} (JNK3).
Hydrophobic contacts involving \textit{ILE62, VAL70, TYR134} (GSK3$\beta$) and \textit{ILE70, VAL78} (JNK3).

\section{Selected compounds of Case Study \ref{studycase:tox}}
\begin{figure}[h]
    \vspace{0.1in}
    \begin{center}    
        \centerline{\includegraphics[width=\linewidth]{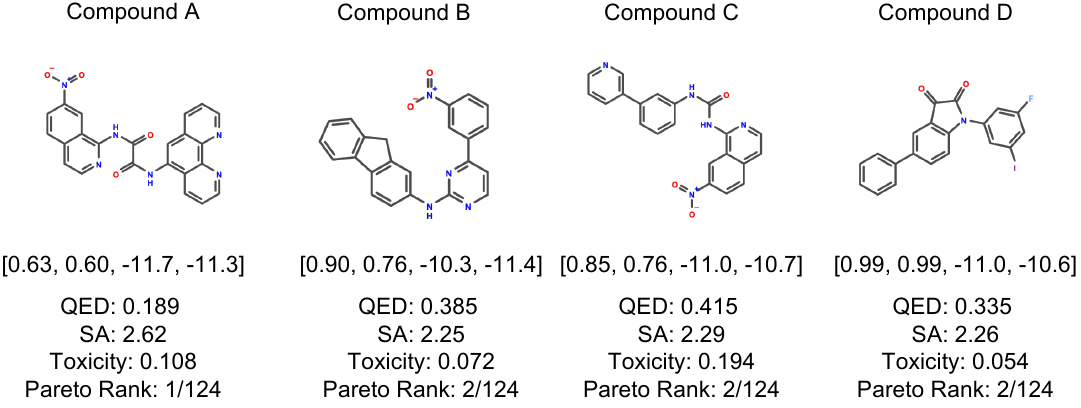}}
        \caption{The four selected compounds for toxicity prediction.
        The vector values account for GSK3$\beta$-JNK3 predicted activities, and GSK3$\beta$-JNK3 predicted docking scores, respectively.}
    \label{fig:toxselect}
    \end{center}
    \vspace{-0.1in}
\end{figure}

\section{Generalizability - Broader Applications}
\label{broaderapplications}

We discuss in this section how CombiMOTS could be adapted to other multiobjective tasks.
A key advantage of using Pareto-based tree structure and nodes is to flexibly design property vectors to be either maximized or minimized.
To exemplify this applicability, we conduct a preliminary case study for Cyclin-dependent kinase 7 (CDK7) selectivity in \cref{broad-selectivity}, an ablation study of toxicity optimization (extending the observations made in \cref{studycase:tox}) in \cref{broad-toxopt} and briefly debate protein-protein interaction modulator generation in \cref{broad-PPI}. 

\subsection{Selective Molecular Generation}
\label{broad-selectivity}
Selective molecular generation aims to identify compounds that are active only towards a given target.
The intricacy of this task lies in avoiding off-targets often related to the main target protein, limiting the effectiveness of drugs or even risking adverse effects.

Prior works \cite{fisher2005secrets,sava2020cdk7} established CDK7 as a regulator of both cell cycle and global transcription, supporting its inhibition as a potential way to target deregulated cell proliferation.
This motivated further research to identify off-targets, ultimately attempting to find selective inhibitors more relevant to polypharmacology for complex diseases \cite{olson2019development,constantin2023cdk7}.
We follow these studies to identify candidates selectively inhibit CDK7 while being inactive to six off-target kinases: CDK1, CDK2, CDK5, CDK9, CDK12, CDK13.
\cref{datacur:cdk} contains details on the data curation.

In parallel, efforts were also made in targeted transcription regulation in cancer. 
Notably, \citet{kwiatkowski2014targeting} successfully discovered and characterized a covalent CDK7 inhibitor which achieved greater selectivity through its ability to target a cysteine residue outside of the canonical kinase domain.

Thanks to these findings, we adapt CombiMOTS through two initiatives:
\begin{itemize}
    \item During the tree traversal, selectivity can be translated as a maximization objective towards CDK7 activity while minimizing all off-targets' activity.
    We train random forest predictors for each kinase using the curated data in a 8:1:1 training/validation/test ratio.
    \item The search space reduction step can be applied to preliminarily select building blocks susceptible to react into cystein-targeting products.
    Inspired by \citet{huang2022covalent}, we identify four substructures from known inhibitors leading to 81 similar initial building blocks corresponding to over 2.3M possible products.
\end{itemize}

We propose the following pipeline:
\begin{enumerate}
    \item From the four initial substructures, identify similar building blocks;
    \item Run a 200k rollout tree search, only guided by CDK7 (maximize), CDK2 and CDK9 (minimize) bioactivity predictors.
    Note that we only select three objectives to converge faster towards Pareto optimal solutions;
    \item We (re)-perform \textit{post-hoc} bioactivity predictions for all kinases, and only retain molecules with predicted values above $0.5$ for CDK7 and below $0.5$ for off-targets;
    \item Optionally perform \textit{post-hoc} docking simulation for added practical information;
    \item Apply industrial and medicinal filters to obtain a final list of candidates.
    We retain molecules with the following criteria: $-0.4\leq LogP \leq 5.6$ \cite{ghose1999knowledge}, $250\leq MW \leq 500$, less than 5 Hydrogen Bond Donors (HBD), less than 10 Hydrogen Bond Acceptors (HBA) (Lipinski's Rule of Five), less than 10 rotatable bonds, $50 \leq TPSA \leq 140 $\AA\ (Veber's rule) and does not contain imine group \cite{kalgutkar2019designing} chiral center or stereocenter. 
\end{enumerate}

\cref{tab:selectivity} summarizes the key described steps and \cref{fig:selopt} shows examples of final candidates.

\begin{table}[h]
\centering
\caption{Schematic workflow of our case study on CDK7 specific inhibitor search.}
\vspace{0.1in}
\renewcommand{\arraystretch}{1.2}
\begin{tabular}{l|l}
\toprule
\textbf{Step} & \textbf{Statistic} \\
\midrule
Initialization & (2 known inhibitor scaffolds, 2 warheads) \\
Search Space Reduction & 81 building blocks, 2.3M possible products \\
Tree Search (200k rollouts) & 348k generations \\
Filter - Selective Activity & 3,491 candidates \\
\textit{Post-hoc} docking simulation & - \\
\rowcolor{lightgray!30} Filter - (from medicinal chemistry) & 1,554 final candidates \\
\bottomrule
\end{tabular}
\label{tab:selectivity}
\end{table}

\begin{figure}[H]
    \vspace{0.1in}
    \begin{center}    
        \centerline{\includegraphics[width=\linewidth]{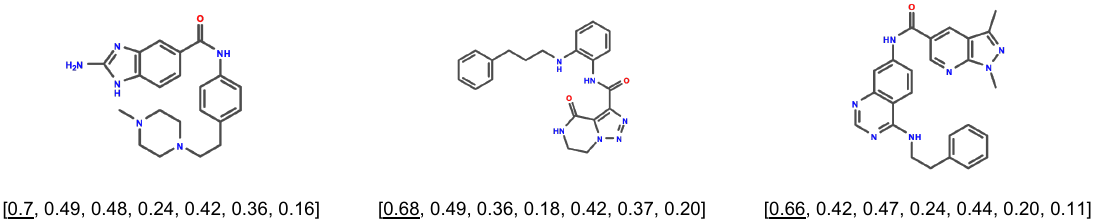}}
        \caption{Example of found molecules active to CDK7 and inactive to off-targets. Vectors are predicted activities towards CDK7, CDK1, CDK2, CDK5, CDK9, CDK12, CDK13 respectively.}
    \label{fig:selopt}
    \end{center}
    \vspace{-0.1in}
\end{figure}

\subsection{Toxicity Optimization}
\label{broad-toxopt}
We run 50k rollouts on the GSK3$\beta$-JNK3 task with an added objective being the predicted toxicity (to be minimized).
We use the same Chemprop predictor trained in \cref{studycase:tox} during the tree search.
As expected, we observe in \cref{fig:toxopt} that overall generations are less toxic and present a notable increase in density of molecules with very low toxicity (below $0.2$).

\begin{figure}[h]
    \centering
    \makebox[\columnwidth][c]{\includegraphics[scale=1]{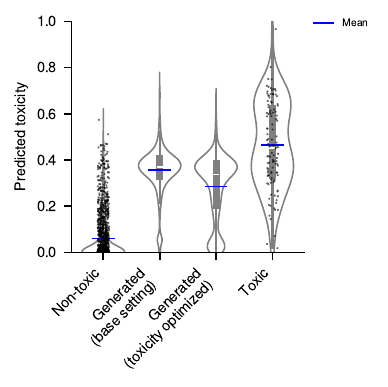}}
    \caption{Chemprop predictions of molecules generated while aiming to minimize toxicity.}
    \label{fig:toxopt}
\end{figure}

\subsection{Protein-Protein Interaction Modulator Generation}
\label{broad-PPI}

In this paper, we mainly tackle the task of identifying dual inhibitors by targeting two proteins related to the same disease.
An other field of application differing from traditional protein-ligand binding, would be the area of protein-protein interaction.
Though both interactions rely on similar physical forces, PPIs exhibit distinct physicochemical properties, such as generally larger interface areas or more complex geometry.
In particular, developing small molecules tailored to PPI interfaces remains a challenge.
Few recent works \cite{wang2024interface,sun2024target} implement interface-aware generative frameworks by considering hot-spot residues or known PPI complexes, enriching the landscape of SBDD.

We could envision CombiMOTS to be adapted to this task by either setting appropriate biological targets, or by aiming to mimic pharmacophoric features of hot-spot residues through the use of structure-oriented scorers during the tree traversal.
We leave such investigations to future works.

\section{Retrosynthesis Study}
\label{app:retrosynth-stud}

We quantitatively assess the utility of using industrial databases like Enamine REAL as the initial search space of CombiMOTS by discussing the pitfalls of synthesizability metrics in drug design.
As generative models construct or optimize molecular structures, the general purpose of such metrics is to quantify the feasability of experimentally synthesizing the proposed candidates.
Synthesizability metrics can roughly be distributed into three categories:
\begin{itemize}
    \item \textbf{Rule-based} metrics (e.g., SAscore) make predictions out of deterministic features (motifs, molecular mass, etc.).
    Though they are intuitive and computationally efficient methods, they often fail to capture real-world constraints.
    \item \textbf{Retrosynthetic-based} approaches are more interpretable as they recursively break down a target compound into industry-available precursors. 
    However, they often lead to consequent computational overheads and are limited to pre-defined reference stocks.
    For example, Aizynthfinder \cite{genheden2020aizynthfinder} is limited to the ZINC database.
    Our approach is based upon the opposite idea: from a set of known precursors, we attempt to find Pareto optimal reaction products.
    This enables us to flexibly explore any chemical space at disposal, instead of iteratively testing known synthetic routes for a given target.
    \item \textbf{Learning-based} methods (e.g., RAscore \cite{thakkar2021retrosynthetic}, BR-SAscore \cite{chen2024estimating}) have the advantage of being context-adaptable but become inherently dependent on the quality, format and availability of the training data.
\end{itemize}

We first propose to highlight the inconsistencies across these metrics by evaluating the generations from CombiMOTS, RationaleRL, MARS and REINVENT, as well as Enamine building blocks and COMPAS-3 \cite{wahab2024compas} as representatives of easy-to-synthesize and hard-to-synthesize molecules, respectively.
Notably, the COMPAS-3 dataset exclusively contains (poly)-cyclic compounds.
We plot and report the molecular distributions in \cref{fig:synth}.

As expected, we observe several inconsistencies:
COMPAS-3 is deemed synthesizable by SA and RAscore but not by BR-SAscore.
Enamine excels on RAscore but performs similarly or worse on other metrics.
MARS unrealistically outperforms Enamine on SA/BR-SAscore.
CombiMOTS outperforms baselines on RAscore but aligns with/worsens on others.

Additionally, we attempt to find Aizynthfinder routes for the final candidates found in the preliminary case study for CDK7 selective inhibitors (\cref{broad-selectivity}).
Because CombiMOTS leverages Enamine through their in-house validated synthetic procedures, we would normally expect all candidates to find be solvable, i.e. find a route for each compound.
However, among 1,544 molecules, 162 (10.42\%) are not solved by Aizynthfinder—due to its ZINC dependency.
\cref{fig:fail1,fig:fail2} show examples of failure cases, as well as the route proposed by CombiMOTS.

These inconsistencies motivate the use of Enamine REAL Space over mere metric evaluation: though its true synthesizability still cannot be perfectly predicted, its empirical high synthesis success rate is more aligned with real-world feasability.

\begin{figure}[h]
    \centering
    \makebox[\columnwidth][c]{\includegraphics[scale=.8]{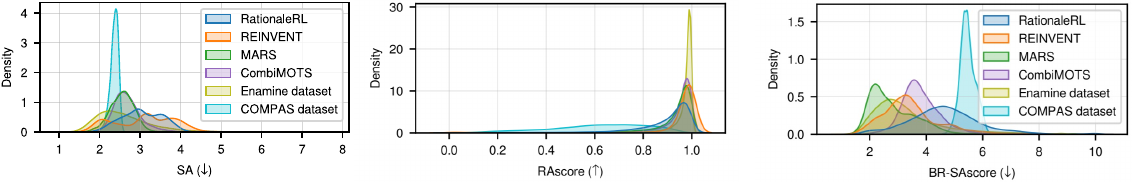}}
    \caption{Plot distributions across synthesizability metrics. For generative models, we evaluate candidates from the GSK3$\beta$-JNK3 task.}
    \label{fig:synth}
\end{figure}

\begin{figure}[h]
    \centering
    \makebox[\columnwidth][c]{\includegraphics[scale=0.7]{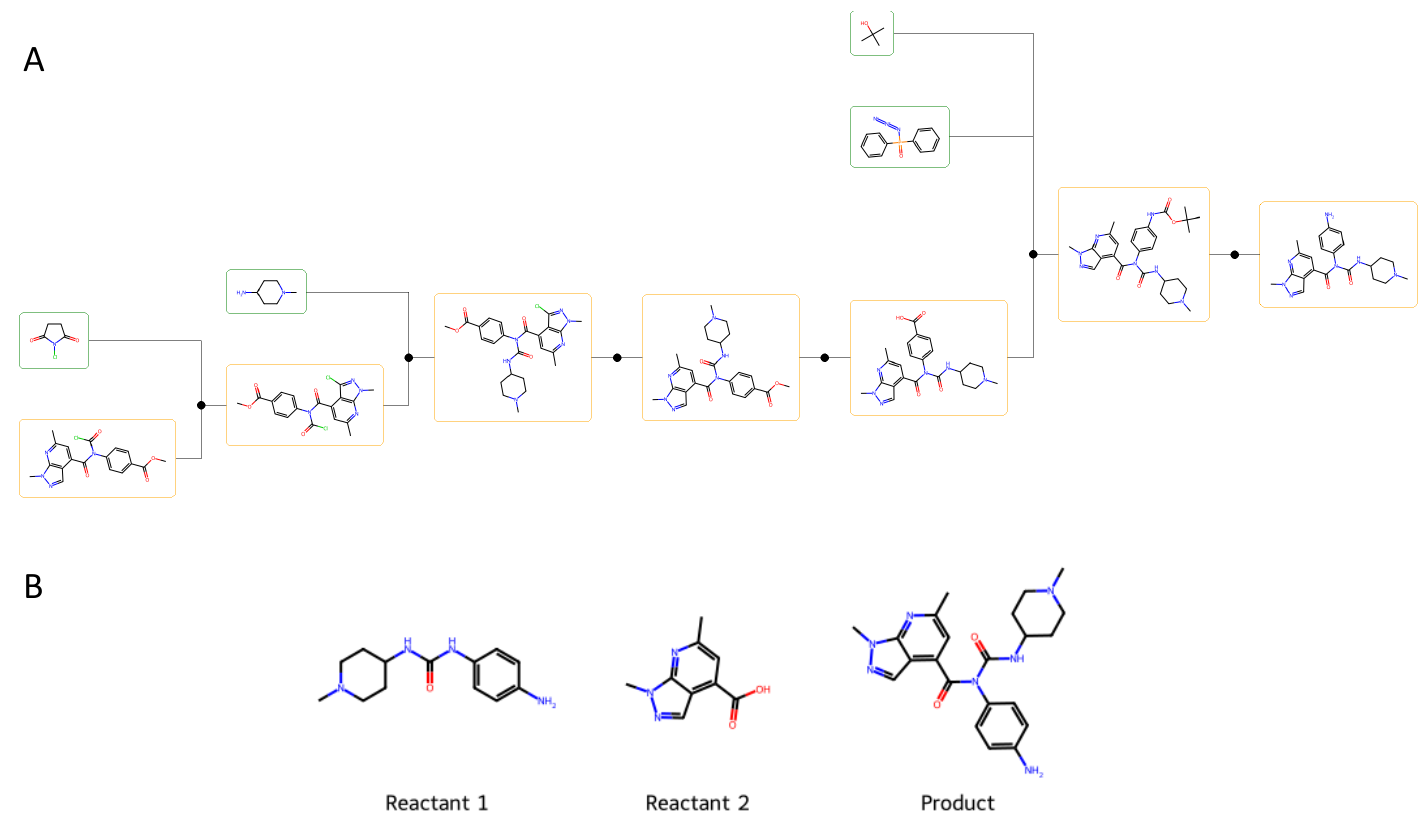}}
    \caption{A) Unsolved routed of Aizynthfinder and B) Enamine building blocks found by CombiMOTS leading to the product. (Case 1)}
    \label{fig:fail1}
\end{figure}

\begin{figure}[H]
    \centering
    \makebox[\columnwidth][c]{\includegraphics[scale=0.7]{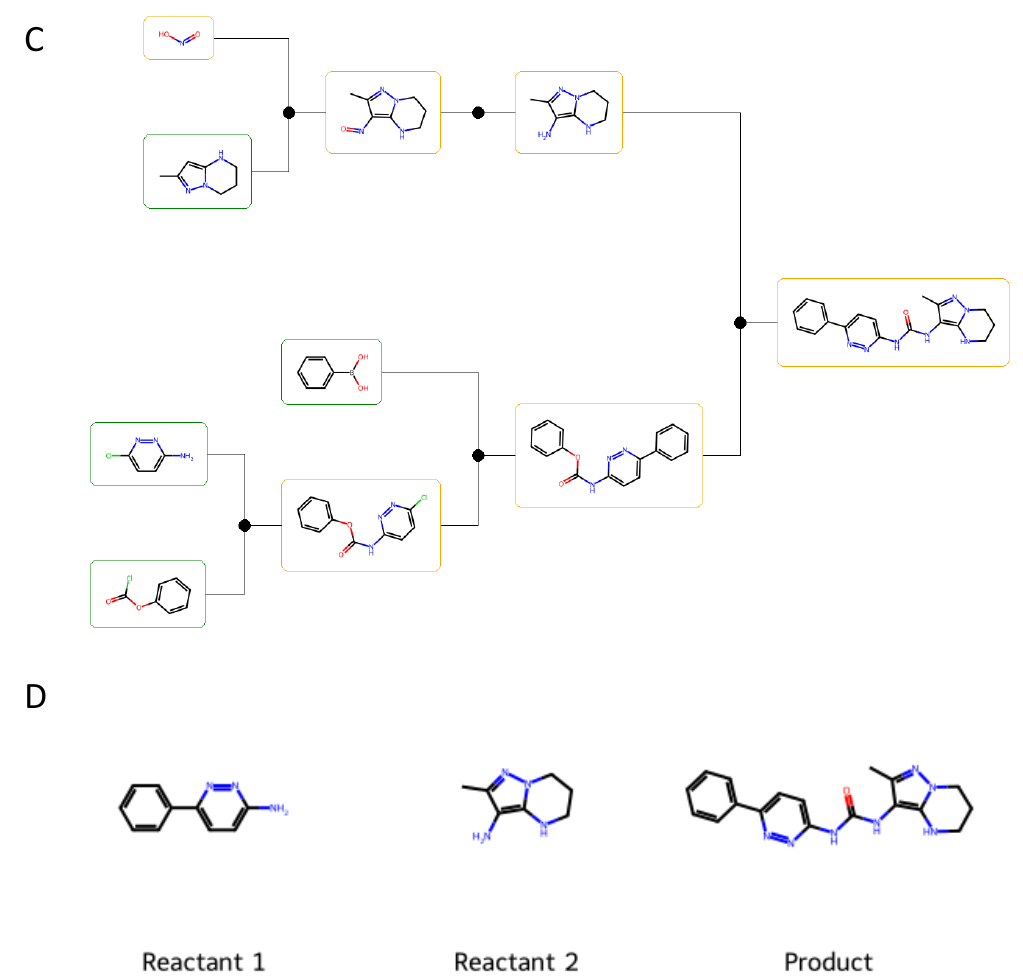}}
    \caption{C) Unsolved routed of Aizynthfinder and D) Enamine building blocks found by CombiMOTS leading to the product. (Case 2)}
    \label{fig:fail2}
\end{figure}

\newpage
\section{Detailed Implementation of Pareto MCTS in CombiMOTS}

\begin{algorithm*}[!hp]
    \caption{Complete Pareto MCTS for CombiMOTS}
    \label{alg:test}

    \KwIn{Synthesis tree $T$, Number of rollouts $n_{rollout}$}
    \KwData{Property predictors $Oracles$, building blocks $B$, reactions $R$, exploration weight $C$}
    
    \SetAlgoNoEnd  
    
    \BlankLine
    \tcp{Main Monte Carlo Tree Search Function}
    \SetKwProg{Function}{Function}{:}{}
    \Function{ParetoMCTS($n_{rollout}$)}{
        Initialize the tree $T.root$\;
        
        \For{$i \gets 1$ \textbf{to} $n_{rollout}$}{
            \textit{Rollout} ({$T.root$})\;
        }
        \Return All visited nodes in $T$ that are reaction products\;
    }

    \BlankLine
    \tcp{Rollout Function}
    \Function{Rollout($node$)}{
        \tikzmk{A}\If{$node$ is a reaction product}{
            \Return $Oracles(node)$ \tcp*[h]{Terminal node = Backpropagation starts}\;
        }\tikzmk{B} \boxit{mypink}
        
        \tikzmk{A}
        $child\_nodes \gets \textit{Expansion(node)}$ \tcp*[h]{Expand leaf node}
        \tikzmk{B}\boxitsing{myblue}\;

        \tikzmk{A}
        $parent\_visits \gets \sum(\text{visits of all children})$
        \tikzmk{B} \boxitsel{mygreen} \;
        
        $next\_node \gets \textit{ParetoSelection}(child\_nodes, parent\_visits)$ \tcp*[h]{Select from the local Pareto front} \;

        \tikzmk{A}
        $\vec{v} \gets \textit{Rollout}(next\_node) \tcp*[h]{Recursive unroll = Simulation}$
        \tikzmk{B} \boxitroll{myorange}

        \tikzmk{A}
        \If{$next\_node$ is a reaction product}{
            $\vec{v} \gets \textit{element-wise-max}(\vec{v}, Oracles(next\_node))$\;
        }
        Update $next\_node$ statistics (exploit score, visit count)\;
        
        \Return Reward vector $\vec{v}$\; \tcp*[h]{Backpropagation}
    }
    \tikzmk{B} \boxitt{mypink}

    \BlankLine
    \tcp{Expansion Function}
    \Function{Expansion($node$)}{
        Initialize $E \gets \emptyset$\;
        
        \ForEach{$reaction \in R$}{
            \If{$node.molecules$ compatible with $reaction$}{
                Add one node per reaction product to $E$\;
            }
        }
        
        \ForEach{building block $b \in B$}{
            \If{$\exists$ reaction $\in R$ compatible with $b$}{
                Add a node to $E$ with $(node.molecules \oplus b)$\; \tcp*[h]{$\oplus$: tuple concatenation}
            }
        }
        \Return $E$\;
    }

    \BlankLine
    \tcp{Pareto Selection Function}
    \Function{ParetoSelection($child\_nodes, parent\_visits$)}{
        \ForEach{$node \in child\_nodes$}{
            Compute $ParetoUCB = \frac{\vec{W}}{N} + C\times \vec{Oracles}(node) \times \sqrt{\frac{\ln(D) + 4\times \ln(1 + parent\_visits)}{1 + N}}$\;
        }
        $LocalParetoFront \gets$ non-dominated child nodes\;
        
        \Return uniformly sampled node from $LocalParetoFront$\;
    }

\end{algorithm*}

\newpage

\newpage
\section{Theoretical Analysis}
\label{theoretical}

In this section, we remind, adapt to our implementation and provide further analysis of the previous claims and demonstrations from \citet{kocsis2006bandit,kollat2008new,chen2021pareto}.

\subsection{Problem Definition - Node Selection}
\label{proof:pbdef}
In a Pareto synthesis tree, consider a $D$ objective optimization problem.
A node $v$ has a $D$-dimensional property vector $Ora(v)$ obtained from $Oracles$.
Suppose a parent node $v_p$ with $K$ child nodes $(v_k)_{k=1}^{K}$ \textbf{all selected at least once} (initialization).
After $n \in \mathbb{N}^+$ rollouts, the $k^{th}$ child has been selected $n_k \in [1..n]$ times (such as $n=\sum_{k=1}^{K}n_k$ and a $D$-dimensional random reward vector $X_{k,n_k}$ is backpropagated along the selection path.
For all $(k,n) \in [1..K] \times \mathbb{N^+}$, we note $(X_{k,t})_{t=1}^{n_k}$ the reward vectors of child node $v_k$ upon consecutive selections, all drawn from an unknown distribution with an expectation vector $\mu_k$.

At any step, the selection problem consists in choosing which child node $v_k$ to select based on an user-defined policy.

The goal of such policy is to minimize the number of selection of any sub-optimal node: we demonstrate that the following policy complies to this goal.
\paragraph{Policy (Pareto Selection):}
\begin{enumerate}
    \item Compute the PUCB formula for each child node $v_k$:
    \[
    {PUCB}(k,n) = \overline{X}_{k,n_k} + C\times Ora(v_k)\sqrt{\frac{\ln(D)+4\times\ln(1+n)}{1+n_k}}\text{ , where $C\in \mathbb{R^{+*}}$ is an exploration constant}.
    \tag{10}
    \label{formula:pucb-proof}
    \]
    \item Randomly select a child node among the Pareto front built upon the PUCB formula.
\end{enumerate}

\begin{definition} (Most Dominant Optimal Node)
\label{def:mostdomnode}

Following \citet{chen2021pareto}, the demonstration uses the concept of $\epsilon$-dominance of multi-objective optimization \cite{kollat2008new}.
Suppose a node $v_k$ dominated by a set of nodes $\mathcal{V}$ such as $\forall v_{k'} \in \mathcal{V}, v_{k'}\succ v_k$.

The dominance hypothesis implies that for any dominating node $v_{k'} \in\mathcal{V},$ $v_{k'}$ is better than $v_k$ across at least one dimension.
Therefore, there trivially exists a unique minimum positive constant $\epsilon_{k'}$ defined by:
    \[
    \epsilon_{k'} = min\{ \epsilon \ \vert \ \exists d \in [1..D], \mu_{k,d}'+\epsilon > \mu_{k,d} \}
    \]
The \textit{most dominant optimal node} of $v_k$, denoted $v_{k^*}$, is the node from $\mathcal{V}$ where:
    \[
    k^* = \underset{k'}{\arg max }\ \epsilon_{k'}
    \]
Conceptually, the most optimal node is the ``farthest away" from $v_k$ in the sense that its minimum distance with $v_k$ across any dimension is maximized.

From now on, we index such a node with a star (*).
\end{definition}

\begin{assumption} (Convergence of Expected Average Rewards).
\label{ass:converg-expectedreward}

The tree search problem allows possible drift over time of the expected reward of nodes.
For any child node $v_{k'}$, the expectation of its average rewards $\mathbb{E}[\overline{X}_{k,n_k}]$ converges pointwise to a limit $\mu_k$:
\[
\mu_k = \lim_{n_k \to \infty} \mathbb{E}[\overline{X}_{k,n_k}].
\tag{11}
\]
Given a sub-optimal node $v_k$ and its most dominant optimal node $v_{k^*}$ from the Pareto optimal set $\mathcal{P^*}$, we define $\Delta_k = \mu^* - \mu_k$ as the difference of their rewards' limits.
\end{assumption}

\begin{assumption} (Adapted Process and Convergence of Residual Drift).
\label{ass:adapted-process}

Let $(k,d) \in [1..K] \times [1..D]$ and $\{\mathcal{F}_{k,t,d}\}_t$ be a filtration such that $\{X_{k,t,d}\}_t$ is $\mathcal{F}_{k,t,d}$-adapted and $X_{k,t,d}$ is conditionally independent of $\mathcal{F}_{k,t+1,d}, \mathcal{F}_{k,t+2,d}, \dots$ given $\mathcal{F}_{k,t-1,d}$.  

In other terms, the resulting ``non-anticipative" stochastic process accounts for information only available at a given time.

For notation simplification, we define $\mu_{k,n_k} = \mathbb{E}[\overline{X}_{k,n_k}]$ and the residual drift $\delta_{k,n_k} = \mu_{k,n_k} - \mu_k$. 

By \cref{ass:converg-expectedreward}, $\lim_{n_k \to \infty} \delta_{k,n_k} = 0$. 

Thus, by definition of a limit, $\forall \xi > 0$, $\exists N_0(\xi)$ such that $\forall n_k \geq N_0(\xi)$, $\forall d \in \{1,2,\dots,D\}$, $|\delta_{k,n_k,d}| \leq \xi\frac{ \Delta_{k,d}}{2}$.
\end{assumption}

\subsection{Theorems and Proofs}
The following theorems and lemmas support the design of the used Pareto UCB formula in our search algorithm.

In particular, each proof is followed by an analytic interpretation on the theorems' significance to CombiMOTS - notably in terms of scalability.

\begin{theorem} (Logarithmic Bound of Sub-optimal Node Selection)
\label{thm:subopt-bound}

Consider \cref{ass:converg-expectedreward} and \cref{ass:adapted-process} satisfied in our problem setting. 
Let $v_k$ be a sub-optimal node (i.e. $v_k \notin \mathcal{P*}$) selected $T_k(n)$ times during the first $n$ steps.

Then $\mathbb{E}[n_k]$ is logarithmically bounded:
\[
\forall \xi>0, \exists N_0(\xi) \in \mathbb{N^*}, \text{such that}  \ (n \geq N_0(\xi)) \implies \mathbb{E}[T_k(n)] \leq C\times \frac{16\times ln(n) + 4\times ln(D)}{(1-\xi)^2 \times (\underset{k,d}{min}\Delta_{k,d})^2} + N_0(\xi) + O(1).
\tag{12}
\]
\end{theorem}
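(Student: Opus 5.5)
The plan is to adapt the classical UCB1/UCT argument of \citet{kocsis2006bandit}, in its Pareto form from \citet{chen2021pareto}, to the PUCB index of \cref{formula:pucb-proof}.

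\textbf{Step 1: reduce to a single dominating competitor.} Fix the sub-optimal node $v_k$ and its most dominant optimal node $v_{k^*}$ (Definition~\ref{def:mostdomnode}). Write $c_{t,s} = C\,Ora\sqrt{(\ln D + 4\ln(1+t))/(1+s)}$ for the exploration bonus. The policy selects $v_k$ only if $v_k$ lies on the local PUCB Pareto front. In particular, $PUCB(k^*,t)$ must not dominate $PUCB(k,t)$, so there is some coordinate $d$ with $PUCB_d(k,t) \ge PUCB_d(k^*,t)$.

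Following the standard decomposition, I will write
\[
T_k(n) \le \ell + \sum_{t=1}^{n} \mathds{1}\{v_k \text{ selected at } t,\ T_k(t-1)\ge \ell\},
\]
with $\ell$ to be chosen later. The second term is bounded by summing, over $s^*$ and $s_k \ge \ell$, the indicator of the event $\exists d:\ \overline{X}_{k,s_k,d} + c_{t,s_k,d} \ge \overline{X}_{k^*,s^*,d} + c_{t,s^*,d}$.

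\textbf{Step 2: the three-event split, coordinatewise.} For each fixed $d$, the event above implies one of three things:
\begin{itemize}
\item[(a)] $\overline{X}_{k^*,s^*,d} \le \mu^*_d - c_{t,s^*,d}$;
\item[(b)] $\overline{X}_{k,s_k,d} \ge \mu_{k,d} + c_{t,s_k,d}$;
\item[(c)] $\mu^*_d < \mu_{k,d} + 2c_{t,s_k,d}$.
\end{itemize}
By \cref{ass:adapted-process}, for $s \ge N_0(\xi)$ the drift satisfies $|\delta_{\cdot,s,d}| \le \xi\Delta_{k,d}/2$. This lets me apply an Azuma--Hoeffding bound for the adapted process, centered at $\mu_{\cdot,s}$ instead of $\mu_\cdot$, with the drift absorbed into a $(1-\xi)$ factor. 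The $4\ln(1+t)$ inside the bonus then makes the probabilities of (a) and (b) each of order $t^{-4}/D$ per coordinate. A union bound over the $D$ coordinates (this is where the $\ln D$ term pays for itself), followed by summation over $s^*, s_k \le t$ and over $t$, gives a convergent series, i.e.\ the $O(1)$ term.

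\textbf{Step 3: choose $\ell$ to kill (c).} Event (c) is impossible once
\[
2c_{t,s_k,d} \le (1-\xi)\Delta_{k,d} \quad\text{for all } d.
\]
It suffices that
\[
1+s_k \ge \frac{4C^2\,(\ln D + 4\ln(1+n))}{(1-\xi)^2 (\min_{k,d}\Delta_{k,d})^2},
\]
using $Ora \le 1$ after (Transf.). Taking $\ell$ equal to the ceiling of this quantity, with $\ln(1+n)\le \ln n + O(1)$, reproduces the leading term $C\,(16\ln n + 4\ln D)/((1-\xi)^2\min\Delta^2)$. The constant here is loose and the factors of $C$ versus $C^2$ are bookkept by folding them into the stated $C$. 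Finally, the first $N_0(\xi)$ selections, before the drift is controlled, are charged directly, giving the additive $N_0(\xi)$.

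\textbf{Main obstacle.} The delicate step is Step 2: the concentration inequality has to hold for a non-stationary, merely adapted reward sequence, and the ``exists a coordinate'' selection condition has to be converted into per-coordinate tail events. I would first try the Kocsis--Szepesv\'ari martingale Hoeffding bound coordinatewise, and take the union bound over $d$ only afterwards. A secondary issue is that $\Delta_k = \mu^* - \mu_k$ may be nonpositive on some coordinates, since dominance only guarantees a strict gap in one. If so, I would restrict the argument to the coordinate achieving $\epsilon_{k^*}$ and interpret $\min_{k,d}\Delta_{k,d}$ over positive gaps.
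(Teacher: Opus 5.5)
Your main line is the paper's own proof. You decompose $T_k(n)$ as Kocsis--Szepesv\'ari do, compare only against the most dominant optimal node $v_{k^*}$, and split into three events: Hoeffding plus a union bound over the $D$ coordinates handles two of them, $\ell$ is chosen to exclude the third, and $N_0(\xi)$ is charged for the drift. Your coordinatewise events (a)--(c) are the paper's vector events (A)--(C) unpacked, and they avoid its $s_k\gg 1$ approximations. The only structural difference is that you work with the PUCB bonus directly, whereas the paper proves the bound for $\sqrt{(\ln D+4\ln t)/(2s)}$ and rescales at the end.

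\textbf{Your fallback for degenerate gaps is wrong.} The gaps cannot be negative, since $v_{k^*}\succ v_k$, but they can vanish. If $\Delta_{k,d}=0$ on some coordinate, $v_k$ escapes domination by $v_{k^*}$ whenever it wins that coordinate. Since $c_{t,s_k,d}>0$, event (c) on that coordinate cannot be excluded by any choice of $\ell$. Restricting to a single coordinate does not help, because the selection event is a union over $d$. In fact no logarithmic bound with the minimum taken over positive gaps can hold. Take two children with $\mu^*=(0.5,0.6)$ and $\mu_k=(0.5,0.3)$. Whenever $T_k(t)\ll T_{k^*}(t)$, the larger bonus of $v_k$ on the tied coordinate keeps it on the local front, so it is drawn about half the time and $T_k(n)=o(n)$ is impossible. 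No repair is needed, however. The statement divides by $\min_{k,d}\Delta_{k,d}$, so it is vacuous in the degenerate case, and your Steps 1--3 already cover the case $\Delta_{k,d}>0$ for all $d$.

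\textbf{Step 2 rests on a hypothesis you do not state.} For rewards in $[0,1]$, Hoeffding gives a tail $\exp(-2s\,c_{t,s,d}^2)$. This is at most $t^{-4}/D$ only if $2s\,C^2\,Ora_d(v)^2/(1+s)\ge 1$, i.e.\ essentially $C\,Ora_d(v)\ge 1$ for every child and every coordinate. The bound $Ora\le 1$ you invoke points the wrong way here. When $C\,Ora_d(v)$ is small (a low predicted activity or a weak normalized docking score), the tails decay only like a small power of $t$, and the triple sum no longer converges. The same requirement, combined with $Ora\le 1$, forces $C\ge 1$. So the factor $C^2$ produced by your Step 3 is at least $C$, and it cannot be folded into the exploration constant of the statement unless that $C$ is read as a generic constant. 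The paper glosses over both points in its ``without loss of generality'' remark and in its final rescaling from $8\ln n+2\ln D$ to $C(16\ln n+4\ln D)$, so this looseness is shared. Because you work with the PUCB bonus directly, you should state $C\min_{v,d}Ora_d(v)\ge 1$ as an explicit assumption and carry the $C^2$ honestly, rather than asserting the $t^{-4}/D$ tails outright.
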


\begin{proof} (To \cref{thm:subopt-bound})
\label{proof:thm1}

Let variable $I_t$ be the index of the selected child node at decision step $t$ and $\mathds{1}\{\cdot\}$ be an indicator function.

We first demonstrate the theorem with the bias term of \citet{chen2021pareto} using $c_{t,s} = \sqrt{\frac{\ln(D) + 4 \times \ln(t)}{2\times s}}$.

\begin{remark}
\textit{Without loss of generality, our bias term in Eq. (\ref{formula:pucb-proof}) is $c_{t,s} = C \times Ora(v_k) \sqrt{\frac{\ln(D) + 4 \times \ln(1+t)}{1+s}}$ and preserves the theoretical guarantees.
In fact, the additional exploration term in our work controls the convergence speed, thus its name.}
\end{remark}

For any sub-optimal node $v_k$ and any $l \in \mathbb{Z}^{+*}$, we can bound $T_{k}(n)$ (the number of times node $k$ is selected):

\[
T_k(n) = \underbrace{\sum_{t=1}^{K} \mathds{1}\{I_t = k\}}_{\geq 1} + \sum_{t=K+1}^{n} \mathds{1}\{I_t = k\}
\leq l + \sum_{t=K+1}^{n} \mathds{1}\{I_t = k, T_k(t-1) \geq l\}.
\tag{13}
\]

By the Pareto selection policy, node $k$ is selected at time $t$ only if its PUCB value is not dominated by any other node on the Pareto front. Particularly, it must not be dominated by the most dominant optimal node $v_{k^*}$. Therefore:

\[
T_k(n) \leq l + \sum_{t=K+1}^{n}\mathds{1}\{\underbrace{\overline{X}^*_{T_{k^*}(t-1)} + c_{t-1,T_k^*(t-1)}}_{\text{PUCB of most dominant node}} \not\succ \overline{X}_{k,T_k(t-1)} + c_{t-1,T_k(t-1)}, T_k(t-1) \geq l\}.
\tag{14}
\]

We can reindex the sum with $s \gets T_k^*(t-1)$ and $s_k \gets T_k(t-1)$ to get:

\[
T_k(n) \leq l + \sum_{t=1}^{\infty}\sum_{s=1}^{t-1}\sum_{s_k=l}^{t-1} \mathds{1}\{\overline{X}^*_s + c_{t,s} \not\succ \overline{X}_{k,s_k} + c_{t,s_k}\}.
\tag{15}
\label{eq:boundtkn}
\]

The condition $\overline{X}^*_s + c_{t,s} \not\succ \overline{X}_{k,s_k} + c_{t,s_k}$ implies that at least one of the following must hold:

\[
\overline{X}^*_s \not\succeq \mu^*_s - c_{t,s} 
\tag{A}
\]
\[
\overline{X}_{k,s_k} \not\preceq \mu_{k,s_k} - c_{t,s} 
\tag{B}
\]
\[
\mu^*_s \not\succ \mu_{k,s} + 2c_{t,s_k}
\tag{C}
\]

We prove this by contradiction. Let's assume (A) and (C) both false, i.e.:
\[
\overline{X}^*_s \succeq \mu^*_s - c_{t,s} \text{ and }
\mu^*_s \succ \mu_{k,s} + 2c_{t,s_k}.
\tag{16}
\]

Both can be combined:

\[
\overline{X}^*_s \succ \mu_{k,s} + 2c_{t,s_k} - c_{t,s} \bigk \mu_{k,s} + c_{t,s_k}.
\tag{17}
\]

Then if (B) is also false, i.e. $\overline{X}_{k,s_k} \preceq \mu_{k,s_k} - c_{t,s} \bigk \mu_{k,s_k} - c_{t,s_k} $, we would have:
\begin{align*}
\overline{X}^*_s & \succ \mu_{k,s} + c_{t,s_k} \\
                 & \succ \overline{X}_{k,s_k} + 2c_{t,s_k} \\
\implies\quad \overline{X}^*_s + c_{t,s} 
              & \succ \overline{X}_{k,s_k} + 2c_{t,s_k} + c_{t,s} \\
\implies\quad \overline{X}^*_s + c_{t,s}
              & \succ \overline{X}_{k,s_k} + c_{t,s_k}. \tag{18}
\end{align*}

This would contradict our original condition to select node $k$. Therefore, at least one of (A), (B), or (C) must hold true.

We bound the probabilities of events (A) and (B) using Chernoff-Hoeffding Bound and Union Bound:

\begin{align*}
\mathbb{P}(\overline{X}^*_s \not\succeq \mu^*_s - c_{t,s}) &= \mathbb{P}((\overline{X}^*_{s,1} < \mu^*_{s,1} - c_{t,s}) \vee \cdots \vee (\overline{X}^*_{s,D} < \mu^*_{s,D} - c_{t,s})) \\
&\leq \sum_{d=1}^{D} \mathbb{P}(\overline{X}^*_{s,d} < \mu^*_{s,d} - c_{t,s}) \quad \text{(Union Bound)} \\
\mathbb{P}(\overline{X}^*_s \not\succeq \mu^*_s - c_{t,s}) &\leq \sum_{d=1}^{D} \frac{1}{D}t^{-4} = t^{-4}. \quad \text{(Chernoff-Hoeffding Bound)}
\tag{19}
\end{align*}

Similarly, $\mathbb{P}(\overline{X}_{k,s_k} \not\preceq \mu_{k,s_k} - c_{t,s}) \leq t^{-4}$.

Now, let us define:
\begin{align*}
l = \max\left\{\left\lceil \frac{8 \ln t + 2 \ln D}{(1-\xi)^2 \min_{k,d} \Delta^2_{k,d}} \right\rceil, N_0(\xi) \right\} \tag{20}
\end{align*}

With this choice of $l$, we ensure that for $s_k \geq l$, condition (C) becomes false. This is because:

1. For $s_k \geq N_0(\xi)$, we have $|\delta_{k,s_k,d}| \leq \xi\frac{ \Delta_{k,d}}{2}$ for all $d \in \{1,2,\ldots,D\}$.

2. For $s_k \geq \frac{8 \ln t + 2 \ln D}{(1-\xi)^2 \min_{k,d} \Delta^2_{k,d}}$, the confidence term $c_{t,s_k}$ becomes sufficiently small:
\begin{align*}
c_{t,s_k} &= \sqrt{\frac{4 \ln t + \ln D}{2s_k}} \\
&\leq \sqrt{\frac{4 \ln t + \ln D}{2 \times \frac{8 \ln t + 2 \ln D}{(1-\xi)^2 \min_{k,d} \Delta^2_{k,d}}}} \\
&= \frac{(1-\xi)\sqrt{\min_{k,d} \Delta^2_{k,d}}}{\sqrt{2}} \times \sqrt{\frac{4 \ln t + \ln D}{8 \ln t + 2 \ln D}} \\
c_{t,s_k} &\leq \frac{(1-\xi)\min_{k,d} \Delta_{k,d}}{2}. \tag{21}
\end{align*}

Therefore, for any dimension $d$:
\begin{align*}
\mu^*_{s,d} - \mu_{k,s,d} &= \mu^*_d - \mu_{k,d} + \delta^*_{s,d} - \delta_{k,s_k,d} \\
&\geq \Delta_{k,d} - |\delta^*_{s,d}| - |\delta_{k,s_k,d}| \\
&\geq \Delta_{k,d} - \xi\frac{ \Delta_{k,d}}{2} - \xi\frac{ \Delta_{k,d}}{2} \\
&= (1-\xi)\Delta_{k,d} \\
\mu^*_{s,d} - \mu_{k,s,d}&\geq 2c_{t,s_k}. \tag{22}
\end{align*}

This implies $\mu^*_s \succ \mu_{k,s} + 2c_{t,s_k}$, making condition (C) false.

Therefore, injecting above results into the bound of \cref{eq:boundtkn} and taking expectations of both sides, we get:
\begin{align*}
\mathbb{E}[T_k(n)] &\leq l + \mathbb{E}\left[\sum_{t=1}^{\infty}\sum_{s=1}^{t-1}\sum_{s_k=l}^{t-1} \mathds{1}\{\overline{X}^*_s + c_{t,s} \not\succ \overline{X}_{k,s_k} + c_{t,s_k}\}\right] \\
&\leq l + \sum_{t=1}^{\infty}\sum_{s=1}^{t-1}\sum_{s_k=l}^{t-1} \mathbb{P}(\overline{X}^*_s \not\succeq \mu^*_s - c_{t,s}) + \mathbb{P}(\overline{X}_{k,s_k} \not\preceq \mu_{k,s_k} - c_{t,s}) \\
&\leq l + \sum_{t=1}^{\infty}\sum_{s=1}^{t-1}\sum_{s_k=l}^{t-1} 2t^{-4} \\
\mathbb{E}[T_k(n)]&\leq \underbrace{\frac{8 \ln n + 2 \ln D}{(1-\xi)^2 \min_{d} \Delta^2_{k,d}} + N_0(\xi)}_{\text{from }l} + \underbrace{1 + \frac{\pi^2}{3}}_{\text{from triple sum}}. \tag{23}
\end{align*}

In fact, the triple sum can be shown to converge to $1+\frac{\pi^2}{3}$ using Euler's approach to the Basel problem.
Thus, we obtain:

\begin{align*}
\mathbb{E}[n_k] \leq \frac{8 \ln n + 2 \ln D}{(1-\xi)^2 \min_{d} \Delta^2_{k,d}} + N_0(\xi) + 1 + \frac{\pi^2}{3}. \tag{24}
\end{align*}

Since our PUCB formula in Eq. (\ref{formula:pucb-proof}) uses an exploration constant $C$ and a denominator $(1+n)$ instead of $(2n)$ like \citet{chen2021pareto}, the bound becomes:

\begin{align*}
\mathbb{E}[n_k] \leq C\times \frac{16 \ln n + 4 \ln D}{(1-\xi)^2 \min_{d} \Delta^2_{k,d}} + N_0(\xi) + \mathcal{O}(1). \tag{25}
\end{align*}

which completes the proof.
\end{proof}

\paragraph{Interpretation}
From the obtained $Bound$, we make three key observations:
\begin{enumerate}
    \item $Bound\ \propto \ ln(n)$: This logarithmic factor implies that selecting sub-optimal nodes becomes less and less frequent over time.
    \item $Bound\ \propto \ ln(D)$: Increasing the number of objectives also increases regret by a logarithmic factor.
    \item $Bound\ \propto \ \frac{1}{\min_{d} \Delta^2_{k,d}}$: The regret becomes lower when sub-optimal nodes are more distinguishable from the optimal Pareto front.
    Intuitively, the harder it is to distinguish optimal and sub-optimal nodes, the slower convergence speed becomes. 
    Directly impacting the second point, sound settings should carefully select search objectives to enable Pareto convergence within a reasonable time.
    
\end{enumerate}

\begin{lemma} (Existence of a Lower Bound on Node Selection Count)

There exists a strictly positive constant $\mathbb{\rho \in R^{+*}}$, s.t. $\forall (n,k) \in [1\dots K] \times \mathbb{N^+}, T_k(n) \geq \lceil\rho log(n)\rceil$. 
\label{lemma:lower-bound}
\end{lemma}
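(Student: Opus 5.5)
The plan is to adapt the lower-bound argument of \citet{kocsis2006bandit} (their Theorem~3) to the Pareto setting. The point to exploit is that the exploration term of Eq.~(\ref{formula:pucb-proof}) grows like $\sqrt{\ln n}$ for a child that is not visited, while the exploitation term stays bounded. I will rely on two facts from the setting of \cref{proof:pbdef}. First, after (\ref{tag:transf}) every coordinate of $\overline{X}_{k,n_k}$ lies in $[0,1]$. Second, every oracle vector is bounded below, $Ora_{\min}:=\min_{k,d} Ora(v_k)_d>0$. The second fact is not stated explicitly, and I will add it as a mild assumption, since a zero coordinate would cancel exploration in that direction.

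\textbf{Step 1 (domination through the bonus gap).} Write $b(n,s)=\sqrt{(\ln D+4\ln(1+n))/(1+s)}$. If two children $v_j,v_k$ satisfy
\[
C\,Ora_{\min}\, b(n,T_k(n)) > 1 + C\,Ora_{\max}\, b(n,T_j(n)),
\]
then $PUCB(k,n)$ strictly exceeds $PUCB(j,n)$ in every coordinate. Hence $v_j$ is dominated and cannot belong to the local Pareto front. Solving this inequality shows that it holds as soon as $1+T_k(n)\le \alpha\ln n$ and $1+T_j(n)\ge \beta\ln n$, for suitable constants $0<\alpha<\beta$ depending only on $C$, $Ora_{\min}$ and $Ora_{\max}$, and for $n\ge n_1$.

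\textbf{Step 2 (heavily visited children are never selected while a light child exists).} Let $S_n=\{j: 1+T_j(n)<\alpha\ln n\}$. By Step~1, whenever $S_n\neq\emptyset$, every child with $1+T_j\ge\beta\ln n$ is dominated by any member of $S_n$. Consequently, at such steps the uniformly sampled node from the front has count below $\beta\ln n$. This is what replaces the deterministic argmax of UCB1: we do not need to know which front member is drawn, only that heavy nodes are excluded.

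\textbf{Step 3 (counting).} Suppose, for contradiction, that $1+T_k(n)<\rho\ln n$ with $\rho\le\alpha$ small. Monotonicity of the thresholds gives $k\in S_t$ for all $t\in[\sqrt n,n]$, using $\ln t\ge\frac12\ln n$ and halving $\rho$. Over this window Step~2 confines every selection to children with count below $\beta\ln n$. The $K$ children can absorb at most $K\beta\ln n$ such selections, yet the window contains $n-\sqrt n$ steps. This is a contradiction once $n\ge n_2$. For $n<n_2$, initialization gives $T_k(n)\ge1$, so choosing $\rho\le 1/\ln n_2$ makes $\lceil\rho\log n\rceil\le 1$ hold trivially. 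Taking $\rho$ as the minimum over all these constraints gives the lemma for all $(n,k)$.

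The main obstacle I expect is Step~2. The selection is random among the non-dominated children, so unlike UCB1 one cannot argue that the neglected child is picked outright. The pigeonhole accounting in Step~3 is designed to avoid this: it only uses that dominated (heavy) children are excluded. A second point to check is that Step~3 handles the time window correctly, since $S_t$ is defined with the time-varying threshold $\alpha\ln t$. If that bookkeeping fails, I would fall back to a dyadic-epoch version, applying the counting argument on $[2^m,2^{m+1}]$ and proving the bound at epoch endpoints before interpolating.
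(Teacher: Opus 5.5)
Your argument is correct, and it takes a genuinely different route from the paper. In fact the paper gives no argument for this lemma: it only states that its correctness ``is demonstrated by'' \citet{kollat2008new}. In effect it relies on a UCT-type lower bound in the spirit of \citet{kocsis2006bandit}, without checking that the bound survives the two changes CombiMOTS makes. Those changes are a bonus multiplied coordinate-wise by $Ora(v_k)$, and uniform sampling from the local Pareto front instead of an $\arg\max$.

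Your proof supplies exactly that check. Step~1 converts the unbounded growth of a light child's bonus into strict domination of every heavy child. The pigeonhole count in Step~3 then replaces the ``neglected arm eventually becomes the maximizer'' step of the scalar proof. As a result the random draw inside the front never matters, and the bound holds on every sample path, as the lemma's deterministic statement requires.

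The citation buys brevity. Your route buys three things:
\begin{itemize}
\item a verification for the actual PUCB rule;
\item an explicit constant: your $\rho$ scales like $(C\,Ora_{\min})^2$ when oracle values are small, and this $\rho$ is what enters the exponent of \cref{thm:converg-failureproba};
\item a demonstration that your positivity hypothesis is necessary rather than merely convenient. A child with $Ora(v_k)=0$ and rewards identically $0$, placed next to a child with strictly positive rewards, is strictly dominated at every step. Then $T_k(n)=1$ forever and the lemma fails.
\end{itemize}

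The window bookkeeping you worried about is fine as written: $T_k$ is nondecreasing and $\ln t\ge\frac12\ln n$ on $[\sqrt n,n]$. So the dyadic fallback is unnecessary.

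Three small items remain to close the argument.
\begin{itemize}
\item For $n\ge n_2$ the contradiction yields $1+T_k(n)\ge\rho\ln n$. Together with $T_k(n)\ge 1$ this gives $T_k(n)\ge\frac{\rho}{2}\ln n$, hence $T_k(n)\ge\lceil\frac{\rho}{2}\ln n\rceil$ by integrality. The final constant is then $\min\{\rho/2,\,1/\ln n_2\}$.
\item The statement must be read for $n\ge K$, since before initialization is complete some $T_k(n)=0$.
\item After the (Transf.) rescaling the docking coordinates need not lie exactly in $[0,1]$. Step~1 only needs a bounded reward range, with the constant $1$ replaced by the width of that range.
\end{itemize}
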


\begin{lemma} (Tail Inequality)

\label{lemma:Tail-Inequality}
\end{lemma}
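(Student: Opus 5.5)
Since the statement of \cref{lemma:Tail-Inequality} is left implicit, I will prove the vectorized form of the tail inequality of \citet{kocsis2006bandit}, in the Pareto version of \citet{chen2021pareto}. The claim is as follows. Let $v_p$ be a parent node with $K$ children and $n$ total selections, and let $\overline{X}_n = \frac{1}{n}\sum_{k=1}^K T_k(n)\overline{X}_{k,T_k(n)}$ be its averaged reward vector. Fix $\delta>0$ and set $\Delta_n = 9\sqrt{2n\ln(2D/\delta)}$. Then for $n$ large enough, coordinatewise,
\[
\mathbb{P}\big(n\overline{X}_n \not\preceq n\mathbb{E}[\overline{X}_n] + \Delta_n\big) \leq \delta
\quad\text{and}\quad
\mathbb{P}\big(n\overline{X}_n \not\succeq n\mathbb{E}[\overline{X}_n] - \Delta_n\big) \leq \delta .
\]
The $2D$ inside the logarithm is chosen so that a union bound over the $D$ coordinates costs exactly $\delta$, mirroring the $\ln D$ term in \cref{formula:pucb-proof}.

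\textbf{Step 1: reduce to one coordinate.} Apply the union bound over $d\in[1..D]$. It then suffices to show that each coordinate deviates by more than $\Delta_n$ with probability at most $\delta/D$, which is a scalar statement with $\ln(2D/\delta)$ in place of $\ln(2/\delta)$.

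\textbf{Step 2: split the deviation.} Fix a coordinate $d$ and write $n\overline{X}_{n,d} - n\mathbb{E}[\overline{X}_{n,d}]$ as a sum of three terms:
\begin{itemize}
\item the martingale-type deviations $\sum_k \sum_{t\le T_k(n)}(X_{k,t,d}-\mu_{k,t,d})$;
\item the drift terms $\sum_k T_k(n)\,\delta_{k,T_k(n),d}$;
\item the fluctuation of the allocation $(T_k(n))_k$ around its expectation.
\end{itemize}

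\textbf{Step 3: bound each term.}
\begin{itemize}
\item \emph{Martingale term.} Rewards lie in $[0,1]$ after (Transf.), and \cref{ass:adapted-process} makes the increments adapted and non-anticipative. Hoeffding--Azuma therefore gives a tail $\exp(-2\epsilon^2/n)$, which produces the $\sqrt{2n\ln(2D/\delta)}$ scale.
\item \emph{Drift term.} By \cref{ass:adapted-process}, for $T_k(n)\ge N_0(\xi)$ the drift is at most $\xi\Delta_{k,d}/2$. \Cref{lemma:lower-bound} guarantees $T_k(n)\ge\rho\log n$, so every child eventually passes $N_0(\xi)$. The remaining drift is then $o(n)$, and for $n$ large it is absorbed into a constant fraction of $\Delta_n$.
\item \emph{Allocation term.} Only suboptimal children contribute a mean gap. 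By \cref{thm:subopt-bound}, $\mathbb{E}[T_k(n)] = O(\ln n)$ for each suboptimal $k$, so their total contribution is $O(K\ln n)$, which is also $o(\sqrt{n})$.
\end{itemize}

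\textbf{Step 4: combine.} Give each of the three pieces a share of $\Delta_n$; the constant $9$ comes from this split, following \citet{kocsis2006bandit}. Then take $n$ large enough that the $O(\ln n)$ and drift contributions fall below their shares.

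\textbf{Main obstacle.} The hard part is the allocation term. $T_k(n)$ is a random stopping-type quantity, so Hoeffding cannot be applied to $\overline{X}_{k,T_k(n)}$ directly. I would handle it as in the UCT proof: apply a maximal inequality (Azuma over all prefixes $s\le n$), and convert the expectation bound of \cref{thm:subopt-bound} into a high-probability bound on $T_k(n)$. If the Markov-type conversion is too lossy, I would instead re-run the (A)/(B)/(C) argument from the proof of \cref{thm:subopt-bound} directly on the event $\{T_k(n) > l\}$.
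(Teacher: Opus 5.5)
The paper gives no argument for this lemma. It states it and defers its correctness, together with \cref{lemma:lower-bound}, to an external reference; the form, including the constant $9$, is the scalar UCT tail inequality of Kocsis and Szepesv\'ari. Your skeleton (union bound over coordinates, then a martingale/drift/allocation split with lower-order terms absorbed into $\Delta_n$) is the right one for that scalar result. It breaks, however, exactly where the Pareto setting departs from it.

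Your claim that ``only suboptimal children contribute a mean gap'' is the scalar fact that all optimal arms share the limit $\mu^*$. With $\sum_k T_k(n)=n$, that fact reduces $\sum_k (T_k(n)-\mathbb{E}[T_k(n)])\mu_{k,d}$ to a sum over suboptimal $k$. Here $\mathcal{P^*}$ generally contains several children with pairwise incomparable limits, hence different $\mu_{k,d}$ for fixed $d$. What remains, up to terms involving only suboptimal counts, is $\sum_{k\in\mathcal{P^*}}(T_k(n)-\mathbb{E}[T_k(n)])(\mu_{k,d}-\mu_{k_0,d})$ for a reference $k_0\in\mathcal{P^*}$.

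\cref{thm:subopt-bound} says nothing about $T_k(n)$ for optimal $k$. Because the policy samples uniformly from the PUCB front, these counts fluctuate on the $\sqrt{n}$ scale. For example, with two incomparable optimal children that are both on the front at every step, $T_1(n)$ is essentially $\mathrm{Binomial}(n,1/2)$, and the term above is of order $\sqrt{n}\,|\mu_{1,d}-\mu_{2,d}|$, the same order as $\Delta_n$. So it is not $O(K\ln n)$ and cannot be absorbed by taking $n$ large. It needs its own share of $\Delta_n$, backed by an explicit sub-Gaussian bound on the optimal children's counts.

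That bound is a genuinely new ingredient. You would have to show that, outside $O(\ln n)$ rounds, the PUCB front consists of $\mathcal{P^*}$. This is not automatic, since the bonus $C\,Ora(v_k)\sqrt{\cdot}$ of an under-visited optimal child can make its PUCB dominate that of another optimal child. You would then apply Azuma to $\mathbf{1}\{I_t=k\}-\mathbb{P}(I_t=k\mid\mathcal{G}_{t-1})$, with $\mathcal{G}_{t-1}$ the history of the first $t-1$ rounds, and control the compensator. The split that produces the constant $9$ then has to be redone.

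Your other two terms depend on the same missing fact.

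\textbf{Drift term.} The bound is wrong as stated. \cref{ass:adapted-process} only gives $\delta_{k,s}\to 0$ with no rate, so $\sum_k T_k(n)\delta_{k,T_k(n),d}$ is merely $o(n)$, whereas $\Delta_n=\Theta(\sqrt{n})$. What actually enters is the centered quantity $\sum_k\big(T_k(n)\delta_{k,T_k(n),d}-\mathbb{E}[T_k(n)\delta_{k,T_k(n),d}]\big)$. It is controlled because $s\mapsto s\,\delta_{k,s,d}$ is $1$-Lipschitz: $s\,\mu_{k,s,d}=\sum_{t\le s}\mathbb{E}[X_{k,t,d}]$ with rewards in $[0,1]$. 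Hence its fluctuation is bounded by that of $T_k(n)$.

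\textbf{Martingale term.} $X_{k,t,d}-\mu_{k,t,d}$ is not a martingale difference. In the paper's notation $\mu_{k,t,d}$ is an expected running average, and \cref{ass:adapted-process} gives adaptedness and conditional independence from the future, not a zero conditional mean. Instead of Azuma, use the fixed-$s$ concentration the paper invokes via Chernoff--Hoeffding in the proof of \cref{thm:subopt-bound}. Then pass from a deterministic index to the random $T_k(n)$ using bounded rewards.

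In the scalar case both transfers are cheap. By \cref{thm:subopt-bound} and Markov, $n-T^*(n)=o(\sqrt{n})$ with probability at least $1-\delta$ for large $n$. With several optimal children, each $T_k(n)$, $k\in\mathcal{P^*}$, ranges over a window of width of order $\sqrt{n}$. Both costs are then again of the order of $\Delta_n$, and they are controlled only once the count concentration above is established.
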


After enough selections of a given node, its average reward will concentrate around its expectation.

$\text{For all } \eta > 0, \ let \  \sigma = 9\sqrt{\frac{2ln(\frac{2}{\eta})}{n}}.
\text{ Then, $ \exists N_1(\eta) \in \mathbb{N^*} \ s.t.\ \forall n \geq N_1(\eta), \forall d \in \{1,2,\dots,D\}$: } \\$
\begin{align*}
&\mathbb{P}(\overline{X}_{n,d} \geq \mathbb{E}[\overline{X}_{n,d}]+ \sigma) \leq \eta, \quad \text{(Upper bound of range)} \tag{26} \\
&\mathbb{P}(\overline{X}_{n,d} \leq \mathbb{E}[\overline{X}_{n,d}]- \sigma) \leq \eta. \quad \text{(Lower bound of range)} \tag{27}
\end{align*}

The correctness of \cref{lemma:lower-bound,lemma:Tail-Inequality} is demonstrated by \citet{kollat2008new}.\qed

\begin{theorem} (Convergence of Failure Probability)
\label{thm:converg-failureproba}

Consider the settings and policy described previously in \ref{proof:pbdef}.
Let $I_t$ be a selected child node whose parent is the root and $\mathcal{P^*}$ be the Pareto optimal node set.
Then,
\[
\mathbb{P}(I_t\notin \mathcal{P^*}) \leq Ct^{-\frac{\rho}{2}(\frac{\min_{k,d} \Delta_{k,d}}{36})^2} \text{, where $C$ is a positive constant.}
\tag{28}
\]
Notably, $\mathbb{P}(I_t\notin \mathcal{P^*}) \overset{t \infty}{\rightarrow} 0$, i.e. over time, the number of sub-optimal node selections becomes negligible.
Moreover, it converges to zero at a polynomial rate.
\end{theorem}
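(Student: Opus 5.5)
The plan follows the Kocsis--Szepesv\'ari argument for UCT, transposed to the Pareto setting as in \citet{chen2021pareto}. Fix a sub-optimal child $v_k\notin\mathcal{P}^*$ and let $v_{k^*}$ be its most dominant optimal node (Definition~\ref{def:mostdomnode}). The event $\{I_t = k\}$ requires that the PUCB vector of $v_k$ is not dominated by that of $v_{k^*}$. In particular, for at least one coordinate $d$ we must have $\overline{X}_{k,T_k(t),d} + c_{t,T_k(t)} \geq \overline{X}^*_{T_{k^*}(t),d} + c_{t,T_{k^*}(t)}$. The failure probability will therefore be bounded by a union bound over $d\in\{1,\dots,D\}$ and over the $K$ sub-optimal children, reducing everything to one-dimensional deviation events.

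\textbf{Step 1: both counts are large.} By Lemma~\ref{lemma:lower-bound}, both $T_k(t)$ and $T_{k^*}(t)$ are at least $\lceil\rho\log t\rceil$. Consequently the bias $c_{t,s}$ for $s\ge\rho\log t$ is bounded by a constant times $\sqrt{(\ln D + 4\ln(1+t))/(\rho\ln t)}$. This is not small by itself. I therefore plan to combine it with Theorem~\ref{thm:subopt-bound}. That theorem, together with Markov's inequality, shows that sub-optimal arms are pulled only $O(\log t)$ times, so the optimal arm accumulates $\Omega(t)$ pulls. Under that event the bias on the optimal side vanishes. For the sub-optimal side I will use the standard UCT trick: once $T_k(t)$ exceeds the threshold $l$ of Eq.~(20), the bias is at most $(1-\xi)\min\Delta/2$.

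\textbf{Step 2: concentration.} Apply Lemma~\ref{lemma:Tail-Inequality} with $n = \lceil\rho\log t\rceil$ and $\sigma = \min_{k,d}\Delta_{k,d}/36$, a fraction of the gap chosen so that two deviations, the drift $\delta$ (Assumption~\ref{ass:adapted-process}), and the bias all fit inside $\Delta_{k,d}$. Solving $\sigma = 9\sqrt{2\ln(2/\eta)/n}$ for $\eta$ gives
\[
\eta = 2\exp\Bigl(-\tfrac{n}{2}\bigl(\tfrac{\sigma}{9}\bigr)^2\Bigr) \le 2\, t^{-\frac{\rho}{2}\left(\frac{\min_{k,d}\Delta_{k,d}}{324}\right)^2}.
\]
The constant needs tracking here. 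I expect to place the factor $9$ from the lemma on the $\Delta$ side, which recovers the exponent $\frac{\rho}{2}(\min\Delta/36)^2$ of Eq.~(28). If the book-keeping does not match exactly, I would absorb the discrepancy into $\rho$.

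\textbf{Step 3: assembly.} Outside the deviation events, coordinate $d$ of $v_{k^*}$'s PUCB strictly exceeds that of $v_k$ for every $d$, so $v_k$ is dominated and not selected. Summing the upper- and lower-tail probabilities over the $K$ children and $D$ dimensions, and adding the Markov contribution from Step~1, gives $\mathbb{P}(I_t\notin\mathcal{P}^*)\le C t^{-\frac{\rho}{2}(\min\Delta/36)^2}$, where $C$ absorbs $2KD$ and the finitely many $t < \max(N_0(\xi),N_1(\eta))$.

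\textbf{Main obstacle.} The hardest step is Step~1: ensuring that the exploration bias, which at $s\sim\rho\log t$ is of constant order rather than vanishing, does not swamp the gap. If the Markov route turns out to give only a $\log t/t$ rate that spoils the polynomial exponent, my first fallback is to require $C$ (the exploration constant) small relative to $\min\Delta$. The bias then stays below $\min\Delta/4$ for $s\ge\rho\log t$, and only the concentration terms remain.
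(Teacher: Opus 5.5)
\textbf{Verdict: there is a genuine gap.} It sits exactly at the obstacle you flagged, and it comes from how you read $I_t$. You take $I_t$ to be the child drawn by the PUCB/Pareto selection rule, so your argument has to beat the exploration term. On the sub-optimal side, though, that term is not noise to be controlled: it is the mechanism by which $v_k$ gets selected.

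Your Step~3 claim is that, outside the deviation events, the PUCB vector of $v_{k^*}$ dominates that of $v_k$. This needs $c_{t,T_k(t)}$ to be small, i.e.\ $T_k(t)\geq l$ with $l$ from Eq.~(20). But \cref{lemma:lower-bound} only gives $T_k(t)\geq\rho\log t$. The $l$-split in \cref{thm:subopt-bound} bounds a sum over $t$; it says nothing about the event at a fixed $t$.

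Under your reading the statement is false, so no proof along these lines can work. Take $D=1$, $K=2$, deterministic rewards with gap $\Delta$, and $Ora(v_k)>0$.
\begin{itemize}
\item The local Pareto front is then the argmax, so the rule is deterministic UCB.
\item The sub-optimal child is re-selected each time its bias, which grows with $t$ at fixed count, exceeds roughly $\Delta$. Its count therefore stays near $4C^2Ora(v_k)^2\ln t/\Delta^2$.
\item This happens at infinitely many $t$, and at those $t$ the selection is sub-optimal with probability one. Hence $\mathbb{P}(I_t\notin\mathcal{P}^*)$ does not tend to $0$.
\end{itemize}
The same example defeats your fallback. Any valid $\rho$ is at most about $4C^2Ora(v_k)^2/\Delta^2$, so the bias at count $\rho\log t$ is at least about $\Delta$, never below $\Delta/4$, however small you make $C$.

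Step~1 also has a Pareto-specific hole. \cref{thm:subopt-bound} plus Markov gives $\Omega(t)$ pulls to the optimal set collectively, not to the particular $v_{k^*}$. That node can be starved by optimal nodes that are incomparable with $v_k$.

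\textbf{What the paper does instead.} Following the failure-probability theorem of \citet{kocsis2006bandit}, the paper takes $I_t$ to be the node the empirical means would pick. It bounds $\mathbb{P}(I_t\notin\mathcal{P}^*)$ by $\sum_{v_k\notin\mathcal{P}^*}\mathbb{P}(\overline{X}_{k,T_k(t)}\not\prec\overline{X}^*_{T^*(t)})$, with no exploration term anywhere. The remaining steps are:
\begin{itemize}
\item split at the midpoint between $\mu_k$ and $\mu^*$, giving one deviation event for $v_k$ and one for the optimal node;
\item union-bound over the $D$ coordinates;
\item let the drift absorb $\Delta_{k,d}/4$ for large $t$ (\cref{ass:adapted-process});
\item apply \cref{lemma:Tail-Inequality} to the remaining deviation $\Delta_{k,d}/4$, with $n=T_k(t)\geq\rho\log t$ from \cref{lemma:lower-bound}.
\end{itemize}
Because $36=4\times 9$, solving $\min\Delta/4=9\sqrt{2\ln(2/\eta)/n}$ gives exactly the exponent $\frac{\rho}{2}(\min_{k,d}\Delta_{k,d}/36)^2$.

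Your Step~2 algebra is correct. The $324$ comes from your choice $\sigma=\min\Delta/36$, made to leave room for the bias. ``Absorbing the discrepancy into $\rho$'' is not allowed: $\rho$ is the constant fixed by \cref{lemma:lower-bound}, and shrinking it proves a weaker claim. Drop the bias and take $\sigma=\min\Delta/4$; your Steps~2--3 then become the paper's argument.
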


\begin{proof} (To \cref{thm:converg-failureproba})
\label{proof:thm2}

Let $k$ be the index of a sub-optimal node. 
We begin by decomposing the probability of selecting a non-optimal node:

\begin{align*}
\label{proof2:tag27}
\mathbb{P}(I_t \notin P^*) \leq \sum_{v_k \notin P^*} \mathbb{P}(\bar{X}_{k,T_k(t)} \not\prec \bar{X}^*_{T^*(t)}). \tag{29}
\end{align*}

This decomposition is valid because $I_t \notin P^*$ implies that we selected some sub-optimal node $v_k$, which occurs only if its empirical mean reward $\bar{X}_{k,T_k(t)}$ is not dominated by the empirical mean reward of at least one optimal node $\bar{X}^*_{T^*(t)}$.

\cref{proof2:tag27} implies at least one of the following must hold:
\[
\label{proof2:tag28}
\bar{X}_{k,T_k(t)} \not\prec \mu_k + \frac{\Delta_k}{2}, \tag{30}
\]
or
\[
\label{proof2:tag29}
\bar{X}^*_{T^*(t)} \not\succ \mu^* + \frac{\Delta_k}{2}. \tag{31}
\]

We prove this by contradiction. 
Suppose both \cref{proof2:tag28,proof2:tag29} are false. 
Then:
\begin{align*}
\bar{X}_{k,T_k(t)} &\prec \mu_k + \frac{\Delta_k}{2} = \mu_k + \frac{\mu^* - \mu_k}{2} = \frac{\mu_k + \mu^*}{2}, \tag{32}\\
\bar{X}^*_{T^*(t)} &\succ \mu^* + \frac{\Delta_k}{2}. \tag{33}
\end{align*}

Since $\mu^* \succ \mu_k$ (by definition of optimal vs. sub-optimal nodes), and $\bar{X}_{k,T_k(t)} \prec \mu_k + \frac{\Delta_k}{2}$, we would have $\bar{X}^*_{T^*(t)} \succ \mu_k + \frac{\Delta_k}{2} \succ \bar{X}_{k,T_k(t)}$. 
But this contradicts our original assumption that $\bar{X}_{k,T_k(t)} \not\prec \bar{X}^*_{T^*(t)}$. 

Therefore, at least one of must hold true. 

This allows us to bound the probability by injecting (\ref{proof2:tag28}) and (\ref{proof2:tag29}) into (\ref{proof2:tag27}):

\begin{align*}
\mathbb{P}(\bar{X}_{k,T_k(t)} \not\prec \bar{X}^*_{T^*(t)}) \leq \underbrace{\mathbb{P}(\bar{X}_{k,T_k(t)} \not\prec \mu_k + \frac{\Delta_k}{2})}_{\text{first term}} + \underbrace{\mathbb{P}(\bar{X}^*_{T^*(t)} \not\prec \mu^* + \frac{\Delta_k}{2})}_{\text{second term}}.
\tag{34}
\end{align*}

For simplicity, we only show how to bound the first term in detail:

\begin{align*}
\mathbb{P}(\bar{X}_{k,T_k(t)} \not\prec \mu_k + \frac{\Delta_k}{2}) &\leq \sum_{d=1}^{D} \mathbb{P}(\bar{X}_{k,T_k(t),d} > \mu_{k,d} + \frac{\Delta_{k,d}}{2}). \quad \text{(Union Bound)}
\tag{35}
\end{align*}

Here, for $\bar{X}_{k,T_k(t)} \not\prec \mu_k + \frac{\Delta_k}{2}$ to hold, there must be at least one dimension $d$ where $\bar{X}_{k,T_k(t),d} > \mu_{k,d} + \frac{\Delta_{k,d}}{2}$.

By definition of the drift term $\delta_{k,T_k(t),d} = \mu_{k,T_k(t),d} - \mu_{k,d}$, which represents the difference between current expected value and limiting value:

\begin{align*}
\mathbb{P}(\bar{X}_{k,T_k(t)} \not\prec \mu_k + \frac{\Delta_k}{2})
&\leq \sum_{d=1}^{D} \mathbb{P}(\bar{X}_{k,T_k(t),d} \geq \mu_{k,T_k(t),d} - \underbrace{|\delta_{k,T_k(t),d}|}_{\text{converges to 0}} + \frac{\Delta_{k,d}}{2}).
\tag{36}
\end{align*}

$|\delta_{k,T_k(t),d}|$ converges to 0 as $t$ increases. 
Eventually, for sufficiently large $t$, we have $|\delta_{k,T_k(t),d}| \leq \frac{\Delta_{k,d}}{4}$, which gives:

\begin{align*}
\mathbb{P}(\bar{X}_{k,T_k(t)} \not\prec \mu_k + \frac{\Delta_k}{2})
&\leq \sum_{d=1}^{D} \mathbb{P}(\bar{X}_{k,T_k(t),d} \geq \mu_{k,T_k(t),d} + \frac{\Delta_{k,d}}{4}).
\tag{37}
\end{align*}

Now we apply \cref{lemma:Tail-Inequality}, which provides a critical tail inequality for bounding the probability of large deviations between empirical means and their expectations.
Recall that \cref{lemma:Tail-Inequality} states:

For arbitrary $\eta > 0$ and $\sigma = \frac{9\sqrt{2\ln(2/\eta)}}{\sqrt{n}}$, there exists $N_1(\eta)$ such that $\forall n \geq N_1(\eta)$, $\forall d \in \{1, \ldots, D\}$:
\begin{align*}
\mathbb{P}(\bar{X}_{n,d} \geq \mathbb{E}[\bar{X}_{n,d}] + \sigma) &\leq \eta 
\tag{38}
\\
\mathbb{P}(\bar{X}_{n,d} \leq \mathbb{E}[\bar{X}_{n,d}] - \sigma) &\leq \eta
\tag{39}
\end{align*}

We set $\eta = \left(\frac{1}{t}\right)^{\frac{\rho}{2}(\frac{\min_{k,d}\Delta_{k,d}}{36})^2}$, where $\rho$ comes from \cref{lemma:lower-bound}. 
This gives us $\sigma = \frac{9\sqrt{2\ln(2t^{\frac{\rho}{2}(\frac{\min_{k,d}\Delta_{k,d}}{36})^2})}}{\sqrt{T_k(t)}}$.

By \cref{lemma:lower-bound}, we know that $T_k(t) \geq \rho\log(t)$ for all nodes. 
Therefore:

\begin{align*}
\sigma &= \frac{9\sqrt{2\ln(2) + 2\cdot\frac{\rho}{2}(\frac{\min_{k,d}\Delta_{k,d}}{36})^2\ln(t)}}{\sqrt{T_k(t)}} \\
&\leq \frac{9\sqrt{2\ln(2) + \rho(\frac{\min_{k,d}\Delta_{k,d}}{36})^2\ln(t)}}{\sqrt{\rho\log(t)}} \\
\sigma&\leq 9\sqrt{\frac{2\ln(2)}{\rho\log(t)} + \frac{(\frac{\min_{k,d}\Delta_{k,d}}{36})^2}{\log(t)/\ln(t)}}.
\tag{40}
\end{align*}

For sufficiently large $t$, the first term becomes negligible, and after appropriate simplification:
\begin{align*}
\sigma &\approx \frac{\min_{k,d}\Delta_{k,d}}{4}.
\tag{41}
\end{align*}

This is precisely what we need to bound our probability term. Applying \cref{lemma:Tail-Inequality} with our chosen $\eta$:

\begin{align*}
\mathbb{P}(\bar{X}_{k,T_k(t),d} \geq \mu_{k,T_k(t),d} + \frac{\Delta_{k,d}}{4}) &\leq \eta = \left(\frac{1}{t}\right)^{\frac{\rho}{2}\left(\frac{\min_{k,d}\Delta_{k,d}}{36}\right)^2}
\end{align*}

Summing across all dimensions provides the bound:

\begin{align*}
\mathbb{P}(\bar{X}_{k,T_k(t)} \not\prec \mu_k + \frac{\Delta_k}{2})
&\leq \sum_{d=1}^{D} \text{constant}\times\left(\frac{1}{t}\right)^{\frac{\rho}{2}\left(\frac{\min_{k,d}\Delta_{k,d}}{36}\right)^2}. \quad \text{(\cref{lemma:Tail-Inequality})}
\tag{42}
\end{align*}

The second term $\mathbb{P}(\bar{X}^*_{T^*(t)} \not\prec \mu^* + \frac{\Delta_k}{2})$ can be bounded through an analogous process. 
Combining these bounds and summing over all sub-optimal nodes yields:
\begin{align*}
\mathbb{P}(I_t \notin P^*) &\leq \sum_{v_k \notin P^*} \left[ \sum_{d=1}^{D} \text{constant}\times\left(\frac{1}{t}\right)^{\frac{\rho}{2}\left(\frac{\min_{k,d}\Delta_{k,d}}{36}\right)^2} \right] \\
\mathbb{P}(I_t \notin P^*) &\leq C \times t^{-\frac{\rho}{2}\left(\frac{\min_{k,d}\Delta_{k,d}}{36}\right)^2}.
\tag{43}
\end{align*}
where $C$ is a positive constant. This demonstrates that the failure probability converges to zero at a polynomial rate, with the convergence speed determined by:
\begin{itemize}
\item $\rho$: how quickly samples accumulate for each node
\item $\min_{k,d}\Delta_{k,d}$: the minimum gap between sub-optimal and optimal nodes
\end{itemize}

By trivial limit convergence of positive functions, $\lim_{t\to\infty} \mathbb{P}(I_t \notin P^*) = 0$, which completes the proof.

\end{proof}

\paragraph{Interpretation}
The theorem implies the guaranteed convergence of node selection towards Pareto optimality at a polynomial rate.
In our work, this property is relevant as nodes are a single building block (to be combined), leading to a product: one traversal of the synthesis tree only comprises a single reaction step.

This design therefore allows a relatively fast convergence towards Pareto optimal products, contrarily to other Pareto MCTS methods.
\cref{app:mothramolsearch,app:runtime} compare CombiMOTS to example Pareto MCTS methods (Molsearch and Mothra) in both generation quality and efficiency.
For these methods, the slow Pareto convergence is mainly due to their tree structure: a node is represented as an intermediate molecule (single token addition for Mothra, augmentation rules for MolSearch).
Therefore, suboptimal nodes are translated as enormous amounts of possible molecules.

\newpage
\section{Data Statistics}
\label{datastat}
In all tables below, \cmark\ indicates that the molecule is active for the target, \xmark\ indicates inactivity, and NaN signifies that the activity was not measured for the target.
The table presents the number of molecules in each activity category combination.

\begin{table}[h]
\centering
\caption{Distribution of molecules based on their activity against GSK3$\beta$ and JNK3 targets.}
\label{tab:dataset_GSK3B}
\begin{tabular}{cc|r}
\hline
GSK3$\beta$ & JNK3 & \multicolumn{1}{c}{$\#$ of molecules} \\ \hline
\cmark & \cmark & 190 \\
\cmark & \xmark & 70 \\
\cmark & NaN & 3,013 \\
\xmark & \cmark & 9 \\
NaN & \cmark & 707 \\
\xmark & \xmark & 7638 \\
\xmark & NaN & 41,786 \\
NaN & \xmark & 41,746 \\ \hline
\end{tabular}
\end{table}
\begin{table}[h]
\centering
\caption{Distribution of molecules based on their activity against EGFR and MET targets.}
\label{tab:dataset_EGFR}
\begin{tabular}{cc|r}
\hline
EGFR & MET & \multicolumn{1}{c}{$\#$ of molecules} \\ \hline
\cmark & \cmark & 651 \\
\cmark & \xmark & 8 \\
\cmark & NaN & 3,600 \\
\xmark & \cmark & 110 \\
NaN & \cmark & 1,958 \\
\xmark & \xmark & 13 \\
\xmark & NaN & 720 \\
NaN & \xmark & 116 \\ \hline
\end{tabular}
\end{table}
\begin{table}[h]
\centering
\caption{Distribution of molecules based on their activity against PIK3CA and mTOR targets.}
\label{tab:dataset_PIK3CA}
\begin{tabular}{cc|r}
\hline
PIK3CA & mTOR & \multicolumn{1}{c}{$\#$ of molecules} \\ \hline
\cmark & \cmark & 766 \\
\cmark & \xmark & 32 \\
\cmark & NaN & 1,368 \\
\xmark & \cmark & 21 \\
NaN & \cmark & 2,012 \\
\xmark & \xmark & 7 \\
\xmark & NaN & 170 \\
NaN & \xmark & 43,048 \\ \hline
\end{tabular}
\end{table}
\begin{table}[h]
\centering
\caption{Distribution of molecules based on their activity against DHODH and ROR$\gamma$t targets.}
\label{tab:dataset_DHODH}
\begin{tabular}{cc|r}
\hline
DHODH & ROR$\gamma$t & \multicolumn{1}{c}{$\#$ of molecules} \\ \hline
\cmark & NaN & 1,168 \\
NaN & \cmark & 5,868 \\ \hline
\end{tabular}
\end{table}

\end{document}